\documentclass[11pt]{article}

\usepackage[default]{sourcesanspro}

\usepackage{amsmath}
\usepackage{bbm}
\usepackage{newtxmath}      

\usepackage{xcolor}

\usepackage{microtype}
\usepackage[top=1in, bottom=1.25in, left=0.9in, right=0.9in]{geometry}

\usepackage{amsthm}

\usepackage{pdflscape,graphicx}
\graphicspath{{fig/}}

\usepackage{calrsfs}

\usepackage{algorithm}
\usepackage{algorithmicx}
\usepackage{algpseudocode}

\usepackage{booktabs}
\usepackage{changepage}
\usepackage{enumitem}

\graphicspath{{fig/}}

\usepackage{etoolbox}
\let\bbordermatrix\bordermatrix
\patchcmd{\bbordermatrix}{8.75}{4.75}{}{}
\patchcmd{\bbordermatrix}{\left(}{\left[}{}{}
\patchcmd{\bbordermatrix}{\right)}{\right]}{}{}

\theoremstyle{plain}
\newtheorem{theorem}{Theorem}[section]
\newtheorem{conjecture}[theorem]{Conjecture}
\newtheorem{lemma}[theorem]{Lemma}
\newtheorem{proposition}[theorem]{Proposition}
\newtheorem{corollary}[theorem]{Corollary}
\newtheorem{assumption}[theorem]{Assumption}

\newtheorem{remark}[theorem]{Remark}

\newcommand{\Prob}{\mathbb{P}}
\newcommand{\Ex}{\mathbb{E}}

\newcommand{\1}{\mathbbm{1}}

\newcommand{\orbit}{\mathrm{orbit}}

\usepackage{appendix}

\usepackage[sort&compress]{natbib}
\setcitestyle{numbers,square}

\usepackage[colorlinks=true,breaklinks=true,bookmarks=true,urlcolor=blue,
     citecolor=blue,linkcolor=blue,bookmarksopen=false,draft=false]{hyperref}

\def\EMAIL#1{\href{mailto:#1}{#1}}
\def\URL#1{\href{#1}{#1}}

\newcommand{\Rbar}{\overline{\mathbb{R}}}
\newcommand{\R}{\mathbb{R}}

\begin{document}

\title{Restless bandits with imperfect binary feedback: PCL-indexability analysis and computation}

\author{Jos\'e Ni\~no-Mora\thanks{\small The author's work was supported in part by Universidad Carlos III de Madrid (UC3M) through an internal research program grant and a grant for the acquisition of research tools.}
\\ Departamento de Estad\'{\i}stica \\
    Universidad Carlos III de Madrid \\
     28903 Getafe (Madrid), Spain \\  \EMAIL{jose.nino@uc3m.es}  \\  
      \URL{https://alum.mit.edu/www/jnimora} \\
      \URL{http://orcid.org/0000-0002-2172-3983}}
\date{Submitted 27/3/2026}

\maketitle

\begin{abstract}
We study restless bandits with binary latent states and imperfect binary feedback, motivated by opportunistic spectrum access with sensing errors. For the associated belief-state model, we develop a partial conservation laws (PCL)-based analytical and computational framework for establishing indexability and evaluating the Whittle index, building on a verification theorem for real-state discounted restless bandits. The framework analyzes the stochastic dynamics via an associated deterministic skeleton, renewal decompositions, and combinatorics on words. It yields tractable expressions for discounted reward and resource metrics in several threshold regimes, enabling full verification of the PCL-indexability conditions there. For the remaining regime, where a complete analytic verification is not achieved in this paper, we derive efficient numerical schemes for computing the relevant marginal metrics and the marginal productivity (MP) index, which equals the Whittle index when those conditions hold. Extensive computational experiments provide strong evidence that these conditions also hold in that regime across broad parameter ranges and without the stringent parameter restrictions imposed in prior work. The experiments further show that theMP index policy typically outperforms standard benchmark policies, often by a substantial margin.
\\[1ex]
\textbf{Keywords:} restless multi-armed bandits; partial observability; sensing errors; Whittle index; indexability; conservation laws; opportunistic spectrum access\end{abstract}

\section{Introduction}
\label{s:intro}

\subsection{Background and motivation}
\label{s:motiv}

Many sequential resource-allocation problems involve a collection of stochastically evolving entities, here called \emph{projects}, whose latent states are only partially observed. In each discrete-time period, the controller may activate only a limited number of projects, thereby obtaining information and potentially earning reward. We consider a practically relevant setting in which each project's latent state is binary (bad/good), and activation yields imperfect, \emph{one-sided} binary ACK/NACK feedback: an ACK certifies a good state and yields immediate reward, whereas a NACK (no observed success) yields no reward and is inherently ambiguous, since it may occur even when the latent state is good. Because projects continue to evolve when not selected, the controller must balance immediate exploitation against information gathering and future opportunities. This exploration--exploitation tradeoff leads naturally to a \emph{restless multi-armed bandit problem} (RMABP), a widely studied modeling framework introduced by \citet{whit88}; see, e.g., \cite{liuZhao10,mehtaetal18,meshrametal18,kazaetal19,liuetal24,akbarMahaj24} for representative work on partially observed restless bandits, and \cite{nmmath23} for a broader review.

We study a partially observed RMABP in which \(N\) projects, labeled by \(n\in\mathcal N\triangleq\{1,\ldots,N\}\), have latent binary states that evolve exogenously as Markov chains, independently of the controller's actions. In each period, the controller chooses which projects to activate, subject to an activation-capacity constraint; nonselected projects are passive. Passive actions yield neither feedback nor reward, and the corresponding belief states are updated predictively using the latent-state dynamics. By contrast, when project \(n\) is activated, it generates an ACK with probability \(0<\kappa_n<1\) conditional on the latent state being good, and the ACK yields reward \(r_n>0\). Upon observing ACK/NACK, the controller updates its posterior belief via Bayes' rule. Let \(X_n(t)\in[0,1]\) denote the posterior probability that project \(n\) is in the good state at the start of period \(t\). The vector \(\mathbf X(t)=(X_n(t))_{n\in\mathcal N}\) is a sufficient statistic, so the overall control problem becomes a high-dimensional \emph{Markov decision process} (MDP) on the belief-state space, induced by partial observability \citep{krishn25}.

Because belief-state RMABPs are generally intractable to solve exactly, the main goal of this paper is to exploit the special structure of the model---action-independent latent-state transitions, one-sided ACK/NACK feedback, and ACK-dependent rewards---to apply the \emph{partial conservation laws} (PCL)-based verification theorem of \citet{nmmor20} to this setting. That theorem provides sufficient conditions for indexability and Whittle index evaluation in discounted real-state restless bandits. In several threshold regimes, we obtain full analytical verification of the required \emph{PCL-indexability conditions}. For the remaining regime, in which a complete analysis remains elusive, we derive efficient computational schemes and use them to provide broad numerical evidence that those conditions also hold. We further compute the associated \emph{marginal productivity} (MP) index, which equals the Whittle index under the PCL conditions.

This problem structure is motivated by several modern engineering applications. \emph{Opportunistic spectrum access} (OSA) in cognitive radio networks is a natural example. Each project corresponds to a primary-user (PU) channel whose availability evolves as a two-state Markov chain, and the controller senses up to \(M\) channels in each period. Under sensing errors and an exogenous collision-tolerance constraint, secondary users (SUs) use the sensing outcomes to decide whether to transmit. By the separation principle of \citet{chenetal08}, the access decision is myopically optimal conditional on the sensing outcome, so the effect of sensing errors on each channel can be summarized in our reduced model by a single parameter \(\kappa_n\), the maximum feasible transmission probability when channel \(n\) is available. This sensing-access mechanism induces the one-sided feedback structure in our model: an ACK certifies that the channel is available, whereas \(\kappa_n\) determines the effective ACK-given-available probability. See Section~\ref{s:om} for details.

Related models also arise in link- or server-probing problems, where a controller sequentially tests or selects time-varying resources whose availability evolves exogenously and earns reward only upon successful transmission or service; see, e.g., \cite{liuZhao10,murugesanetal12}. Another example is remote diagnostics in distributed devices, where successful heartbeat checks confirm liveness whereas failed checks remain ambiguous yet informative; see, e.g., \cite{chandraToueg96,aguileraetal99}.

\subsection{Related work on partially observed restless bandits, Whittle indices, and OSA}
\label{s:pw}

Related work most relevant to this paper falls into three strands: (i) \emph{partially observed MDP} (POMDP) formulations and structural results for OSA with binary Markov channels; (ii) Whittle-index approaches for sensing and scheduling problems with continuous belief states; and (iii) results on threshold optimality and indexability under imperfect binary feedback, including work closest to the one-sided ACK/NACK setting studied here.

\subsubsection{OSA as a belief-state POMDP}
\label{s:osabsp}
Control of partially observed binary Markov systems is naturally formulated as a belief-state MDP, i.e., a POMDP whose sufficient statistic is the posterior probability of the good state. In the OSA literature, early work studied optimal transmission on a single channel---often under the Gilbert--Elliott model---and established structural properties such as optimality of threshold policies \citep{johnkrish06}. Subsequent work considered sensing errors and times, and delay or energy costs through constrained POMDP formulations and heuristic approaches \citep{hoangetal09}, while multichannel OSA with sensing errors and collision constraints was formulated as a POMDP in \citep{zhaoetal07}. A key later development is the \emph{separation principle} of \citet{chenetal08}, which decouples sensing and access decisions in a broad class of imperfect-sensing OSA models.

\subsubsection{From POMDPs to RMABPs and Whittle indices}
\label{s:fprwi}

Because such POMDPs are generally intractable to solve exactly, a substantial literature exploits their structure as RMABPs with continuous belief states. In the classic \emph{rested} case, the Gittins index yields an optimal index policy \citep{gi79}, but OSA and many monitoring and scheduling problems are \emph{restless}. This motivates using the \emph{Whittle index} \citep{whit88}, which provides a scalable heuristic policy and is often accompanied by computable performance bounds. Within OSA, \citet{ahmadetal09} established conditions under which \emph{myopic sensing} is optimal, while \citet{liuZhao10} established \emph{indexability} and derived closed-form Whittle-index expressions in the perfect-sensing case. Related imperfect-detection models were analyzed in \citep{liuZhaoKrishna10}, although indexability was not addressed there.

More broadly, restless bandits with imperfect, limited, or constrained observations have been studied in frameworks encompassing OSA-type models, including hidden-Markov and constrained-feedback formulations \citep{mehtaetal18,meshrametal18,kazaetal19}, as well as approaches aimed at low-complexity Whittle-index computation under imperfect feedback \citep{liuetal24}. Recent work has also considered data-driven and risk-aware RMAB extensions \citep{Mate2020NeurIPS,Mate2021RiskAwareRMAB}; see also \citep{aaltoetal19,akbarMahaj24}.

\subsubsection{Closest work under imperfect binary feedback}
\label{s:cwuibf}

The work most closely related to our setting addresses threshold optimality and/or Whittle indexability for belief-state RMABPs with imperfect binary feedback, typically under parameter restrictions. \citet{meshrametal18} study a broader model allowing action-dependent transitions and introduce \emph{approximate indexability}. Their threshold-optimality and indexability results hold only under stringent parameter restrictions; when specialized to our setting, these reduce to low autocorrelation \((0\le \rho \le 1/5)\) and strong discounting \((\beta<1/3)\).

\citet{wangetal18} study the OSA special case of our model and propose a fixed-point/region-partition analysis of the induced nonlinear belief dynamics, together with a piecewise-linearization argument, to support indexability and closed-form index expressions. Their treatment highlights the role of piecewise threshold dynamics, but several continuity and monotonicity issues across regime boundaries are handled only implicitly. The follow-up work \citet[Ch.\ 3]{wangChen21} provides additional detail. Their analysis focuses on selected iterates of the deterministic belief-update maps \(\phi^0\) and \(\phi^1\) (see Section~\ref{s:phi_props_main}). By contrast, threshold policies in our setting generate a broader family of mixed compositions of these maps, together with ACK-induced resets in the one-sided model, and this richer structure drives the piecewise and discontinuous behavior of the resulting metrics. Their sufficient sensing-error condition, in terms of the false-alarm probability \(\epsilon\) (where \(\kappa=1-\epsilon\)), specializes in our notation to
\[
\epsilon \le \frac{p_{01} p_{10}}{(1-p_{01}) (1-p_{10})}.
\]
This can be quite restrictive when \(\rho>0\), since the right-hand side may be arbitrarily small.

\citet{kazaetal19} study a hidden-state RMABP with ACK/NACK feedback and sparse observations. They establish threshold optimality for single-project subproblems under strong conditions (Theorem~1), requiring low autocorrelation \((0<\rho<b/5)\) and strong discounting \((\beta<b/5)\), where \(b=\min\{1,r\}\) in our model. Their indexability claim (Theorem~2) is stated under the condition \(\beta<1/3\), but the preceding development still relies on threshold optimality.

\citet{liuetal24} also consider the OSA case of our model, assuming positive channel autocorrelation. They give a sufficient condition for threshold optimality,
\[
\beta \le \frac{1}{(3-\epsilon)\rho}=\frac{1}{(2+\kappa)\rho},
\]
derive sufficient indexability conditions, and show that the model is indexable for \(\beta\le 1/2\). They also propose a low-complexity algorithm for \emph{approximate} Whittle-index computation under the condition
\[
\beta \le \min\Bigl\{\frac{1}{(2+\kappa)\rho},\,\frac12\Bigr\},
\]
which ensures both threshold optimality and indexability. This restriction can still be stringent, since \(1/((2+\kappa)\rho)\) can be close to \(1/3\) as \(\kappa\nearrow 1\) and \(\rho\nearrow 1\).

Finally, PCL methods have previously been applied to OSA-type models with binary success feedback. The perfect-feedback case \(\kappa=1\) was treated via PCLs in \citep{nmngi08} and later given a complete analysis in \citep[Section~12.2]{nmmor20}. The imperfect-feedback case \(0<\kappa<1\) was outlined in \citep{nmnetcoop09}, including a regime-wise decomposition of threshold dynamics, but without proofs of the PCL verification steps. The present paper builds on that line of work, develops the missing computational machinery, and uses it as a basis for a broad study of indexability and of the performance of the resulting MP index policy.

Beyond the OSA literature, \citet{danceSi19} gave the first proof of both Whittle indexability and threshold-policy optimality for Kalman-filter restless bandits, using the real-state PCL verification theorem of \citep{nmmor20}. As further context, see \citep{nmparxiv26} for a PCL-based complete indexability analysis of a belief-state adherence RMAB model.

The present paper extends this PCL program for real-state bandits to a one-sided imperfect-observation RMAB setting. Existing analytical results establish indexability only under fairly restrictive parameter conditions, which could suggest that such restrictions are intrinsic to the model. By contrast, our numerical results provide strong evidence that indexability persists over a much broader parameter range than is currently supported analytically.

\subsection{Goals, approach, and contributions}
\label{s:goals_appr}

Our main goal is to develop an analytical and computational framework for studying indexability and evaluating the MP index in belief-state restless bandits with one-sided ACK/NACK feedback and ACK-dependent rewards. Relative to much of the OSA literature reviewed above, our work differs both in emphasis and in methodology. Although we also exploit fixed points and regime-wise decompositions of the belief dynamics, our analysis is organized around the PCL-based verification theorem of \citet{nmmor20}, which reduces Whittle indexability and threshold-policy optimality to the verification of three conditions for certain project performance metrics under threshold policies.

A common route in the RMAB literature is first to establish optimality of threshold policies for single-project Lagrangian subproblems, and then to deduce indexability and evaluate the Whittle index from the associated optimal-threshold map and its monotonicity. By contrast, the PCL framework in \citep{nmmor20} yields the MP index directly as a ratio of marginal reward and marginal work metrics under threshold policies, thereby bypassing the need to invert a threshold map. In the present model, full analytical verification of \textup{(PCLI1--PCLI3)} remains elusive over part of the threshold range, because thresholding and ACK-induced resets generate intricate belief dynamics. Nevertheless, we show that the PCL route leads to a tractable and robust computational methodology: substantial parts of \textup{(PCLI1--PCLI3)} can be verified analytically, while the remaining parts can be investigated systematically by numerical means, yielding strong evidence in support of indexability over a broader parameter range than is currently supported analytically.

A terminological distinction is worth making explicit. The \emph{Whittle index} is defined implicitly, as the subsidy for passivity that makes active and passive actions equally desirable in the corresponding single-project Lagrangian problem. By contrast, the \emph{MP index} is defined explicitly as a ratio of marginal reward and marginal work metrics under threshold policies. Under \textup{(PCLI1--PCLI3)} over the full threshold range, the verification theorem in \citep{nmmor20} implies that the MP index coincides with the Whittle index. Since full analytical verification of \textup{(PCLI1--PCLI3)} remains open in some regimes, our numerical computations are formally for the MP index. Thus, throughout the paper we refer to the computed ratio as the MP index, and to the induced policy as the MP index policy; if the PCL conditions were verified over the full threshold range, this would ensure that the MP index is indeed the Whittle index.

This PCL program was initiated for the present model in our conference paper \citep{nmnetcoop09}. The present paper develops that line of work into a coherent analytical and computational framework. Our main contributions are as follows:
\begin{enumerate}[label=(C\arabic*),leftmargin=*]
\item We develop a regime-wise decomposition of the threshold-induced belief dynamics and derive closed-form expressions and renewal-type identities for the key reward and work metrics, tailored to the one-sided success-feedback structure. This yields explicit formulas in tractable regimes and efficient evaluation schemes elsewhere.

\item We obtain a direct PCL-based procedure for evaluating the MP index, and identify threshold regimes in which the sufficient conditions \textup{(PCLI1--PCLI3)} can be fully verified analytically.

\item We provide broad numerical evidence for the PCL conditions, and hence for indexability, well beyond the stringent discount-factor and autocorrelation restrictions imposed in prior work:  we carry out numerical tests for detecting violations of the PCL conditions, and do not observe any on the parameter grids explored.

\item We carry out extensive computational experiments showing that the resulting MP index policy is readily computable and typically outperforms the benchmark policies considered, often by a substantial margin. At the same time, the observed gaps to a Lagrangian dual upper bound can remain substantial, and experiments with two-type instances do not provide numerical evidence of asymptotic optimality.
\end{enumerate}

Finally, to help clarify a possible route to a complete indexability proof in the remaining threshold regime, we connect the threshold-induced belief dynamics with tools from \emph{combinatorics on words} (see, e.g., \citep{bersteletal09}), in particular Christoffel--Sturmian structure. In the spirit of \citet{danceSi19}, this points to a structural program for analyzing the ordering of action  itineraries induced by threshold policies. In the present paper, however, the actual computation of the reward and work metrics relies primarily on renewal-type decompositions rather than on a full characterization of that word structure. Further details on this symbolic-dynamics viewpoint are collected in Appendix~\ref{app:word_structure}.

\subsection{Organization of the paper}
\label{s:org}

The remainder of the paper is organized as follows. Section~\ref{s:osamprf} introduces the binary-success-feedback restless bandit model and shows how OSA with sensing errors and collision constraints fits within that framework. Section~\ref{s:rbiosa} reviews restless-bandit indexation via Lagrangian relaxation and presents the PCL-indexability framework and verification theorem underlying our approach. Section~\ref{s:compFG} develops the computational machinery for evaluating threshold-policy performance metrics, based on no-ACK skeleton dynamics and renewal decompositions, and identifies tractable threshold regimes together with explicit formulas for the associated marginal metrics and MP index. Section~\ref{s:experiments} reports the computational experiments, including broad numerical evidence on the PCL-indexability conditions and benchmarking of the resulting MP index policy against standard baseline policies and a Lagrangian upper bound. Section~\ref{s:conclusions} concludes and discusses directions for future work.

Appendix~\ref{app:compFG_proofs} collects supplementary proofs for the computational machinery developed in Section~\ref{s:compFG}. Appendix~\ref{app:illustrative_instance} presents an illustrative parameter instance together with diagnostic plots of the key metrics and the MP index. Appendix~\ref{app:word_structure} records the symbolic organization of intermediate-regime threshold itineraries via Christoffel and Sturmian words. Appendix~\ref{app:avgcrit_outline} briefly outlines the corresponding time-average criterion and its connection with the discounted analysis. Appendix~\ref{app:extra_experiments} collects supplementary parameter-dependence plots and implementation details.

\section{Binary-success-feedback restless bandit model and an OSA instantiation}
\label{s:osamprf}

\subsection{A generic binary-success-feedback restless bandit model}
\label{s:genmodel}

We consider a collection of \(N\) projects, labeled by
\(n\in\mathcal N\triangleq\{1,\ldots,N\}\), that evolve over discrete periods \(t=0,1,\ldots\).
Project \(n\) has a latent binary state \(S_n(t)\in\{0,1\}\), where
\(1\) represents a \emph{good} state and \(0\) a \emph{bad} state.
The latent state evolves as a two-state Markov chain with transition probabilities
\[
p_{n,ij}=\Prob\{S_n(t+1)=j\mid S_n(t)=i\}>0,\qquad i,j\in\{0,1\}.
\]
We restrict attention to the positively correlated case
\[
\rho_n\triangleq 1-p_{n,10}-p_{n,01}=p_{n,11}-p_{n,01}>0.
\]
This is natural in settings where latent conditions are persistent: a channel, server, or device that is currently good (respectively bad) is typically more likely to remain so in the next period than to switch immediately. In communication settings, this is the standard persistence captured by Gilbert--Elliott and related finite-state Markov channel models for correlated fading and bursty error processes \citep{johnkrish06,liuZhao10}. The latent-state processes are independent across projects.

At the start of each period \(t\), the controller selects an action
\(A_n(t)\in\{0,1\}\) for each project \(n\), where \(A_n(t)=1\) (\emph{active}) means that the project is selected or probed, and \(A_n(t)=0\) (\emph{passive}) means that it is not selected.
At most \(M\) projects can be active in any period, so
\begin{equation}
\label{eq:aklt1}
\sum_{n\in\mathcal N}A_n(t)\le M,\qquad t=0,1,2,\ldots.
\end{equation}

If \(A_n(t)=1\), the controller observes a one-sided binary feedback signal \(K_n(t)\in\{0,1\}\), interpreted as success/failure (ACK/NACK), where \(K_n(t)=1\) (ACK) certifies that the project is in the good state \(S_n(t)=1\), whereas \(K_n(t)=0\) (NACK) may occur in either latent state. Specifically, for some \(0<\kappa_n<1\),
\[
\Prob\{K_n(t)=1\mid S_n(t)=1,\,A_n(t)=1\}=\kappa_n,
\qquad
\Prob\{K_n(t)=1\mid S_n(t)=0,\,A_n(t)=1\}=0.
\]
If \(A_n(t)=0\), no feedback is received; for notational convenience, we set \(K_n(t)=0\) in that case.
Upon an ACK, project \(n\) earns reward \(r_n>0\), so the realized reward in period \(t\) is \(r_nK_n(t)\).

Let \(\mathcal H_t\) denote the global history available at the start of period \(t\), consisting of the initial prior together with all past actions and observed feedback up to time \(t\).
Since the latent states are unobserved, decisions are based on belief states. For project \(n\), the belief state is
\[
X_n(t)\triangleq \Prob\{S_n(t)=1\mid \mathcal H_t\}\in\mathcal X\triangleq[0,1],
\]
and the vector \(\mathbf X(t)=(X_n(t))_{n\in\mathcal N}\) is a sufficient statistic for control.
We consider scheduling policies \(\pi\in\Pi\) that select a feasible action vector \(\mathbf A(t)\) as a function of \(\mathbf X(t)\).

Given \(X_n(t)=x_n\) and \(A_n(t)=1\), Bayes' rule yields
\[
\Prob\{S_n(t)=1 \mid K_n(t)=1,\,A_n(t)=1\}=1,
\qquad
\Prob\{S_n(t)=1 \mid K_n(t)=0,\,A_n(t)=1\}
= \psi_n(x_n)\triangleq \frac{(1-\kappa_n)x_n}{1-\kappa_n x_n}.
\]
Thus, an ACK confirms a good latent state, whereas a NACK lowers the belief.

The belief dynamics follow by combining this Bayesian update with the latent Markov transition.
Under the passive action \(A_n(t)=0\), the update is deterministic:
\begin{equation}
\label{eq:pdn}
X_n(t+1)=\phi_n^0(X_n(t))\triangleq p_{n,01}+\rho_n X_n(t).
\end{equation}
Under the active action \(A_n(t)=1\), an ACK occurs at the end of period \(t\) with probability
\[
\Prob\{K_n(t)=1\mid X_n(t)=x_n,\,A_n(t)=1\}=\kappa_n x_n.
\]
Hence the active belief update is
\begin{equation}
\label{eq:pdnK01}
X_n(t+1)=
\begin{cases}
p_{n,01}+\rho_n=p_{n,11}, & \textup{w.p. }\kappa_n X_n(t),\\[2mm]
\phi_n^1(X_n(t))\triangleq p_{n,01}+\rho_n\,\psi_n(X_n(t)), & \textup{w.p. }1-\kappa_n X_n(t).
\end{cases}
\end{equation}

We focus primarily on the discounted-reward criterion. Given an initial belief vector \(\mathbf x\), the objective is
\begin{equation}
\label{eq:dop}
\max_{\pi\in\Pi}\;
\Ex_{\mathbf x}^{\pi}\!\left[\sum_{t=0}^{\infty}\sum_{n\in\mathcal N}\beta^t\,r_nK_n(t)\right],
\qquad 0<\beta<1,
\end{equation}
where \(\Ex_{\mathbf x}^{\pi}\) is expectation under policy \(\pi\) starting from \(\mathbf X(0)=\mathbf x\).
Equivalently, the expected one-step reward is
\[
R_n(x_n,a_n)\triangleq
\Ex[r_nK_n(t)\mid X_n(t)=x_n,\,A_n(t)=a_n]
=
r_n\kappa_n x_n a_n,
\]
so \eqref{eq:dop} can be written as
\begin{equation}
\label{eq:dopRn}
\max_{\pi\in\Pi}\;
\Ex_{\mathbf x}^{\pi}\!\left[\sum_{t=0}^{\infty}\sum_{n\in\mathcal N}\beta^t\,R_n(X_n(t),A_n(t))\right].
\end{equation}

For later reference, we also consider the average-reward criterion
\begin{equation}
\label{eq:aop}
\max_{\pi\in\Pi}\;
\liminf_{T\to\infty}\frac{1}{T+1}\,
\Ex_{\mathbf x}^{\pi}\!\left[\sum_{t=0}^{T}\sum_{n\in\mathcal N}R_n(X_n(t),A_n(t))\right].
\end{equation}
Its connection with the discounted formulation is outlined in Appendix~\ref{app:avgcrit_outline}.

Problems \eqref{eq:dop} and \eqref{eq:aop} are partially observed RMABPs with continuous belief states and are generally intractable. When the model is \emph{indexable}, the \emph{Whittle index policy} provides a scalable and well-grounded heuristic: at each time, it activates up to \(M\) projects with the largest nonnegative Whittle indices. In the PCL framework outlined later, we work instead with the MP index, which coincides with the Whittle index when conditions \textup{(PCLI1--PCLI3)} hold.

\subsection{Illustrative example: OSA with sensing errors}
\label{s:om}

We next show how the binary-success-feedback model of Section~\ref{s:genmodel} arises in opportunistic spectrum access (OSA) with sensing errors and collision constraints.
Consider a cognitive radio system with \(N\) licensed \emph{primary-user} (PU) channels, of which at most \(M\) can be sensed by the controller in each slot. Channel \(n\) offers throughput \(r_n\) Mb/slot and has PU occupancy state \(S_n(t)\in\{0,1\}\), where \(S_n(t)=1\) means that the channel is available to secondary users (SUs), and \(S_n(t)=0\) means that it is unavailable (busy). If channel \(n\) is sensed at time \(t\), a binary sensor outcome \(O_n(t)\in\{0,1\}\) is observed, where \(O_n(t)=1\) and \(O_n(t)=0\) indicate ``sensed available'' and ``sensed busy,'' respectively. Sensing errors are modeled by positive miss-detection and false-alarm probabilities \(\delta_n>0\) and \(\epsilon_n>0\), defined by
\[
\delta_n\triangleq \Prob\{O_n(t)=1\mid S_n(t)=0\},
\qquad
\epsilon_n\triangleq \Prob\{O_n(t)=0\mid S_n(t)=1\}.
\]
We assume that the sensor is informative \citep{chenetal08}, so
\begin{equation}
\label{eq:assdeltaeps}
\delta_n+\epsilon_n<1.
\end{equation}

An access attempt when \(S_n(t)=0\) causes a collision and therefore an unsuccessful transmission. Each channel \(n\) imposes a collision-tolerance requirement \(0<\zeta_n<1\), namely,
\[
\Prob\{\text{access at }t \mid S_n(t)=0,\,A_n(t)=1\}\le \zeta_n.
\]

To avoid SU contention, the controller selects at most \(M\) channels to be sensed in each slot, via actions \(A_n(t)\in\{0,1\}\) satisfying \eqref{eq:aklt1}. Conditional on sensing, an \emph{access rule} is specified by probabilities \(y_n(o)\in[0,1]\), \(o\in\{0,1\}\), where \(y_n(o)\) is the probability of attempting access given sensor outcome \(o\). Then
\begin{align*}
\Prob\{\text{access at }t \mid S_n(t)=1,\,A_n(t)=1\}
&=\epsilon_n y_n(0)+(1-\epsilon_n)y_n(1), \\
\Prob\{\text{access at }t \mid S_n(t)=0,\,A_n(t)=1\}
&=(1-\delta_n)y_n(0)+\delta_n y_n(1).
\end{align*}

Following the \emph{separation principle} of \citet{chenetal08}, we choose the access probabilities \(y_n(o)\) myopically so as to maximize the access probability when the channel is truly available, subject to the collision constraint:
\begin{equation}
\label{eq:lpphi}
\kappa_n \triangleq
\max_{0\le y_n(0),\,y_n(1)\le 1}
\Bigl\{
\epsilon_n y_n(0)+(1-\epsilon_n)y_n(1):
(1-\delta_n)y_n(0)+\delta_n y_n(1)\le \zeta_n
\Bigr\}.
\end{equation}
Thus, \(\kappa_n\) is the maximal probability of attempting access on channel \(n\) when it is in fact available, while respecting the collision-tolerance requirement. Under \eqref{eq:assdeltaeps}, the optimizer admits the closed form (see \citep[Proposition~2]{chenetal08})
\begin{equation}
\label{eq:accprob}
y_n^*(1)=\min\!\left\{1,\frac{\zeta_n}{\delta_n}\right\},
\qquad
y_n^*(0)=\left(\frac{\zeta_n-\delta_n}{1-\delta_n}\right)^+,
\qquad
\kappa_n=\epsilon_n y_n^*(0)+(1-\epsilon_n)y_n^*(1).
\end{equation}
If \(\delta_n=\zeta_n\) (the ``trust-the-sensor'' point in \citet[Theorem~2]{chenetal08}), then \((y_n^*(0),y_n^*(1))=(0,1)\) and \(\kappa_n=1-\epsilon_n\).

Under the collision-tolerance condition \(0<\zeta_n<1\), we have \(y_n^*(1)=\min\{1,\zeta_n/\delta_n\}>0\), and hence \(\kappa_n>0\). Further, the collision constraint rules out \((y_n^*(0),y_n^*(1))=(1,1)\), and since \(0<\epsilon_n<1\), this implies \(\kappa_n<1\). 

Let \(K_n(t)\in\{0,1\}\) denote the ACK/NACK indicator, where \(K_n(t)=1\) if an SU attempts access on channel \(n\) at time \(t\) and the channel is available, i.e., \(S_n(t)=1\). Under the optimal access rule, conditioning on \(X_n(t)=x_n\) gives
\[
\Prob\{K_n(t)=1\mid X_n(t)=x_n,\,A_n(t)=1\}=\kappa_n x_n.
\]
Moreover, a successful SU transmission on channel \(n\) yields throughput \(r_n\) Mb/slot, so the one-slot expected reward is
\[
R_n(x_n,a_n)\triangleq r_n\kappa_n x_n a_n.
\]
Thus, in the OSA interpretation, the good latent state corresponds to channel availability, the ACK event corresponds to successful transmission, and \(\kappa_n\) summarizes the optimal sensing-conditioned access rule under the collision constraint.

\section{Restless bandit indexation}
\label{s:rbiosa}

\subsection{Indexability, dual bound, and Whittle index policy}
\label{s:lrdi}

Problem~\eqref{eq:dop} is a discounted-reward RMABP with the per-period activation-capacity constraint~\eqref{eq:aklt1}. Since computing an optimal policy is generally intractable, we adopt Whittle's index approach \citep{whit88}, based on a Lagrangian relaxation and decomposition into single-project subproblems.

Let \(\widehat{\Pi}\) denote the class of stationary randomized policies that, at each time \(t\), select an action vector \(\mathbf A(t)\in\{0,1\}^N\) as a possibly randomized function of the current belief vector \(\mathbf X(t)\), without being required to satisfy the per-period constraint~\eqref{eq:aklt1}. We relax~\eqref{eq:aklt1} by imposing only that the expected total discounted activation effort not exceed \(M/(1-\beta)\):
\begin{equation}
\label{eq:avversac}
\Ex_{\mathbf{x}}^{\pi}\!\left[\sum_{t = 0}^\infty \sum_{n \in \mathcal{N}} \beta^t A_n(t)\right]
\;\leqslant\; \frac{M}{1-\beta}.
\end{equation}
The resulting relaxed problem has optimal value
\begin{equation}
\label{eq:rp}
\widehat{V}^*(\mathbf{x}) \;\triangleq\;
\max_{\pi\in \widehat{\Pi}}
\left\{
\Ex_{\mathbf{x}}^{\pi}\!\left[\sum_{t=0}^{\infty}\sum_{n\in\mathcal{N}}
R_n\bigl(X_n(t),A_n(t)\bigr)\,\beta^t\right]
\;:\; \eqref{eq:avversac}
\right\}.
\end{equation}
Since \(\Pi\subseteq \widehat{\Pi}\) and~\eqref{eq:aklt1} implies~\eqref{eq:avversac}, it follows that
$
V^*(\mathbf{x}) \leqslant \widehat{V}^*(\mathbf{x}).
$

Attaching a multiplier \(\lambda\) to \eqref{eq:avversac}, define
\begin{equation}
\label{eq:lr}
L(\mathbf{x};\lambda) \;\triangleq\;
\max_{\pi\in \widehat{\Pi}}
\Ex_{\mathbf{x}}^{\pi}\!\left[
\sum_{t=0}^{\infty}\sum_{n\in\mathcal{N}}
\bigl(R_n\bigl(X_n(t),A_n(t)\bigr)-\lambda\,A_n(t)\bigr)\,\beta^t
\right]
\;+\; \frac{M}{1-\beta}\,\lambda.
\end{equation}
For \(\lambda\geqslant 0\), this is a Lagrangian relaxation of \eqref{eq:rp}, and weak duality yields
$
\widehat{V}^*(\mathbf{x})\leqslant L(\mathbf{x};\lambda).
$
Minimizing over \(\lambda\geqslant 0\) gives the following Lagrangian dual bound, which upper-bounds both \(\widehat{V}^*(\mathbf{x})\) and \(V^*(\mathbf{x})\):
\begin{equation}
\label{eq:ld}
D(\mathbf{x}) \;\triangleq\; \min_{\lambda\geqslant 0} L(\mathbf{x};\lambda),
\end{equation}

The relaxation~\eqref{eq:lr} decouples across projects into single-project subproblems. For each \(n\in\mathcal{N}\), let \(\Pi_n\) denote the class of stationary policies for project \(n\) in isolation. Define the single-project value under activity charge \(\lambda\) by
\begin{equation}
\label{eq:nuwagesp}
L_n(x_n;\lambda)\;\triangleq\;
\max_{\pi_n\in\Pi_n}
\Ex_{x_n}^{\pi_n}\!\left[
\sum_{t=0}^{\infty}
\bigl(R_n\bigl(X_n(t),A_n(t)\bigr)-\lambda\,A_n(t)\bigr)\,\beta^t
\right].
\end{equation}
Then
\begin{equation}
\label{eq:LagDecomp}
L(\mathbf{x};\lambda)
\;=\; \sum_{n\in\mathcal{N}} L_n(x_n;\lambda)
\;+\; \frac{M}{1-\beta}\,\lambda.
\end{equation}

We say that project \(n\) is \emph{indexable} if there exists a function
\(\lambda_n^*:\mathcal X\to\mathbb{R}\) such that, for every \(x_n\in\mathcal X\) and \(\lambda\in\mathbb{R}\), the active action \(a_n=1\) is optimal in \eqref{eq:nuwagesp} at belief state \(x_n\) if and only if
\(\lambda_n^*(x_n)\ge \lambda\), while the passive action \(a_n=0\) is optimal at \(x_n\) if and only if
\(\lambda_n^*(x_n)\le \lambda\).
Thus both actions are optimal at \(x_n\) if and only if \(\lambda_n^*(x_n)=\lambda\).
This statewise formulation is the one adopted in our PCL verification framework \citep{nmmor20}. It is equivalent to Whittle's original set-expansion definition when the indifference charge is unique at each state.

Under indexability, Whittle's policy activates at each period \(t\) up to \(M\) projects with the largest nonnegative indices \(\lambda_n^*\!\bigl(X_n(t)\bigr)\). Later, within the PCL framework, we will introduce the MP index as an explicit candidate index; when the PCL conditions hold, it coincides with the Whittle index.

\begin{remark}[On Whittle's set-expansion definition of indexability]
\label{rem:setexp_vs_scalar}
Whittle's original definition of indexability is stated in terms of the optimal passive set
\[
\mathcal{P}_n(\lambda)=\{x_n\in\mathcal{X}: a_n=0 \text{ is optimal at }x_n\},
\]
which expands monotonically as \(\lambda\) grows. Equivalently, one may consider the optimal active set
\[
\mathcal{A}_n(\lambda)=\{x_n\in\mathcal{X}: a_n=1 \text{ is optimal at }x_n\}.
\]
For each state \(x_n\), this yields an index interval
\([\lambda_{\sup,n}^*(x_n),\,\lambda_{\inf,n}^*(x_n)]\), where
\[
\lambda_{\inf,n}^*(x_n)=\inf\{\lambda:\ x_n\in\mathcal{P}_n(\lambda)\},
\qquad
\lambda_{\sup,n}^*(x_n)=\sup\{\lambda:\ x_n\in\mathcal{A}_n(\lambda)\}.
\]
Under the set-expansion property one has
\[
\lambda_{\sup,n}^*(x_n)\le \lambda_{\inf,n}^*(x_n),
\]
and a nontrivial interval corresponds to a range of charges \(\lambda\) for which both actions are optimal at \(x_n\).

The interval collapses to a point precisely when there is a unique Whittle index \(\lambda_n^*(x_n)\), equal to that common value. This uniqueness is implicit in \citet{whit88}, where the index is defined as \emph{the} passivity subsidy, equivalently the activity charge, that makes the controller indifferent between the two actions at state \(x_n\).
\end{remark}

\subsection{PCL-based verification theorem for threshold-indexability}
\label{s:eitpob}

To apply Whittle's index approach \citep{whit88}, indexability must first be established for the single-project \(\lambda\)-charge subproblems. Rather than following the conventional route---first proving optimality of threshold policies for each subproblem \eqref{eq:nuwagesp}, and then establishing monotonicity of the optimal threshold as a function of \(\lambda\)---we adopt the alternative PCL-based approach. In the discrete-state setting, PCLs yield general sufficient indexability conditions  and  an  index algorithm; see \cite{nmaap01,nmmp02,nmtop07,nmmor06}. Here we use the PCL-indexability framework for real-state  projects in \citep{nmmor20}.

We henceforth focus on a single project and suppress its label. Let \(\Pi\) denote the class of stationary policies for the single-project belief process. For any \(\pi\in\Pi\) and initial belief \(x\in\mathcal X\), define the discounted \emph{reward} and \emph{work} metrics
\begin{equation}
\label{eq:FxGxpi}
F(x,\pi)\triangleq \Ex^{\pi}_{x}\!\Bigg[\sum_{t=0}^{\infty}
R\bigl(X(t),A(t)\bigr)\,\beta^{t}\Bigg],
\qquad
G(x,\pi)\triangleq \Ex^{\pi}_{x}\!\Bigg[\sum_{t=0}^{\infty}
A(t)\,\beta^{t}\Bigg],
\end{equation}
where
$
R(x,a)\triangleq a\,r\,\kappa\,x,\enspace a\in\{0,1\}.
$
Given an activation charge \(\lambda\in\mathbb R\), define the corresponding \(\lambda\)-charge performance by
\[
\mathcal L(x,\pi;\lambda)\triangleq F(x,\pi)-\lambda\,G(x,\pi),
\]
so that the single-project \(\lambda\)-charge problem is
\begin{equation}
\label{eq:cocs}
L(x;\lambda)\triangleq\max_{\pi\in\Pi}\,\mathcal L(x,\pi;\lambda).
\end{equation}

Threshold policies play a central role. For any \(z \in\Rbar\triangleq\mathbb R\cup\{-\infty,\infty\}\), the \emph{\(z\)-threshold policy} activates when \(x>z\) and remains passive otherwise. The extended thresholds have the natural limiting interpretation: \(z=-\infty\) yields the always-active policy, whereas \(z=+\infty\) yields the always-passive policy (though in the present setting finite thresholds suffice). We write \(F(x,z)\) and \(G(x,z)\) for the reward and work metrics under the \(z\)-threshold policy, and define
\[
\mathcal L(x,z;\lambda)\triangleq F(x,z)-\lambda G(x,z).
\]

Marginal metrics are defined as follows. For \(a\in\{0,1\}\) and threshold \(z\), let \(\langle a,z\rangle\) denote the policy that takes action \(a\) at time \(0\) and thereafter follows the \(z\)-threshold policy. Define the \emph{marginal reward} and \emph{marginal work} metrics by
\[
f(x,z)\triangleq F\bigl(x,\langle 1,z\rangle\bigr)-F\bigl(x,\langle 0,z\rangle\bigr),
\qquad
g(x,z)\triangleq G\bigl(x,\langle 1,z\rangle\bigr)-G\bigl(x,\langle 0,z\rangle\bigr).
\]
If \(g(x,z)>0\) for all \(x\) and \(z\), define the \emph{marginal productivity} (MP) metric and the associated \emph{MP index} by
\[
m(x,z)\triangleq\frac{f(x,z)}{g(x,z)},
\qquad
m(x)\triangleq m(x,x),\qquad x\in\mathcal X.
\]

We will verify the following real-state \emph{PCL-indexability conditions}:
\begin{enumerate}[label=(PCL\arabic*),leftmargin=*]
\item \(g(x,z)>0\) for all \(x\in\mathcal X\) and \(z\in\R\).

\item The MP index \(m(\cdot)\) is nondecreasing and continuous on \(\mathcal X\).

\item For each \(x\in\mathcal X\) and finite thresholds \(z_1<z_2\),
\[
F(x,z_2)-F(x,z_1)=\int_{(z_1,z_2]} m(z)\,G(x,dz).
\]
That is, as a function of the threshold, \(F(x,\cdot)\) is the indefinite \emph{Lebesgue--Stieltjes integral} \citep{cartvanBrunt00} of \(m(\cdot)\) with respect to the signed Stieltjes measure \(G(x,dz)\) induced by the right-continuous function \(z\mapsto G(x,z)\). Under \textup{(PCLI1)}, the latter is nonincreasing and of bounded variation; see \cite[Lemmas~10, 11, and~17]{nmmor20}.
\end{enumerate}

The single-project \(\lambda\)-charge problem \eqref{eq:cocs} is called \emph{threshold-indexable} if the project is indexable in the statewise Whittle sense of Section~\ref{s:lrdi} and, for each \(\lambda\in\mathbb R\), there exists an optimal \(z^*(\lambda)\)-threshold policy. A \emph{generalized inverse} of a nondecreasing function \(h:\mathcal X\to\mathbb R\) is any function \(z^*:\mathbb R\to\R\) such that, for all \(\lambda\in\mathbb R\),
\[
\inf\{x\in\mathcal X: h(x)\ge \lambda\}\le z^*(\lambda)\le \sup\{x\in\mathcal X: h(x)\le \lambda\},
\]
with the usual conventions \(\inf\varnothing=+\infty\) and \(\sup\varnothing=-\infty\).

With these definitions in place, the following verification theorem, specialized from \citep{nmmor20}, shows that the PCLI conditions imply both threshold-indexability and an explicit index formula.

\begin{theorem}[PCL verification theorem for threshold-indexability]
\label{the:sic}
Under \textup{(PCLI1--PCLI3)}, the single-project discounted \(\lambda\)-charge problem \eqref{eq:cocs} is threshold-indexable. Moreover, the Whittle index exists and equals the MP index \(m(x)\), and any optimal threshold map \(z^*(\cdot)\) is a generalized inverse of \(m(\cdot)\).
\end{theorem}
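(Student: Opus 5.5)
The plan is to deduce Theorem~\ref{the:sic} from the general real-state verification theorem of \citep{nmmor20}. Its hypotheses there are the conditions \textup{(PCLI1--PCLI3)} together with standing regularity assumptions. Those assumptions are bounded one-step reward $R(x,a)=a r\kappa x$, a compact state space $\mathcal X=[0,1]$, Borel-measurable transition kernels given by \eqref{eq:pdn}--\eqref{eq:pdnK01}, and discounting with $0<\beta<1$. So the first step is to check that the present belief-state model meets them, which is routine. To make the argument self-contained, I would also reproduce the core of the proof in two stages.

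\emph{Stage 1: optimality among threshold policies.} Fix $\lambda$ and $x$. By \textup{(PCLI3)}, for finite $z_1<z_2$,
\[
\mathcal L(x,z_2;\lambda)-\mathcal L(x,z_1;\lambda)=\int_{(z_1,z_2]}\bigl(m(z)-\lambda\bigr)\,G(x,dz).
\]
By \textup{(PCLI1)} and \cite[Lemmas~10, 11, 17]{nmmor20}, $z\mapsto G(x,z)$ is nonincreasing, so $-G(x,dz)$ is a nonnegative measure. By \textup{(PCLI2)}, $m$ is nondecreasing. Hence the integrand $-(m(z)-\lambda)$ is nonnegative below any generalized inverse $z^*(\lambda)$ and nonpositive above it. It follows that $z\mapsto\mathcal L(x,z;\lambda)$ is maximized at every generalized inverse $z^*(\lambda)$, uniformly in $x$. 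The extended thresholds $\pm\infty$ are handled by limits, using the bounded variation of $G(x,\cdot)$.

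\emph{Stage 2: global optimality via the Bellman equation.} Take $z=z^*(\lambda)$ and let $V(x)=\mathcal L(x,z;\lambda)$. The advantage of activating at $x$, with the $z$-policy thereafter, equals
\[
f(x,z)-\lambda g(x,z)=g(x,z)\bigl(m(x,z)-\lambda\bigr),
\]
and \textup{(PCLI1)} gives $g(x,z)>0$. Therefore $V$ solves the discounted optimality equation, and the $z$-policy is optimal over all of $\Pi$, provided that $m(x,z)\ge\lambda$ when $x>z$ and $m(x,z)\le\lambda$ when $x\le z$. This sign claim is the main obstacle, since $m(x,z)$ is not obviously comparable with $m(x)=m(x,x)$.

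To handle it, I would use the fact that policies $\langle a,z\rangle$ differ from threshold policies only at time $0$. Writing $F(x,\langle a,z\rangle)$ as $R(x,a)$ plus $\beta$ times the expectation of $F(X(1),z)$, and similarly for $G$, one obtains
\[
\bigl(f-\lambda g\bigr)(x,z)-\bigl(f-\lambda g\bigr)(x,x)=\beta\Bigl(\Ex^{1}_x-\Ex^{0}_x\Bigr)\bigl[\mathcal L(X(1),z;\lambda)-\mathcal L(X(1),x;\lambda)\bigr].
\]
Applying \textup{(PCLI3)} at the successor states turns the right-hand side into an integral of $m-\lambda$ over the threshold interval between $x$ and $z$. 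The sign of that integral should match the sign of $m(x)-\lambda$: by \textup{(PCLI2)}, $m-\lambda$ has constant sign on that interval, because it lies on one side of $z^*(\lambda)$.

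Continuity in \textup{(PCLI2)} is used twice. It handles the boundary case where the interval touches $z^*(\lambda)$. It also gives a unique indifference charge at each state, namely $m(x)=\lambda$ exactly when both actions are optimal. This yields statewise indexability with $\lambda^*(x)=m(x)$, and shows that every optimal threshold map is a generalized inverse of $m$. For the detailed measure-theoretic justification of the interchange steps in the sign claim, I would defer to the corresponding lemmas in \citep{nmmor20} rather than redo them.
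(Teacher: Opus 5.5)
Your top-level route, specializing the real-state verification theorem of \citep{nmmor20}, is exactly what the paper does: Theorem~\ref{the:sic} is stated as a specialization of that result, with no independent proof given. The gap is in your self-contained reconstruction, at precisely the step you flag as the main obstacle. Your Stage~1 sign argument is fine, because there the measure $G(x,dz)$ is nonpositive. In Stage~2, however, applying (PCLI3) at the successor states gives, for $z>x$,
\[
\bigl(f-\lambda g\bigr)(x,z)-\bigl(f-\lambda g\bigr)(x,x)=\int_{(x,z]}\bigl(m(w)-\lambda\bigr)\,g(x,dw).
\]
The measure here is induced by $w\mapsto g(x,w)=1+\beta\bigl(\Ex^1_x-\Ex^0_x\bigr)[G(X(1),w)]$. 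That is a difference of two nonpositive measures, so it is genuinely signed. In this model $g(x,\cdot)$ is not monotone in the threshold. In regime~(b) of Proposition~\ref{pro:fg_closed_form}, $g(x,z)$ jumps up by $\beta^{k+1}$ when $\tau_0$ passes from $k$ to $k+1$, and jumps down by $(1-\kappa x)\beta^{k+1}$ when $\tau_1$ does. So the constant sign of $m-\lambda$ on the interval tells you nothing about the sign of the integral. Your inference that it ``should match the sign of $m(x)-\lambda$'' therefore fails, and this is not an interchange technicality that can be deferred.

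The missing idea is a Lebesgue--Stieltjes integration by parts, and this is where continuity of $m$ genuinely enters. Since $m$ is continuous (so $m(dw)$ has no atoms) and $g(x,\cdot)$ is right-continuous of bounded variation,
\[
\int_{(z_1,z_2]}\bigl(m(w)-\lambda\bigr)\,g(x,dw)=\bigl(m(z_2)-\lambda\bigr)g(x,z_2)-\bigl(m(z_1)-\lambda\bigr)g(x,z_1)-\int_{(z_1,z_2]}g(x,w)\,m(dw).
\]
Combined with $f(x,x)=m(x)g(x,x)$, this yields
\[
f(x,z)-m(z)g(x,z)=-\int_{(x,z]}g(x,w)\,m(dw)\quad (z>x),\qquad f(x,z)-m(z)g(x,z)=\int_{(z,x]}g(x,w)\,m(dw)\quad (z<x).
\]
Now take $z=z^*(\lambda)$ with $m(z^*)=\lambda$; values of $\lambda$ outside the range of $m$ need separate handling. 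Then $f-\lambda g$ equals $\int_{(z^*,x]}g(x,w)\,m(dw)\ge0$ when $x>z^*$, and $-\int_{(x,z^*]}g(x,w)\,m(dw)\le0$ when $x\le z^*$. The signs hold because $g(x,w)>0$ at \emph{every} threshold $w$, which is why (PCLI1) is needed off the diagonal, and because $m(dw)\ge0$. The same identity gives strict inequalities when $m(x)\neq\lambda$, since then the integral is of a positive function against a nonzero nonnegative measure. That strictness is what the ``only if'' halves of statewise indexability require. Your claim of a unique indifference charge is justified only once this identity is in place.
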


The remainder of the paper is devoted to verifying these conditions and deriving tractable expressions for \(F\), \(G\), and \(m(\cdot)\). Where possible, we establish the required properties analytically; elsewhere, especially in threshold regimes where the belief dynamics become more intricate, we provide systematic numerical evidence.

\section{Computing threshold-policy performance metrics via the no-ACK skeleton}
\label{s:compFG}

This section develops the computational machinery for evaluating the reward and work metrics \(F(x,z)\) and
\(G(x,z)\) under \(z\)-threshold policies, together with the associated marginal metrics \(f(x,z)\) and \(g(x,z)\) and the MP index
\(m(x)=f(x,x)/g(x,x)\). The key idea is to decompose threshold-driven belief evolution into two components:
\textup{(i)} deterministic \emph{pre-ACK} dynamics, described by the \emph{no-ACK skeleton} and its associated survival probabilities; and
\textup{(ii)} random ACK events that reset the belief and restart the same deterministic evolution. This decomposition yields
renewal-type identities for \(F\), \(G\), \(f\), and \(g\) that are numerically tractable and admit closed forms in several threshold regimes.
These identities also support practical verification of the PCL-indexability conditions and guide the numerical experiments reported later.
Longer proofs are deferred to Appendix~\ref{app:compFG_proofs}.

Throughout this section we focus on a single project and suppress its label. We assume
\[
0<\beta<1,\quad 0 < p_{01} < 1,\quad 0 < p_{10} < 1,\quad
\rho \triangleq 1-p_{10}-p_{01} = p_{11}-p_{01}> 0,\quad
0 < \kappa < 1,\quad r>0,
\]
and recall that \(\mathcal{X}\triangleq[0,1]\). Throughout, ``increasing'' and ``decreasing'' are interpreted in the strict sense.

\subsection{Belief-update maps and their properties}
\label{s:phi_props_main}

We begin by collecting the structural properties of the one-step belief-update maps that underpin the subsequent
computations under threshold policies. We introduce the \emph{passive update} \(\phi^0\) and the \emph{active no-ACK update}
\(\phi^1\), identify a forward-invariant belief interval, and characterize the fixed points and closed-form iterates of
both maps. These properties underpin the no-ACK skeleton and the itinerary/word machinery used later, and also prepare the
contractiveness argument obtained after an increasing change of variables to log-odds.

Given belief \(x\) and activation, the posterior belief after observing a NACK is
\begin{equation}
\label{eq:alpha_def}
\psi(x)\triangleq \Prob\{S(t)=1\mid K(t)=0,\,A(t)=1,\,X(t)=x\}
=\frac{(1-\kappa)x}{1-\kappa x},\qquad x\in\mathcal{X}.
\end{equation}
Let \(\phi^0\) and \(\phi^1\) denote the passive and active no-ACK belief-update maps, respectively:
\begin{equation}
\label{eq:phi01x}
\phi^0(x)\triangleq p_{01}+\rho x,
\qquad
\phi^1(x)\triangleq p_{01}+\rho\,\psi(x)
=\phi^0(x)-\rho\,\frac{\kappa x(1-x)}{1-\kappa x},
\qquad x\in\mathcal{X}.
\end{equation}

\begin{remark}
\label{rem:phi01x}
\begin{enumerate}[label=(\roman*), ref=\roman*]
\item \(\phi^1(x)\le \phi^0(x)\) for all \(x\in\mathcal{X}\), with strict inequality for \(0<x<1\).
\item The map \(\psi\) is increasing on \(\mathcal{X}\) and satisfies \(\psi(x)\le x\) for all \(x\in\mathcal{X}\),
with strict inequality for \(0<x<1\). Hence \(\phi^1\) is increasing on \(\mathcal{X}\).
\item The interval \(\widetilde{\mathcal{X}}\triangleq[p_{01},p_{11}]\) is forward invariant under both maps:
\(\phi^a(\widetilde{\mathcal{X}})\subseteq\widetilde{\mathcal{X}}\) for \(a\in\{0,1\}\).
In particular, since \(p_{01}> 0\) and \(p_{11}=1-p_{10}<1\), we have \(\widetilde{\mathcal{X}}\subset(0,1)\).
\end{enumerate}
\end{remark}

A key relation between the belief-update maps \(\phi^0\) and \(\phi^1\) is the following
\emph{one-step posterior-mean identity}:
\begin{equation}
\label{eq:posterior_mean_identity}
\Ex\!\left[X(t+1)\mid X(t)=x,\ A(t)=0\right]
=\phi^0(x)
=\kappa x\,p_{11}+(1-\kappa x)\,\phi^1(x)
=\Ex\!\left[X(t+1)\mid X(t)=x,\ A(t)=1\right].
\end{equation}
Thus, although the action changes the \emph{distribution} of the next belief \(X(t+1)\), it leaves its
\emph{conditional mean} unchanged: under either action, the one-step mean drift equals \(\phi^0(x)\).
This one-step mean identity propagates over time and yields the following lemma.
Let \(\phi_t^0\) denote the \(t\)-fold iterate of \(\phi^0\), with \(\phi_0^0(x)\equiv x\).

\begin{lemma}[Policy-invariant mean belief path]
\label{lem:policy_invariant_mean_path}
Fix an initial belief \(X(0)=x\) and let \(\pi\in\Pi\) be any admissible policy.
Then the mean belief path is \emph{policy-invariant}: for every \(t\ge 0\),
\begin{equation}
\label{eq:mean_path_policy_invariant}
\Ex_x^\pi[X(t)] = \phi_t^0(x).
\end{equation}
\end{lemma}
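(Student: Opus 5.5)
The plan is to argue by induction on \(t\), using the one-step posterior-mean identity \eqref{eq:posterior_mean_identity} together with the affinity of \(\phi^0\). The case \(t=0\) is immediate, since \(X(0)=x=\phi^0_0(x)\). For the inductive step we condition on the history \(\mathcal H_t\), or equivalently on \(X(t)\) and the action \(A(t)\) chosen by \(\pi\). Because \(\pi\) is stationary and possibly randomized, \(A(t)\) is a (randomized) function of \(X(t)\). Given \(X(t)\) and \(A(t)\), the next belief \(X(t+1)\) is distributed according to \eqref{eq:pdn} or \eqref{eq:pdnK01}, with no further dependence on the past.

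The key step is to show that \(\Ex_x^\pi[X(t+1)\mid X(t),A(t)]=\phi^0(X(t))\) almost surely, whichever action is taken. For \(A(t)=0\) this is \eqref{eq:pdn}. For \(A(t)=1\) it is the identity \(\kappa x\,p_{11}+(1-\kappa x)\phi^1(x)=\phi^0(x)\). I would check this directly: \((1-\kappa x)\phi^1(x)=(1-\kappa x)p_{01}+\rho(1-\kappa)x\), so the left side equals \(\kappa x(p_{01}+\rho)+(1-\kappa x)p_{01}+\rho x-\rho\kappa x=p_{01}+\rho x\).

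Taking expectations by the tower property then gives \(\Ex_x^\pi[X(t+1)]=\Ex_x^\pi[\phi^0(X(t))]\). Since \(\phi^0\) is affine, this equals \(\phi^0(\Ex_x^\pi[X(t)])\), which by the induction hypothesis is \(\phi^0(\phi^0_t(x))=\phi^0_{t+1}(x)\). This closes the induction.

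I expect little genuine obstacle. The one point that needs care is justifying the conditioning when \(\pi\) is randomized or history-dependent. Since the mean identity holds pointwise for each action, any mixture over actions, measurable with respect to the history, leaves the conditional mean equal to \(\phi^0(X(t))\). Boundedness of \(X(t)\in[0,1]\) guarantees that all expectations exist. The argument therefore applies to any admissible policy, not only stationary ones.
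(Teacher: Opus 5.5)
Your proposal is correct and follows essentially the same route as the paper. Both use the posterior-mean identity \eqref{eq:posterior_mean_identity} to get \(\Ex[X(t+1)\mid \text{history},A(t)]=\phi^0(X(t))\) for either action, take expectations, and use the affinity of \(\phi^0\) to obtain \(\Ex_x^\pi[X(t+1)]=p_{01}+\rho\,\Ex_x^\pi[X(t)]\); your explicit algebraic check of \(\kappa x\,p_{11}+(1-\kappa x)\phi^1(x)=\phi^0(x)\) and your remark on randomized, history-dependent policies are useful details but do not change the argument.
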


The proof is deferred to Appendix~\ref{app:phi_props_proofs}.

Our analysis of no-ACK skeleton action itineraries draws on the regularity assumptions used in
\citep[Assumption~2]{danceSi19}. To state them independently of our model, let
\(h^a:\mathcal X\to\mathcal X\) denote a generic belief-update map under action \(a\in\{0,1\}\).

\begin{assumption}[Regularity of belief-update maps {\citep[Assumption~2]{danceSi19}}]
\label{ass:DSA2_generic}
\begin{enumerate}[label=\textup{(\roman*)},leftmargin=2.2em]
\item \textup{(Monotonicity)} \(h^0\) and \(h^1\) are increasing on \(\mathcal X\).
\item \textup{(Unique fixed points and ordering)} Each \(h^a\) admits a unique fixed point \(x^a\) in \(\mathcal X\), and
\(x^1<x^0\).
\item \textup{(Contractiveness up to increasing conjugacy)} There exists an increasing bijection
\(\vartheta:(0,1)\to\mathbb R\) such that the conjugated maps
\[
\hat h^{a}\triangleq \vartheta\circ h^{a}\circ \vartheta^{-1}:\mathbb R\to\mathbb R,\qquad a\in\{0,1\},
\]
are \emph{contractive} in the strict sense that
\[
|\hat h^{a}(u)-\hat h^{a}(v)|<|u-v|\qquad \text{for all distinct }u,v\in\mathbb R,\ \ a\in\{0,1\}.
\]
\end{enumerate}
\end{assumption}

In our model, we verify \textup{(i)}--\textup{(ii)} directly for \((h^0,h^1)=(\phi^0,\phi^1)\), and we verify
\textup{(iii)} with \(\vartheta\) taken to be the logit map, as established in Lemma~\ref{lem:logit_conjugacy_A2}.
The fact that \(\vartheta\) is defined on \((0,1)\) is immaterial here, since the forward-invariant interval
\(\widetilde{\mathcal X}=[p_{01},p_{11}]\subset(0,1)\) contains all no-ACK skeleton iterates from time \(1\) onward; see Remark~\ref{rem:phi01x}\textup{(iii)}.

We next verify the relevant parts of Assumption~\ref{ass:DSA2_generic} for our model.
We show that \(\phi^0\) and \(\phi^1\) are increasing on \(\mathcal X\), that each has a unique fixed point in \(\mathcal X\)
with \(x^1<x^0\), and we derive closed-form expressions for their iterates, thereby establishing
Assumption~\ref{ass:DSA2_generic}\textup{(i)}--\textup{(ii)}.
The map \(\phi^0\) is immediately seen to be contractive on \(\mathcal X\).
By contrast, \(\phi^1\) need not be contractive in the belief variable, but it becomes a strict contraction after the
standard increasing change of variables to log-odds.
This odds-contractiveness is sufficient for the itinerary machinery adapted from \citet{danceSi19}.

For \(a\in\{0,1\}\), let \(\phi_t^{a}(x)\) denote the \(t\)-fold iterate of \(\phi^{a}\), with \(\phi_0^{a}(x)\triangleq x\).
We refer to \(\bigl(\phi_t^{0}(x)\bigr)_{t\ge0}\) as the \emph{passive iterates} and to
\(\bigl(\phi_t^{1}(x)\bigr)_{t\ge0}\) as the \emph{active no-ACK iterates}.

We first consider passive iterates and threshold up-crossing times.
The map \(\phi^0(x)=p_{01} + \rho x\) is contractive and has the unique fixed point
\[
x^0=\frac{p_{01}}{1-\rho}\in(p_{01},p_{11}).
\]
Its iterates admit the closed form
\begin{equation}
\label{eq:phi0_iter}
\phi_t^{0}(x)=x^0+(x-x^0)\rho^t,\qquad t\ge0.
\end{equation}
In particular, \(\phi^0\) is increasing on \(\mathcal X\), verifying
Assumption~\ref{ass:DSA2_generic}\textup{(i)} for \(h^0=\phi^0\).
If \(x<x^0\), then \(\bigl(\phi_t^{0}(x)\bigr)_{t\ge0}\) increases to \(x^0\), whereas if \(x>x^0\), it decreases to \(x^0\).

For a threshold \(z<x^0\), define the \emph{passive threshold up-crossing time}
\begin{equation}
\label{eq:tau-characterization}
\tau_{\uparrow}^0(x,z)\triangleq \min \{t\ge 0: \phi_t^{0}(x)>z\}
=
\begin{cases}
1+\left\lfloor \dfrac{\ln\!\bigl((x^0-z)/(x^0-x)\bigr)}{\ln\rho} \right\rfloor, & x\le z,\\[1ex]
0, & x>z,
\end{cases}
\end{equation}
where we have used that \(\phi_t^0(x)>z\) is equivalent to \(\rho^t<(x^0-z)/(x^0-x)\) when \(x\le z<x^0\).

We next turn to the active no-ACK iterates. The fixed points of \(\phi^1\) satisfy the quadratic
\begin{equation}
\label{eq:phi1xxqe}
\kappa x^2-\bigl(1-\rho+\kappa p_{11}\bigr)x+ p_{01}=0.
\end{equation}

\begin{lemma}
\label{lem:phi1_fixed_point_location}
Equation \eqref{eq:phi1xxqe} has two distinct real roots \(x_{\mathrm{lo}}^1<x_{\mathrm{hi}}^1\), namely
\begin{equation}
\label{eq:phi1_fixed_points}
x_{\mathrm{lo}}^1
\triangleq
\frac{1-\rho+\kappa p_{11}-\sqrt{\Delta(\kappa)}}{2\kappa},
\qquad
x_{\mathrm{hi}}^1
\triangleq
\frac{1-\rho+\kappa p_{11}+\sqrt{\Delta(\kappa)}}{2\kappa},
\end{equation}
where
\begin{equation}
\label{eq:phi1_discriminant}
\Delta(\kappa)\triangleq
\bigl(1-\rho+\kappa p_{11}\bigr)^2-4\kappa p_{01}
=
\bigl((p_{10}+p_{01})-\kappa p_{11}\bigr)^2+4\kappa p_{10}\rho
>0.
\end{equation}
Moreover, \(x_{\mathrm{lo}}^1\in(p_{01},p_{11})\) and \(x_{\mathrm{hi}}^1\in(1,1/\kappa)\). Thus, \(\phi^1\) has a unique fixed
point in \(\mathcal X\), namely \(x^1\triangleq x_{\mathrm{lo}}^1\).
\end{lemma}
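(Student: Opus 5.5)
Write the fixed-point equation \(x=\phi^1(x)\) explicitly and analyze the resulting quadratic through its values at a few well-chosen points. Set
\[
Q(x)\triangleq \kappa x^2-(1-\rho+\kappa p_{11})x+p_{01}.
\]
Multiply \(x=p_{01}+\rho(1-\kappa)x/(1-\kappa x)\) by \(1-\kappa x>0\), valid for \(x<1/\kappa\). The equation becomes \(x-\kappa x^2=p_{01}(1-\kappa x)+\rho(1-\kappa)x\). Since \(p_{01}+\rho=p_{11}\), this rearranges to \(Q(x)=0\), recovering \eqref{eq:phi1xxqe}. The root formulas \eqref{eq:phi1_fixed_points} are then the quadratic formula, so the substantive claims are positivity of \(\Delta(\kappa)\) and the location of the roots.

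\textbf{Step 1: the discriminant identity.} Use \(1-\rho=p_{10}+p_{01}\) and write \(s\triangleq p_{10}+p_{01}\). Expand
\[
(s+\kappa p_{11})^2-4\kappa p_{01}=(s-\kappa p_{11})^2+4\kappa(s\,p_{11}-p_{01}).
\]
It remains to check that \(s\,p_{11}-p_{01}=p_{10}\rho\). Since \(p_{11}=1-p_{10}\), we get \((p_{10}+p_{01})(1-p_{10})-p_{01}=p_{10}(1-p_{10}-p_{01})=p_{10}\rho\). Because \(\kappa,p_{10},\rho>0\), this gives \(\Delta(\kappa)>0\), so the two roots are real and distinct.

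\textbf{Step 2: locating the roots by sign changes.} Evaluate \(Q\) at \(p_{01}\), \(p_{11}\), \(1\) and \(1/\kappa\).
\begin{itemize}
\item \(Q(1/\kappa)=1/\kappa-(1-\rho)/\kappa-p_{11}+p_{01}=\rho/\kappa-\rho>0\), since \(\kappa<1\).
\item \(Q(1)=\kappa-1+\rho-\kappa p_{11}+p_{01}=-(1-\kappa)p_{10}<0\), using \(1-\rho-p_{01}=p_{10}\) and \(1-p_{11}=p_{10}\).
\end{itemize}
Since \(Q\) is an upward parabola, these two signs place exactly one root in \((1,1/\kappa)\); this is \(x^1_{\mathrm{hi}}\). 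It also forces the smaller root \(x^1_{\mathrm{lo}}\) below \(1\).

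For the smaller root I will compute \(Q(p_{01})\) and \(Q(p_{11})\).
\begin{itemize}
\item \(Q(p_{01})=p_{01}\bigl[\kappa p_{01}-(1-\rho)-\kappa p_{11}+1\bigr]=p_{01}\rho(1-\kappa)>0\).
\item \(Q(p_{11})=\kappa p_{11}^2-(1-\rho)p_{11}-\kappa p_{11}^2+p_{01}=p_{01}-(1-\rho)p_{11}=-p_{10}\rho<0\), the last equality by the Step 1 identity.
\end{itemize}
So \(Q\) changes sign on \((p_{01},p_{11})\), which places a root there; it must be \(x^1_{\mathrm{lo}}\), because the other root exceeds \(1>p_{11}\).

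\textbf{Step 3: uniqueness in \(\mathcal X\).} Any fixed point of \(\phi^1\) in \([0,1]\) satisfies \(1-\kappa x>0\), so it is a root of \(Q\). Only \(x^1_{\mathrm{lo}}\) lies in \([0,1]\), and it is indeed a fixed point of \(\phi^1\), since the multiplication by \(1-\kappa x\) is reversible there. This gives the unique fixed point \(x^1\).

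The main obstacle is purely algebraic: keeping the sign computations of \(Q(p_{11})\) and \(Q(1)\) clean. The key is to substitute \(1-\rho=p_{10}+p_{01}\) and \(p_{11}=1-p_{10}\) systematically, so that each value collapses to a single signed product.
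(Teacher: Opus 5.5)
Your proof is correct and follows essentially the same route as the paper's: you evaluate the same upward-opening quadratic at the same points, $Q(p_{01})=p_{01}\rho(1-\kappa)>0$, $Q(p_{11})=-p_{10}\rho<0$ and $Q(1)=-(1-\kappa)p_{10}<0$, and your uniqueness step is equivalent to the paper's. The only differences are small: you obtain $x^1_{\mathrm{hi}}<1/\kappa$ from $Q(1/\kappa)=\rho(1-\kappa)/\kappa>0$, whereas the paper uses Vieta's product $x^1_{\mathrm{lo}}x^1_{\mathrm{hi}}=p_{01}/\kappa$ together with $x^1_{\mathrm{lo}}>p_{01}$, and you verify the discriminant identity explicitly where the paper only states it.
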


The proof is deferred to Appendix~\ref{app:phi_props_proofs}.

A useful identity involving these fixed points follows from Vieta's formulas:
\[
x_{\mathrm{lo}}^1+x_{\mathrm{hi}}^1=\frac{1-\rho+\kappa p_{11}}{\kappa},
\qquad
x_{\mathrm{lo}}^1x_{\mathrm{hi}}^1=\frac{p_{01}}{\kappa}.
\]
Therefore,
\begin{equation}
\label{eq:fixedpoint_identity}
(1-\kappa x_{\mathrm{lo}}^1)(1-\kappa x_{\mathrm{hi}}^1)=\rho(1-\kappa),
\end{equation}
and, letting \(x^1\triangleq x_{\mathrm{lo}}^1\),
\begin{equation}
\label{eq:mu_identity}
(\phi^1)'(x^1)
=\frac{\rho(1-\kappa)}{(1-\kappa x^1)^2}
=\frac{1-\kappa x_{\mathrm{hi}}^1}{1-\kappa x^1}.
\end{equation}

\begin{lemma}[Ordering of the active and passive fixed points]
\label{lem:phi_order_refined}
We have \(x^1<x^0\).
\end{lemma}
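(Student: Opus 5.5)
The plan is a sign argument: evaluate the quadratic defining the fixed points of \(\phi^1\) at the point \(x^0\) and read off where \(x^0\) sits relative to the two roots. Write \(q(x)\triangleq \kappa x^2-(1-\rho+\kappa p_{11})x+p_{01}\), the left-hand side of \eqref{eq:phi1xxqe}, whose roots are \(x_{\mathrm{lo}}^1<x_{\mathrm{hi}}^1\) by Lemma~\ref{lem:phi1_fixed_point_location}. Since \(\kappa>0\), \(q\) is an upward parabola, so \(q(x)<0\) exactly on \((x_{\mathrm{lo}}^1,x_{\mathrm{hi}}^1)\). It therefore suffices to show \(q(x^0)<0\). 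Given that, \(x^0\in(x_{\mathrm{lo}}^1,x_{\mathrm{hi}}^1)\), and in particular \(x^1=x_{\mathrm{lo}}^1<x^0\).

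To compute \(q(x^0)\), I will use \(p_{01}=(1-\rho)x^0\), which follows from \(x^0=p_{01}/(1-\rho)\). Substituting gives
\[
q(x^0)=\kappa (x^0)^2-(1-\rho)x^0-\kappa p_{11}x^0+(1-\rho)x^0=\kappa x^0\,(x^0-p_{11}).
\]
We already know \(x^0\in(p_{01},p_{11})\) and \(x^0>0\), so \(q(x^0)<0\) and the conclusion follows.

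An equivalent route avoids the quadratic altogether. Since \(\phi^1(x)<\phi^0(x)\) for \(0<x<1\) by Remark~\ref{rem:phi01x}(i), and \(x^0\in(0,1)\), we get \(\phi^1(x^0)<\phi^0(x^0)=x^0\). The function \(\phi^1(x)-x\) is continuous on \([0,1]\), positive at \(0\) because \(\phi^1(0)=p_{01}>0\), and has the unique zero \(x^1\) there. Hence it is positive on \([0,x^1)\) and negative on \((x^1,1]\). Negativity at \(x^0\) then forces \(x^0>x^1\). I would present the algebraic version as the main argument and the monotone one as a remark.

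The only real obstacle is confirming \(x^0<p_{11}\), and this is immediate: \(p_{11}-x^0=\bigl(p_{11}(1-\rho)-p_{01}\bigr)/(1-\rho)=p_{11}\bigl(1-\rho-p_{01}/p_{11}\bigr)/(1-\rho)\). Alternatively, \(p_{11}-p_{01}-\rho p_{11}=\rho(1-p_{11})=\rho p_{10}>0\), which gives \(x^0<p_{11}\) directly.
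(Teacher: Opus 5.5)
Your proof is correct and is essentially the paper's argument: both rest on the identity $\kappa x^2-(1-\rho+\kappa p_{11})x+p_{01}=(1-\rho)(x^0-x)-\kappa x(p_{11}-x)$. The paper evaluates it at the root $x^1$, using $x^1<p_{11}$ from Lemma~\ref{lem:phi1_fixed_point_location}, to get $x^0-x^1=\kappa x^1(p_{11}-x^1)/(1-\rho)>0$; you evaluate it at $x^0$, using $x^0<p_{11}$ and the sign pattern of the upward parabola. Your alternative route via Remark~\ref{rem:phi01x}(i) is also valid, and more direct if you note that $\phi^1(x)-x$ is positive at $0$ and negative at $x^0$, so its unique zero $x^1$ in $\mathcal X$ lies in $(0,x^0)$; the claim that it is negative on all of $(x^1,1]$ is then unnecessary.
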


The proof is deferred to Appendix~\ref{app:phi_props_proofs}.

Lemmas~\ref{lem:phi1_fixed_point_location} and~\ref{lem:phi_order_refined} verify
Assumption~\ref{ass:DSA2_generic}\textup{(ii)} for \(h^0=\phi^0\) and \(h^1=\phi^1\).

We next record the Möbius structure of the active no-ACK update. The map \(\phi^1\) is a nondegenerate Möbius (fractional-linear) map, i.e.,
\(\phi^1(x)=(ax+b)/(cx+d)\) with \(ad-bc\neq 0\); see, e.g., \citep{ahlfors79,beardon00}. In particular,
\begin{equation}
\label{eq:phi1_mobius}
\phi^1(x)
=
p_{01} + \rho \,\frac{(1-\kappa)x}{1-\kappa x}
=
\frac{\bigl(\rho(1-\kappa)-\kappa p_{01}\bigr)x+ p_{01}}{1-\kappa x},
\qquad x \in \mathcal{X}.
\end{equation}
We use the natural extension of \(\phi^1\) to \([0,1/\kappa)\) to obtain closed forms for the iterates \(\phi_t^1\).

The following standard conjugacy identity for Möbius maps will be useful. Let \(\Phi(x)\triangleq (ax+b)/(cx+d)\) be a nondegenerate Möbius map with two distinct fixed points \(x_-<x_+\), and define
\begin{equation}
\label{eq:psi_pm_def}
\Psi(x)\triangleq \frac{x-x_-}{x-x_+},\qquad x\neq x_+.
\end{equation}
Then, with \(\mu\triangleq \Phi'(x_-)\neq 0\),
\begin{equation}
\label{eq:mobius_conjugacy}
\Psi(\Phi(x))=\mu\,\Psi(x)\qquad (x\neq x_+).
\end{equation}
Defining \(\Phi_0(x)\triangleq x\) and \(\Phi_t\triangleq \Phi\circ \Phi_{t-1}\) for \(t\ge1\), we obtain
\[
\Psi(\Phi_t(x))=\mu^t\Psi(x)\qquad (t\ge0,\ x\neq x_+).
\]
Moreover, \(\Psi'(x)=(x_- - x_+)/(x-x_+)^2<0\), so \(\Psi\) is decreasing on \(\mathbb{R}\setminus\{x_+\}\) and hence
invertible there, with
\[
\Psi^{-1}(y)=\frac{y\,x_+-x_-}{y-1}\qquad (y\neq 1).
\]
Therefore,
\begin{equation}
\label{eq:Phitx}
\Phi_t(x)=\Psi^{-1}\!\bigl(\mu^t\Psi(x)\bigr)
=\frac{\mu^t\Psi(x)\,x_+ - x_-}{\mu^t\Psi(x)-1},
\qquad t\ge0,\ x\neq x_+.
\end{equation}

Specializing now to \(\Phi=\phi^1\), the fixed points are \(x_-=x^1\) and \(x_+=x_{\mathrm{hi}}^1\) by Lemma~\ref{lem:phi1_fixed_point_location}. Thus,
\begin{equation}
\label{eq:Psix}
\Psi(x)\triangleq \frac{x-x^1}{x-x_{\mathrm{hi}}^1},
\qquad x\neq x_{\mathrm{hi}}^1.
\end{equation}
Since \(x_{\mathrm{hi}}^1>1\), \(\Psi\) is well defined on \(\mathcal X\).

\begin{lemma}[Closed form and monotone behavior of \(\phi_t^{1}(x)\)]
\label{lem:phi1_iterates}
The following holds:
\begin{enumerate}[label=\textup{(\alph*)},leftmargin=2em]
\item For every \(t\ge1\), the map \(x\mapsto \phi_t^{1}(x)\) is increasing.
\item For all \(x \in \mathcal{X}\),
\begin{align}
\label{eq:phi1_minus_x_factor}
\phi^1(x)-x
&=
-\frac{\kappa\,(x-x^1)(x_{\mathrm{hi}}^1-x)}{1-\kappa x},\\
\label{eq:phi1_minus_phiinfty_factor}
\phi^1(x)-x^1
&=
\frac{\rho (1-\kappa)}{1-\kappa x^1}\, \frac{x-x^1}{1-\kappa x}.
\end{align}
Consequently,
\[
x<x^1\ \Longleftrightarrow\ x<\phi^1(x)<x^1,
\qquad
x>x^1\ \Longleftrightarrow\ x^1<\phi^1(x)<x,
\]
and \(\bigl(\phi_t^{1}(x)\bigr)_{t\ge0}\) is strictly monotone in \(t\) and converges to \(x^1\) for \(x\neq x^1\).
\item Let
\begin{equation}
\label{eq:mu_def}
\mu\triangleq (\phi^1)'(x^1)
=\frac{\rho(1-\kappa)}{(1-\kappa x^1)^2}
=\frac{1-\kappa x_{\mathrm{hi}}^1}{1-\kappa x^1}
\in(0,1).
\end{equation}
Then for all \(t\ge0\) and \(x \in \mathcal{X}\),
\begin{equation}
\label{eq:phi1_iter_closed}
\phi_t^{1}(x)
=
\frac{x^1(x-x_{\mathrm{hi}}^1)-x_{\mathrm{hi}}^1\,\mu^t(x-x^1)}{(x-x_{\mathrm{hi}}^1)-\mu^t(x-x^1)}
= x^1 +
\frac{\mu^t (x_{\mathrm{hi}}^1-x^1)\,(x-x^1)}
{(x_{\mathrm{hi}}^1-x^1)+(\mu^t-1)\,(x-x^1)},
\end{equation}
and \(\phi_t^{1}(x)\to x^1\) as \(t\to\infty\) with geometric rate \(\mu\).
\end{enumerate}
\end{lemma}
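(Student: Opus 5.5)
Part (b) comes first, since (a) and (c) rest on it. For \eqref{eq:phi1_minus_x_factor} I will use the Möbius form \eqref{eq:phi1_mobius}:
\[
\phi^1(x)-x=\frac{\kappa x^2-(1-\rho+\kappa p_{11})x+p_{01}}{1-\kappa x},
\]
after checking that \(\rho(1-\kappa)-\kappa p_{01}-1=-(1-\rho+\kappa p_{11})\), which holds because \(p_{11}=p_{01}+\rho\). The numerator is the quadratic \eqref{eq:phi1xxqe}, so by Lemma~\ref{lem:phi1_fixed_point_location} it factors as \(\kappa(x-x^1)(x-x_{\mathrm{hi}}^1)\), which gives the stated formula. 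For \eqref{eq:phi1_minus_phiinfty_factor}, since \(x^1=\phi^1(x^1)\),
\[
\phi^1(x)-x^1=\rho\bigl(\psi(x)-\psi(x^1)\bigr)=\rho(1-\kappa)\,\frac{x-x^1}{(1-\kappa x)(1-\kappa x^1)},
\]
using the elementary identity \(x(1-\kappa y)-y(1-\kappa x)=x-y\).

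On \(\mathcal X\) we have \(1-\kappa x>0\) and \(x_{\mathrm{hi}}^1-x>0\), because \(x_{\mathrm{hi}}^1>1\). The two factorizations then show that \(\phi^1(x)-x\) has the sign of \(x^1-x\) and \(\phi^1(x)-x^1\) has the sign of \(x-x^1\). This gives both equivalences. By induction, the orbit stays on the same side of \(x^1\) and moves monotonically toward it. It is bounded, so it converges to a fixed point of the continuous map \(\phi^1\) in \(\mathcal X\), and by Lemma~\ref{lem:phi1_fixed_point_location} that fixed point must be \(x^1\).

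For (a), Remark~\ref{rem:phi01x}(ii) gives that \(\phi^1\) is increasing on \(\mathcal X\). Remark~\ref{rem:phi01x}(iii) together with \(\phi^1(\mathcal X)\subseteq\mathcal X\) (easy, since \(\psi(\mathcal X)\subseteq\mathcal X\)) keeps all iterates in \(\mathcal X\). A composition of increasing maps is increasing, which proves (a).

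For (c), I start from the expression for \(\mu\) obtained by differentiating \(\psi\): \((\phi^1)'(x)=\rho(1-\kappa)/(1-\kappa x)^2\). The second expression in \eqref{eq:mu_def} is \eqref{eq:mu_identity}. To show \(\mu\in(0,1)\), I use that expression: \(1-\kappa x_{\mathrm{hi}}^1>0\) since \(x_{\mathrm{hi}}^1<1/\kappa\), and \(1-\kappa x_{\mathrm{hi}}^1<1-\kappa x^1\) since \(x^1<x_{\mathrm{hi}}^1\). The closed form then follows from the general conjugacy formula \eqref{eq:Phitx} with \(x_-=x^1\) and \(x_+=x_{\mathrm{hi}}^1\); this is valid on \(\mathcal X\) because \(x_{\mathrm{hi}}^1\notin\mathcal X\). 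Clearing the denominator \(x-x_{\mathrm{hi}}^1\) gives the first form. Subtracting \(x^1\) and simplifying the numerator \(-\mu^t(x-x^1)(x_{\mathrm{hi}}^1-x^1)\) over the denominator, then writing \(x-x_{\mathrm{hi}}^1=-(x_{\mathrm{hi}}^1-x^1)+(x-x^1)\), gives the second form. Geometric convergence at rate \(\mu\) is then immediate: the denominator tends to \(x_{\mathrm{hi}}^1-x\), which is positive and bounded away from zero. One also needs the denominator never to vanish for \(t\ge0\). This holds because it equals \((x_{\mathrm{hi}}^1-x)+\mu^t(x-x^1)\), a convex-type combination in which both terms are positive when \(x\ge x^1\); when \(x<x^1\) it is at least \(x_{\mathrm{hi}}^1-x^1>0\).

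The main obstacle is only bookkeeping: the coefficient identity in the quadratic and the sign and denominator checks in the closed form. The argument itself is routine.
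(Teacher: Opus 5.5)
Your proposal is correct and follows essentially the same route as the paper: you factor \(\phi^1(x)-x\) through the Möbius form and the quadratic \eqref{eq:phi1xxqe}, get monotonicity of the iterates by composing increasing maps, and obtain the closed form from the conjugacy formula \eqref{eq:Phitx}. Your explicit checks that \(\mu\in(0,1)\) (via \(x^1<x_{\mathrm{hi}}^1<1/\kappa\)) and that the denominator of the closed form never vanishes fill in details the paper's proof leaves implicit.
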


The proof is deferred to Appendix~\ref{app:phi_props_proofs}.

Finally, we address contractiveness. The passive map \(\phi^0\) is contractive on \(\mathcal X\):
\[
|\phi^0(x)-\phi^0(y)|=\rho\,|x-y|<|x-y|\qquad (x\neq y),
\]
verifying Assumption~\ref{ass:DSA2_generic}\textup{(iii)} for \(h^0=\phi^0\). The active no-ACK map \(\phi^1\) need not be
contractive on \(\mathcal X\) in the belief variable when \(\kappa\) is close to \(1\) (indeed, \(\sup_{x\in\mathcal X}(\phi^1)'(x)\)
may exceed \(1\)). We therefore use an increasing change of variables to log-odds to ensure contractiveness.

\subsubsection{Conjugacy to increasing contractions}
\label{s:logit_contraction}

The next lemma shows that our belief-update maps satisfy the contractiveness part of
Assumption~\ref{ass:DSA2_generic}\textup{(iii)} after an increasing change of variables, in the sense used in
\citet{danceSi19}.

\begin{lemma}[Logit conjugacy yields increasing contractions]
\label{lem:logit_conjugacy_A2}
Let \(\vartheta:(0,1)\to\mathbb R\) be the logit map
\[
\vartheta(x)\triangleq \log\frac{x}{1-x},
\qquad
\vartheta^{-1}(t)=\frac{e^t}{1+e^t}.
\]
For \(a\in\{0,1\}\), define the conjugated maps \(\hat \phi^{a}\triangleq \vartheta\circ \phi^{a}\circ \vartheta^{-1}\) on \(\mathbb R\).
Then the following hold:
\begin{enumerate}[label=\textup{(\alph*)},leftmargin=2.0em]
\item \(\hat \phi^{0}\) and \(\hat \phi^{1}\) are increasing and contractive on \(\mathbb R\);
\item each \(\hat \phi^{a}\) has the unique fixed point \(\hat x^a=\vartheta(x^a)\);
\item \(\hat x^1<\hat x^0\).
\end{enumerate}
\end{lemma}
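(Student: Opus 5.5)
The plan is to compute everything in log-odds coordinates, where the NACK update \(\psi\) becomes a pure translation. That reduces part (a) to a single inequality for the affine passive map \(\phi^0\). First a well-definedness check: by Remark~\ref{rem:phi01x}(iii), each \(\phi^a\) maps \((0,1)\) into \(\widetilde{\mathcal X}=[p_{01},p_{11}]\subset(0,1)\). Hence \(\hat\phi^a=\vartheta\circ\phi^a\circ\vartheta^{-1}\) is a well-defined map \(\mathbb R\to\mathbb R\).

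\emph{Step 1 (the NACK update is an isometry).} From \eqref{eq:alpha_def} we have \(1-\psi(x)=(1-x)/(1-\kappa x)\), so
\[
\frac{\psi(x)}{1-\psi(x)}=(1-\kappa)\frac{x}{1-x}.
\]
Therefore \(\hat\psi\triangleq\vartheta\circ\psi\circ\vartheta^{-1}\) is \(\hat\psi(u)=u+\log(1-\kappa)\), an increasing isometry of \(\mathbb R\). Since \(\phi^1=\phi^0\circ\psi\), we get \(\hat\phi^1=\hat\phi^0\circ\hat\psi\). So once \(\hat\phi^0\) is shown to be an increasing strict contraction, the same follows immediately for \(\hat\phi^1\): for \(u\neq v\), \(|\hat\phi^1(u)-\hat\phi^1(v)|<|\hat\psi(u)-\hat\psi(v)|=|u-v|\).

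\emph{Step 2 (the passive map is a contraction).} By the chain rule, with \(x=\vartheta^{-1}(u)\) and \(y=\phi^0(x)\),
\[
(\hat\phi^0)'(u)=\frac{\rho\,x(1-x)}{y(1-y)}>0 .
\]
I will show \((\hat\phi^0)'(u)<1\) for every \(u\). The mean value theorem then gives strict contraction for distinct \(u,v\), since the derivative is evaluated at an interior point. This is the main computational step.

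To do it, write \(y=p_{01}+\rho x\) and \(1-y=p_{10}+\rho(1-x)\), and expand:
\[
y(1-y)=p_{01}p_{10}+p_{01}\rho(1-x)+p_{10}\rho x+\rho^2x(1-x).
\]
The target is \(y(1-y)>\rho x(1-x)\). Using \(1-\rho=p_{01}+p_{10}\), the required excess is
\[
\rho(1-\rho)x(1-x)=p_{01}\rho x(1-x)+p_{10}\rho x(1-x).
\]
This excess is dominated term by term:
\begin{itemize}
\item \(p_{01}\rho(1-x)\ge p_{01}\rho x(1-x)\);
\item \(p_{10}\rho x\ge p_{10}\rho x(1-x)\);
\item the leftover term \(p_{01}p_{10}>0\) makes the inequality strict.
\end{itemize}

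\emph{Step 3 (fixed points and ordering).} For (b), \(u\) is a fixed point of \(\hat\phi^a\) iff \(\vartheta^{-1}(u)\in(0,1)\) is a fixed point of \(\phi^a\). By the closed form \eqref{eq:phi0_iter} and by Lemma~\ref{lem:phi1_fixed_point_location}, \(\phi^a\) has the unique fixed point \(x^a\) in \(\mathcal X\), and \(x^a\in(0,1)\). So \(\hat x^a=\vartheta(x^a)\) exists and is unique; uniqueness also follows directly from strict contractiveness. For (c), \(\vartheta\) is increasing, so Lemma~\ref{lem:phi_order_refined} (\(x^1<x^0\)) gives \(\hat x^1<\hat x^0\).

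The only genuinely nontrivial point is the inequality in Step 2. Once the factorization \(\phi^1=\phi^0\circ\psi\) is seen in log-odds coordinates, the apparent difficulty that \(\phi^1\) itself can have slope exceeding \(1\) in the belief variable disappears.
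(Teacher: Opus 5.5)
Your proof is correct, and for the active map it takes a genuinely different route from the paper. The paper handles \(\phi^0\) and \(\phi^1\) separately through the derivative formula \((\hat\phi^a)'(t)=(\phi^a)'(x)\,x(1-x)/[\phi^a(x)(1-\phi^a(x))]\). For \(a=0\) it certifies positivity of \(p_{01}(1-p_{01})-2p_{01}\rho x+\rho(1-\rho)x^2\) by its negative discriminant. For \(a=1\) it asserts, without displaying the algebra, that the analogous numerator is \(Q(x)/(1-\kappa x)^2\), with \(Q(0)>0\) and discriminant \(-4\rho(1-\kappa)^2p_{01}p_{10}<0\).

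You instead factor \(\phi^1=\phi^0\circ\psi\) and note that the NACK update is the translation \(u\mapsto u+\log(1-\kappa)\) in log-odds. Hence \(\hat\phi^1=\hat\phi^0\circ\hat\psi\) inherits monotonicity and strict contractivity from \(\hat\phi^0\) with no further computation. In fact \((\hat\phi^1)'(u)=(\hat\phi^0)'(u+\log(1-\kappa))\). This also explains the paper's \(Q\): it is \((1-\kappa x)^2\) times the \(a=0\) quadratic evaluated at \(\psi(x)\), which is where the extra factor \((1-\kappa)^2\) in its discriminant comes from. For \(\phi^0\), your identity \(y(1-y)-\rho x(1-x)=p_{01}p_{10}+p_{01}\rho(1-x)^2+p_{10}\rho x^2\) is the same quadratic as the paper's, written as a sum of nonnegative terms rather than certified by a discriminant.

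What your route buys:
\begin{itemize}
\item It is shorter and removes the one calculation the paper leaves to the reader.
\item It shows the two conjugated maps have the same Lipschitz constant on \(\mathbb R\).
\item It carries over to any binary hidden-Markov belief model with action-independent transitions, since each observation-conditioned update is a Bayes correction (a log-odds shift) followed by the prediction step \(\phi^0\).
\end{itemize}
The paper's route is more mechanical, but it does not require spotting the factorization.

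One small citation point: well-definedness of \(\hat\phi^a\) needs \(\phi^a(\mathcal X)\subseteq[p_{01},p_{11}]\). That follows directly from the formulas, not from the forward-invariance statement of Remark~\ref{rem:phi01x}(iii) as written.
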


The proof is deferred to Appendix~\ref{app:phi_props_proofs}.

\subsection{No-ACK skeleton dynamics and survival probabilities}
\label{s:skeleton_survival}

A central ingredient in our PCL-indexability analysis is an explicit description of the belief trajectories induced by threshold policies, since the performance metrics \(F\), \(G\), \(f\), and \(g\) can be expressed in terms of deterministic pre-ACK evolution together with ACK-triggered restarts. Fix a threshold \(z\in\mathbb{R}\) and an initial belief \(x\in\mathcal X\). Under the \(z\)-threshold policy, the action at time \(t\) is \(A(t)=\1_{\{X(t)>z\}}\). The resulting belief and action sequences,
\[
\orbit(x,z)\triangleq (X(t))_{t\ge0},
\qquad
\sigma(x,z)\triangleq (A(t))_{t\ge0},
\]
are called, respectively, the \emph{orbit} and \emph{itinerary}. Their evolution combines deterministic updates via \(\phi^0\) and \(\phi^1\) between ACK times with random resets to \(p_{11}\) upon an ACK. We next isolate the associated deterministic no-ACK skeleton, generated by a \emph{map-with-a-gap}, together with the corresponding no-ACK survival probabilities \(\Gamma_t(x,z)\).

\subsubsection{The no-ACK skeleton as an iterated map-with-a-gap}
\label{s:noackskimwag}

To separate deterministic evolution from random ACK resets, define the \emph{no-ACK skeleton} map
\(\varphi(\cdot,z):\mathcal X\to\mathcal X\) under the \(z\)-threshold policy by
\[
\varphi(x,z)\triangleq
\begin{cases}
\phi^1(x), & x>z,\\[1mm]
\phi^0(x), & x\le z,
\end{cases}
\]
which is generally discontinuous at \(z\); see Remark~\ref{rem:phi01x}\textup{(i)}. For each fixed \(z\), define its iterates by
\begin{equation}
\label{eq:varphi_iterates_def}
\varphi_0(x,z)\triangleq x,\qquad
\varphi_t(x,z)\triangleq \varphi\!\bigl(\varphi_{t-1}(x,z),z\bigr),\qquad t\ge1.
\end{equation}
The associated deterministic skeleton orbit and itinerary are
\begin{equation}
\label{eq:skeleton_orbit_itinerary_via_iterates}
\widetilde{\orbit}(x,z)\triangleq(\widetilde X_t(x,z))_{t\ge0},\ \ \widetilde X_t(x,z)\triangleq \varphi_t(x,z),
\qquad
\widetilde{\sigma}(x,z)\triangleq(\widetilde A_t(x,z))_{t\ge0},\ \ \widetilde A_t(x,z)\triangleq \1_{\{\widetilde X_t(x,z)>z\}}.
\end{equation}
Thus \((\widetilde X_t,\widetilde A_t)\) is the trajectory obtained by iterating the map-with-a-gap under the convention that no ACK is ever observed. The actual stochastic trajectory agrees with this skeleton up to the first ACK time; when an ACK occurs, the belief resets to \(p_{11}\) and the same deterministic rule restarts from that new initial state.

We write \(\varphi_t(x,z)\) when emphasizing the iterated map, and \(\widetilde X_t(x,z)\) and \(\widetilde A_t(x,z)\) when emphasizing the resulting deterministic skeleton state trajectory and itinerary.

\subsubsection{Pre-ACK representation and survival probabilities}
\label{s:preackrepsurvp}

Fix \((x,z)\) and let
\[
\tau^{\mathrm{ack}}\triangleq \inf\{t\ge0:\ \text{an ACK is observed at time }t\},
\]
with the convention \(\tau^{\mathrm{ack}}=\infty\) if no ACK is ever observed. On the event \(\{\tau^{\mathrm{ack}}\ge t\}\), that is, when no ACK occurs in periods \(0,\ldots,t-1\), the stochastic trajectory agrees with the skeleton up to and including time \(t\):
\begin{equation}
\label{eq:traj_equals_skeleton_preACK}
\{\tau^{\mathrm{ack}}\ge t\}\subseteq \bigl\{(X(j),A(j))=(\widetilde X_j(x,z),\widetilde A_j(x,z)),\ j=0,\ldots,t\bigr\}.
\end{equation}
Define the \emph{no-ACK survival probabilities} by
\begin{equation}
\label{eq:Gammatxz}
\Gamma_t(x,z)\triangleq \Prob_x^z\{\tau^{\mathrm{ack}}\ge t\},\qquad t\ge0.
\end{equation}
Since, conditional on \(\{\tau^{\mathrm{ack}}\ge t-1\}\), an ACK occurs at time \(t-1\) with probability
\[
\kappa\,\widetilde X_{t-1}(x,z)\,\widetilde A_{t-1}(x,z),
\]
we obtain the recursion
\[
\Gamma_0(x,z)=1,\qquad
\Gamma_t(x,z)=
\Gamma_{t-1}(x,z)\Bigl(1-\kappa\,\widetilde X_{t-1}(x,z)\,\widetilde A_{t-1}(x,z)\Bigr),
\qquad t\ge1,
\]
and hence the product representation
\begin{equation}
\label{eq:GammatxzProd}
\Gamma_t(x,z)
=\prod_{j=0}^{t-1}\Bigl(1-\kappa\,\widetilde X_j(x,z)\,\widetilde A_j(x,z)\Bigr)
=\prod_{0\le j<t:\ \widetilde X_j(x,z)>z}\bigl(1-\kappa\,\widetilde X_j(x,z)\bigr),
\qquad t\ge1.
\end{equation}
Finally, let
$
\Gamma_\infty(x,z)\triangleq \lim_{t\to\infty}\Gamma_t(x,z)=\Prob_x^z\{\tau^{\mathrm{ack}}=\infty\},
$
where the limit exists because \((\Gamma_t(x,z))_{t\ge0}\) is nonincreasing and bounded below by \(0\).
These deterministic quantities will be used below to derive renewal formulas for \(F(x,z)\) and \(G(x,z)\) in the intermediate threshold regimes.

The \(z\)-threshold policy activates at time \(t\) if and only if \(X(t)>z\), and its no-ACK skeleton is generated by \(\varphi(\cdot,z)\) above. It is also convenient to consider the \emph{\(z^{-}\)-threshold policy}, which activates if and only if \(X(t)\ge z\). Its no-ACK skeleton is generated by the map \(\varphi(\cdot,z^-)\), defined by
\[
\varphi(x,z^-)\triangleq
\begin{cases}
\phi^1(x), & x\ge z,\\[1mm]
\phi^0(x), & x< z,
\end{cases}
\qquad x\in\mathcal X,
\]
and iterated as in \eqref{eq:varphi_iterates_def}. The notation \(\varphi(x,z^-)\) reflects the fact that, for each fixed \(x\in\mathcal X\),
$
\varphi(x,z^-)=\lim_{\zeta \nearrow z}\varphi(x,\zeta),
$
that is, it is the left limit of the map \(\zeta\mapsto \varphi(x,\zeta)\) at threshold \(z\). We write
\(\widetilde{\sigma}(x,z^-)\) for the resulting skeleton itinerary.

\subsection{Renewal decompositions for \(F(x,z)\) and \(G(x,z)\)}
\label{s:renewal_FG}

We next derive renewal-type identities expressing the threshold-policy reward and work metrics \(F(x,z)\) and \(G(x,z)\) in terms of deterministic \emph{pre-ACK} quantities computed along the no-ACK skeleton and the restart state \(p_{11}\) induced by an ACK. The key observation is that, under a threshold policy, the stochastic trajectory coincides with the skeleton up to the first ACK time, at which point the belief resets to \(p_{11}\) and the same evolution restarts from that state. This reduces the computation of \(F\) and \(G\) to two ingredients: discounted sums along the no-ACK skeleton, weighted by no-ACK survival probabilities, and a continuation term evaluated at the restart state. The resulting pre-ACK metrics admit convergent series representations that can be evaluated numerically by truncation with explicit a priori tail bounds, yielding controllable approximation errors and stable computation.

For any threshold \(z\in\R\), the reward and work metrics under the \(z\)-threshold policy (see \eqref{eq:FxGxpi}) are the unique bounded solutions of
\begin{equation}
\label{eq:fseveq}
F(x,z)=
\begin{cases}
r\kappa x+\beta \kappa x\,F(p_{11},z)+\beta(1-\kappa x)\,F(\phi^1(x),z), & x>z, \\[1mm]
\beta\,F(\phi^0(x),z), & x\le z,
\end{cases}
\end{equation}
and
\begin{equation}
\label{eq:gseveq}
G(x,z)=
\begin{cases}
1+\beta \kappa x\,G(p_{11},z)+\beta(1-\kappa x)\,G(\phi^1(x),z), & x>z, \\[1mm]
\beta\,G(\phi^0(x),z), & x\le z.
\end{cases}
\end{equation}

To isolate the pre-ACK contribution, define the discounted \emph{pre-ACK reward} and \emph{pre-ACK work} metrics by
\begin{equation}
\label{eq:tilde_series}
\widetilde F(x,z)\triangleq r\kappa\sum_{t=0}^{\infty}\beta^t\,\Gamma_t(x,z)\,\widetilde X_t(x,z)\,\widetilde A_t(x,z),
\qquad
\widetilde G(x,z)\triangleq \sum_{t=0}^{\infty}\beta^t\,\Gamma_t(x,z)\,\widetilde A_t(x,z).
\end{equation}
These quantities are well defined, since \(0\le \Gamma_t(x,z)\le 1\) and \(\widetilde A_t(x,z)\in\{0,1\}\), and they depend only on the no-ACK skeleton \((\widetilde X_t,\widetilde A_t)\) and the associated survival probabilities \(\Gamma_t(x,z)\).

For \(T\in\mathbb Z_+\), define the truncated approximations
\[
\widetilde F_T(x,z)\triangleq r\kappa\sum_{t=0}^{T-1}\beta^t\,\Gamma_t(x,z)\,\widetilde X_t(x,z)\,\widetilde A_t(x,z),
\qquad
\widetilde G_T(x,z)\triangleq \sum_{t=0}^{T-1}\beta^t\,\Gamma_t(x,z)\,\widetilde A_t(x,z).
\]
Since \(0\le \Gamma_t\le 1\), \(0\le \widetilde X_t\le 1\), and \(\widetilde A_t\in\{0,1\}\), the tails satisfy the uniform bounds
\begin{equation}
\label{eq:tilde_tail_bounds}
0\le \widetilde G(x,z)-\widetilde G_T(x,z)\le \sum_{t=T}^{\infty}\beta^t=\frac{\beta^T}{1-\beta},
\qquad
0\le \widetilde F(x,z)-\widetilde F_T(x,z)\le r\kappa\sum_{t=T}^{\infty}\beta^t=\frac{r\kappa\,\beta^T}{1-\beta}.
\end{equation}
Thus, given a tolerance \(\varepsilon>0\), choosing \(T\) so that \(\beta^T/(1-\beta)\le \varepsilon\) yields an a priori
\(\varepsilon\)-accurate evaluation of \(\widetilde G\), while choosing \(T\) so that
\(r\kappa\,\beta^T/(1-\beta)\le \varepsilon\) yields an a priori \(\varepsilon\)-accurate evaluation of \(\widetilde F\).

For later use, define the discounted transform of the first ACK time \(\tau^{\mathrm{ack}}\) by
\begin{equation}
\label{eq:tildeTheta_preACK_def}
\widetilde{\Theta}(x,z)\triangleq \Ex_x^z[\beta^{\tau^{\mathrm{ack}}}]
=\sum_{t=0}^{\infty}\beta^t\,\Prob_x^z\{\tau^{\mathrm{ack}}=t\},
\end{equation}
with the convention \(\beta^\infty\triangleq 0\).
Since \(\{\tau^{\mathrm{ack}}=t\}\) means that no ACK occurs before period \(t\) and an ACK occurs in period \(t\), we have
\[
\Prob_x^z\{\tau^{\mathrm{ack}}=t\}
=\Gamma_t(x,z)\,\kappa\,\widetilde X_t(x,z)\,\widetilde A_t(x,z),\qquad t\ge0.
\]
A key simplification, established in Lemma~\ref{lem:renewal_first_ACK}\textup{(a)}, is
\begin{equation}
\label{eq:ThetaF_relation_preACK}
\widetilde F(x,z)=r\,\widetilde{\Theta}(x,z).
\end{equation}
Hence the tail bound for \(\widetilde F\) in \eqref{eq:tilde_tail_bounds} immediately yields an a priori tail bound for
\(\widetilde{\Theta}\) via the identity \(\widetilde{\Theta}(x,z)=\widetilde F(x,z)/r\).

We now decompose \(F(x,z)\) and \(G(x,z)\) at the first ACK time. Let \(\tau^{\mathrm{ack}}\) be the first ACK time under the \(z\)-threshold policy, with
\(\tau^{\mathrm{ack}}=\infty\) if no ACK ever occurs. An ACK regenerates the process by resetting the belief to \(p_{11}\) at the next period, so \(F(x,z)\) and \(G(x,z)\) split into a pre-ACK contribution plus a discounted continuation value from the post-ACK state \(p_{11}\). The next lemma makes this precise.

\begin{lemma}[Renewal decomposition at the first ACK time]
\label{lem:renewal_first_ACK}
Fix \(z\in\R\) and \(x\in\mathcal X\). Then:
\begin{enumerate}[label=\textup{(\alph*)},leftmargin=*]
\item Pre-ACK representations via the no-ACK skeleton.
\[
\Ex_x^z\!\bigg[\sum_{t=0}^{\tau^{\mathrm{ack}}}\beta^t\,r\kappa\,X(t)A(t)\bigg]
= \widetilde F(x,z),
\qquad
\Ex_x^z\!\bigg[\sum_{t=0}^{\tau^{\mathrm{ack}}}\beta^t\,A(t)\bigg]
= \widetilde G(x,z),
\qquad
\widetilde{\Theta}(x,z)=\frac{1}{r}\,\widetilde F(x,z).
\]

\item Renewal decomposition.
\[
F(x,z)=\widetilde F(x,z)+\beta\,\widetilde{\Theta}(x,z)\,F(p_{11},z),
\qquad
G(x,z)=\widetilde G(x,z)+\beta\,\widetilde{\Theta}(x,z)\,G(p_{11},z).
\]

\item Post-ACK fixed-point evaluation.
We have \(0\le \beta\,\widetilde{\Theta}(p_{11},z)<1\), and
\begin{equation}
\label{eq:FGrho_closed}
F(p_{11},z)=\frac{\widetilde F(p_{11},z)}{1-\beta\,\widetilde{\Theta}(p_{11},z)}
=\frac{r\,\widetilde{\Theta}(p_{11},z)}{1-\beta\,\widetilde{\Theta}(p_{11},z)},
\qquad
G(p_{11},z)=\frac{\widetilde G(p_{11},z)}{1-\beta\,\widetilde{\Theta}(p_{11},z)}.
\end{equation}
\end{enumerate}
\end{lemma}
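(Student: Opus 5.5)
The plan is a first-step decomposition along the no-ACK skeleton, followed by a regeneration argument at \(\tau^{\mathrm{ack}}\) via the strong Markov property of the belief chain under the stationary \(z\)-threshold policy.

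For part (a), I will write each pre-ACK sum as \(\sum_{t\ge0}\beta^t\,\1_{\{\tau^{\mathrm{ack}}\ge t\}}(\cdot)\). The sum \(\sum_{t=0}^{\tau^{\mathrm{ack}}}\) includes the ACK period itself, since \(\{t\le\tau^{\mathrm{ack}}\}=\{\tau^{\mathrm{ack}}\ge t\}\). On \(\{\tau^{\mathrm{ack}}\ge t\}\), the inclusion \eqref{eq:traj_equals_skeleton_preACK} lets me replace \((X(t),A(t))\) by the deterministic pair \((\widetilde X_t,\widetilde A_t)\). The expectation of the indicator is \(\Gamma_t(x,z)\) by \eqref{eq:Gammatxz}, and Tonelli (all terms are nonnegative) justifies the interchange. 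This gives \(\widetilde F\) and \(\widetilde G\) as in \eqref{eq:tilde_series}. For the \(\widetilde\Theta\) identity, I will use \(\Prob_x^z\{\tau^{\mathrm{ack}}=t\}=\Gamma_t\kappa\widetilde X_t\widetilde A_t\) in \eqref{eq:tildeTheta_preACK_def}. Comparing term by term with \(\widetilde F/r=\kappa\sum_t\beta^t\Gamma_t\widetilde X_t\widetilde A_t\) shows the two series are identical, so \(\widetilde F=r\widetilde\Theta\).

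For part (b), I will split the full discounted sum defining \(F(x,z)\) into \(t\le\tau^{\mathrm{ack}}\) and \(t>\tau^{\mathrm{ack}}\); the \(\tau^{\mathrm{ack}}=\infty\) event contributes nothing to the second piece. On \(\{\tau^{\mathrm{ack}}<\infty\}\) we have \(X(\tau^{\mathrm{ack}}+1)=p_{11}\). Because the \(z\)-threshold policy is stationary and Markov in the belief, the strong Markov property at the stopping time \(\tau^{\mathrm{ack}}+1\) gives
\[\Ex_x^z\!\Big[\textstyle\sum_{t>\tau^{\mathrm{ack}}}\beta^t R(X(t),A(t))\Big]=\Ex_x^z\big[\beta^{\tau^{\mathrm{ack}}+1}\big]F(p_{11},z)=\beta\widetilde\Theta(x,z)F(p_{11},z).\]
The same argument with \(A(t)\) in place of \(R\) handles \(G\). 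Boundedness of all quantities, with sums at most \(\max(r,1)/(1-\beta)\), makes every step legitimate.

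For part (c), I will first show \(\widetilde\Theta(p_{11},z)\le 1\) trivially, and then strictly: \(\beta\widetilde\Theta\le\beta\Ex[\beta^{0}]=\beta<1\), since \(\beta^{\tau^{\mathrm{ack}}}\le 1\). Nonnegativity is obvious. Setting \(x=p_{11}\) in (b) gives a linear equation for \(F(p_{11},z)\), which is finite because it is bounded. Solving it, using \(1-\beta\widetilde\Theta>0\), yields \eqref{eq:FGrho_closed}, and substituting (a) gives the second form; \(G(p_{11},z)\) is obtained the same way.

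The only delicate point is the rigorous strong Markov/regeneration step in (b). This requires checking that \(\tau^{\mathrm{ack}}\) is a stopping time for the history filtration including the feedback \(K(t)\), and that the post-reset conditional law is that of the chain started at \(p_{11}\) under the same policy. Stationarity of the threshold rule gives the latter. Alternatively, I could verify that the right-hand sides of (b) satisfy \eqref{eq:fseveq}–\eqref{eq:gseveq}, using a one-step recursion for \(\widetilde F,\widetilde G,\widetilde\Theta\), and then invoke the uniqueness of bounded solutions. I would use this second route as a backup if the measure-theoretic details become cumbersome.
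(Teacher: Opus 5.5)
Your proposal is correct and follows essentially the same route as the paper: you use the indicator/skeleton representation with $\Gamma_t(x,z)$ and Tonelli for (a), the split at $\tau^{\mathrm{ack}}$ with the strong Markov property at $\tau^{\mathrm{ack}}+1$ for (b), and you solve the $x=p_{11}$ case of (b) for (c). Your justification of $\beta\,\widetilde\Theta(p_{11},z)\le\beta<1$ via $\beta^{\tau^{\mathrm{ack}}}\le 1$ is more direct than the paper's, which infers the strict inequality from the finiteness of $F(p_{11},z)$ and $G(p_{11},z)$ after rearranging.
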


The proof is deferred to Appendix~\ref{app:renewal_proofs}.

Lemma~\ref{lem:renewal_first_ACK} expresses \(F(x,z)\) and \(G(x,z)\) in terms of the pre-ACK quantities
\(\widetilde F(x,z)\), \(\widetilde G(x,z)\), and the discounted transform \(\widetilde{\Theta}(x,z)\) of the first ACK time \(\tau^{\mathrm{ack}}\). In several threshold regimes these objects simplify substantially---for example, when \(\tau^{\mathrm{ack}}\) is a.s.\ finite, when \(\tau^{\mathrm{ack}}=\infty\) a.s., or when the no-ACK skeleton permits only finitely many activations---yielding especially transparent closed forms. We exploit these simplifications next.

\subsection{Tractable threshold regimes and regime-wise formulas}
\label{s:tractable_regimes}

The no-ACK skeleton and the renewal identities provide a general route to evaluating
\(F(x,z)\), \(G(x,z)\), and the marginal metrics \(f(x,z)\) and \(g(x,z)\).
Several threshold regimes, however, admit substantial simplifications.
In this subsection we classify the corresponding skeleton itineraries, determine the finiteness behavior of the first ACK time
\(\tau^{\mathrm{ack}}\), and record the resulting regime-wise formulas for \(F(x,z)\) and \(G(x,z)\).

We use the standard word notation for binary itineraries:
for \(a\in\{0,1\}\) and \(m\in\mathbb Z_+\), \(a^m\) denotes a block of \(m\) repeated symbols \(a\), and
\(a^\infty\) denotes the infinite constant sequence \((a,a,\ldots)\).
Concatenation is written by juxtaposition, so \(01^\infty\) denotes \((0,1,1,\ldots)\), and more generally
\(0^{t_0}1^\infty\) denotes \(t_0\) consecutive zeros followed by ones forever.

\begin{lemma}[Skeleton itineraries by threshold regime]
\label{lem:threshold_regime_dynamics}
Fix \(x\in\mathcal X\) and \(z\in\R\).
The no-ACK skeleton itinerary \(\widetilde{\sigma}(x,z)\) behaves as follows:
\begin{enumerate}[label=\textup{(\alph*)},leftmargin=*]
\item If \(z<p_{01}\), then
$
\widetilde{\sigma}(x,z)=
\begin{cases}
1^\infty, & x>z,\\
01^\infty, & x\le z.
\end{cases}
$

\item If \(p_{01}\le z\le x^1\), then, with \(t_0\triangleq \tau_{\uparrow}^0(x,z)\),
$
\widetilde{\sigma}(x,z)=0^{t_0}1^\infty.
$

\item If \(x^1<z<x^0\), then \(\widetilde{\sigma}(x,z)\) alternates indefinitely between finite blocks of \(1\)'s and finite blocks of \(0\)'s.

\item If \(x^0\le z<p_{11}\), then
$
\widetilde{\sigma}(x,z)=
\begin{cases}
0^\infty, & x\le z,\\
1^{t_1}0^\infty, & x>z,
\end{cases}
\qquad
t_1\triangleq \tau_{\downarrow}^1(x,z)\ \text{when }x>z.
$

\item If \(z\ge p_{11}\), then
$
\widetilde{\sigma}(x,z)=
\begin{cases}
0^\infty, & x\le z,\\
10^\infty, & x>z.
\end{cases}
$
\end{enumerate}
\end{lemma}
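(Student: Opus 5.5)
The plan is to argue directly from the skeleton map \(\varphi(\cdot,z)\), combining three facts established above. First, both maps send \(\mathcal X\) into \(\widetilde{\mathcal X}=[p_{01},p_{11}]\) (Remark~\ref{rem:phi01x}\textup{(iii)}); in fact \(\phi^a(x)\in[p_{01},p_{11}]\) for every \(x\in\mathcal X\), because \(\phi^0(x)=p_{01}+\rho x\) and \(\phi^1(x)=p_{01}+\rho\psi(x)\) with \(\psi(x)\in[0,1]\). Second, the passive iterates converge monotonically to \(x^0\) by \eqref{eq:phi0_iter}. Third, the active no-ACK iterates converge monotonically to \(x^1\) by Lemma~\ref{lem:phi1_iterates}\textup{(b)}, with \(x^1<x^0\) by Lemma~\ref{lem:phi_order_refined}. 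Each regime is then a matter of tracking which side of \(z\) the skeleton state lies on. Here \(\tau_\downarrow^1(x,z)\) is read as the first \(t\ge1\) with \(\phi_t^1(x)\le z\).

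\textbf{Regimes (a) and (e).} In (a), suppose \(z<p_{01}\). Every state from time \(1\) onward lies in \([p_{01},p_{11}]\), so it exceeds \(z\) and is active. The only freedom is at time \(0\), where the action is \(1\) if \(x>z\) and \(0\) otherwise. In (e), suppose \(z\ge p_{11}\). Every state from time \(1\) onward is at most \(p_{11}\le z\), so it is passive, giving \(0^\infty\) or \(10^\infty\).

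\textbf{Regime (b).} Assume \(p_{01}\le z\le x^1\) (which forces \(z<x^0\)). Starting with \(x\le z\), passive iterates increase toward \(x^0>z\), so they first exceed \(z\) at the finite time \(t_0=\tau_\uparrow^0(x,z)\). From then on the state \(y\) satisfies \(y>z\). The key invariance is that \(y>z\) implies \(\phi^1(y)>z\). If \(y\ge x^1\), monotonicity gives \(\phi^1(y)\ge\phi^1(x^1)=x^1\ge z\), and equality \(\phi^1(y)=z\) would need \(y=x^1=z\), which contradicts \(y>z\). If \(z<y<x^1\), Lemma~\ref{lem:phi1_iterates}\textup{(b)} gives \(\phi^1(y)>y>z\). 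Hence the itinerary is \(1^\infty\) after time \(t_0\), and the case \(x>z\) is simply \(t_0=0\).

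\textbf{Regime (d).} Assume \(x^0\le z<p_{11}\). If \(y\le z\), then \(\phi^0(y)\le \phi^0(z)\le z\), because \(z\ge x^0\) forces \(\phi^0(z)\le z\); so passivity is absorbing. If \(x>z\), the active iterates decrease toward \(x^1<x^0\le z\), so they drop to \(\le z\) at the finite time \(t_1\). After that passivity is absorbing, giving \(1^{t_1}0^\infty\).

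\textbf{Regime (c).} Assume \(x^1<z<x^0\). A run of \(0\)'s cannot be infinite: passive iterates from a point \(\le z\) increase to \(x^0>z\), so they cross \(z\) in finitely many steps. A run of \(1\)'s cannot be infinite either: active iterates from a point \(>z\) decrease to \(x^1<z\), so they fall to \(\le z\) in finitely many steps. It follows that both symbols occur infinitely often and the itinerary alternates between finite blocks. The itinerary may begin with either symbol.

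The main obstacle is modest. It lies in the boundary cases of (b) and (d), for example \(z=x^1\) or \(z=x^0\), where the invariance argument must handle the non-strict inequality of the threshold rule carefully. I would settle these cases with the strict monotonicity of \(\phi^0\) and \(\phi^1\), as in the argument for (b) above.
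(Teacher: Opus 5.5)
Your proposal is correct and follows essentially the same route as the paper's proof. Regimes (a) and (e) use forward invariance of \([p_{01},p_{11}]\), regimes (b)--(d) use monotone convergence of the passive and active iterates to \(x^0\) and \(x^1\) together with the trapping of \([0,z]\) under \(\phi^0\) when \(z\ge x^0\), and your explicit check that \(y>z\) implies \(\phi^1(y)>z\) in regime (b) (splitting into \(y\ge x^1\) and \(z<y<x^1\)) is a more careful rendering of the paper's remark on the boundary case \(z=x^1\).
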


The proof is deferred to Appendix~\ref{app:tractable_regimes_proofs}.

The itinerary classification above determines whether the first ACK time \(\tau^{\mathrm{ack}}\) is almost surely finite,
almost surely infinite, or defective, and hence governs the form of the renewal identities for \(F(x,z)\) and \(G(x,z)\).

\begin{lemma}[Finiteness regimes of \(\tau^{\mathrm{ack}}\)]
\label{lem:tauack_finite_alternating}
Fix \(z\in\R\) and \(x\in\mathcal X\), and let \(\tau^{\mathrm{ack}}\) be the first ACK time under the \(z\)-threshold policy. Then:
\begin{enumerate}[label=\textup{(\alph*)},leftmargin=*]
\item If \(z<x^0\), then \(\Prob_x^z\{\tau^{\mathrm{ack}}<\infty\}=1\).

\item If \(x^0\le z<p_{11}\), then the passive region \([0,z]\) is trapping:
\begin{itemize}
\item if \(x\le z\), then \(\tau^{\mathrm{ack}}=\infty\) a.s.;

\item if \(x>z\), then an ACK can occur only during the finite time window
\(0\le t<\tau_{\downarrow}^1(x,z)\), and
\begin{equation}
\label{eq:PTinfty_noreturn}
\Prob_x^z\{\tau^{\mathrm{ack}}=\infty\}
=\prod_{j=0}^{\tau_{\downarrow}^1(x,z)-1}\bigl(1-\kappa \phi_j^1(x)\bigr)\in(0,1).
\end{equation}
\end{itemize}

\item If \(z\ge p_{11}\), then activation is possible only at time \(t=0\), so
$
\Prob_x^z\{\tau^{\mathrm{ack}}<\infty\}=\kappa x\,\1_{\{x>z\}}.
$
\end{enumerate}
\end{lemma}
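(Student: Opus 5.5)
The plan is to read off each case from the skeleton itinerary classification in Lemma~\ref{lem:threshold_regime_dynamics}, combined with the survival product \eqref{eq:GammatxzProd} and the pre-ACK identity \eqref{eq:traj_equals_skeleton_preACK}. The key fact is that \(\Prob_x^z\{\tau^{\mathrm{ack}}=\infty\}=\Gamma_\infty(x,z)=\prod_{j:\widetilde A_j=1}(1-\kappa\widetilde X_j)\). This infinite product vanishes if and only if \(\sum_{j:\widetilde A_j=1}\kappa\widetilde X_j=\infty\). From time \(1\) on, the skeleton lies in \(\widetilde{\mathcal X}=[p_{01},p_{11}]\subset(0,1)\) by Remark~\ref{rem:phi01x}(iii). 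Hence each active factor lies in \([1-\kappa p_{11},1-\kappa p_{01}]\subset(0,1)\), uniformly bounded away from \(0\) and \(1\). So \(\Gamma_\infty=0\) exactly when the skeleton itinerary contains infinitely many \(1\)'s, and \(\Gamma_\infty\in(0,1)\) when it contains finitely many but at least one.

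For part (a), \(z<x^0\) covers cases (a)--(c) of Lemma~\ref{lem:threshold_regime_dynamics}.
\begin{itemize}
\item In the first two cases the itinerary ends in \(1^\infty\).
\item In case (c) it alternates indefinitely between finite blocks, so it again has infinitely many \(1\)'s.
\end{itemize}
Thus \(\Gamma_\infty(x,z)=0\) in all of them. One point needs care: the initial state \(x\) may be \(0\) or \(1\), which would make the time-\(0\) factor equal to \(1\) or \(1-\kappa\). This is harmless, since only infinitely many factors bounded by \(1-\kappa p_{01}<1\) are needed.

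For part (b), \(x^0\le z<p_{11}\), I would first show that \([0,z]\) is trapping. If \(x\le z\), then \(\phi^0(x)=x^0+\rho(x-x^0)\le \max(x,x^0)\le z\), using \(\rho>0\) and the convexity of \(\phi^0\)'s form. Since the passive action yields no ACK and the belief evolves deterministically, \(\tau^{\mathrm{ack}}=\infty\) surely.

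If \(x>z\), Lemma~\ref{lem:threshold_regime_dynamics}(d) gives itinerary \(1^{t_1}0^\infty\) with \(t_1=\tau_\downarrow^1(x,z)\). This time is finite because \(\phi_t^1(x)\downarrow x^1<x^0\le z\) by Lemma~\ref{lem:phi1_iterates}(b) and Lemma~\ref{lem:phi_order_refined}, and it is at least \(1\). ACKs can therefore occur only on the skeleton path before \(t_1\). If an ACK does occur, the reset \(p_{11}>z\) restarts activity, but that does not affect \(\tau^{\mathrm{ack}}\), which is the first ACK time. Along that path \(\widetilde X_j=\phi^1_j(x)\), so \eqref{eq:PTinfty_noreturn} follows from \eqref{eq:GammatxzProd}. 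Membership in \((0,1)\) holds because each factor \(1-\kappa\phi_j^1(x)\) lies in \((0,1)\): we have \(\phi_j^1(x)>z\ge x^0>0\), and \(\kappa\phi_j^1(x)\le\kappa<1\). The product is finite, so it stays in \((0,1)\).

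For part (c), \(z\ge p_{11}\), Lemma~\ref{lem:threshold_regime_dynamics}(e) gives itinerary \(10^\infty\) or \(0^\infty\). The justification is that after time \(0\) all beliefs lie in \(\widetilde{\mathcal X}\), so they never exceed \(p_{11}\le z\). An ACK, which would only reset the belief to \(p_{11}\), is therefore possible only at \(t=0\), with probability \(\kappa x\) when \(x>z\). This gives \(\kappa x\1_{\{x>z\}}\).

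The main obstacle is part (a) in the alternating regime. There I rely entirely on Lemma~\ref{lem:threshold_regime_dynamics}(c) guaranteeing infinitely many active steps. If I had to justify that directly, I would argue as follows. While passive, the skeleton moves toward \(x^0>z\) and so must eventually exceed \(z\). While active, it moves toward \(x^1<z\) and so must eventually drop to \(z\) or below. Hence neither block can be infinite.
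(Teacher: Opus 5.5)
Your proof is correct and follows essentially the same route as the paper. Infinitely many active skeleton steps from Lemma~\ref{lem:threshold_regime_dynamics}(a)--(c), together with the uniform bound $1-\kappa p_{01}<1$, force $\Gamma_\infty(x,z)=0$. Trapping of $[0,z]$ under $\phi^0$ gives part (b) via the finite survival product, and confinement of beliefs to $[p_{01},p_{11}]$ after time $0$ gives part (c). Your handling of the time-$0$ factor when $x\notin[p_{01},p_{11}]$, and your explicit check that the product in \eqref{eq:PTinfty_noreturn} lies in $(0,1)$, are small points the paper's proof passes over, and you treat both correctly.
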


The proof is deferred to Appendix~\ref{app:tractable_regimes_proofs}.

\begin{proposition}[Piecewise constancy of the pre-ACK metrics in the tractable regimes]
\label{pro:tilde_metrics_piecewise_constant_tractable}
Fix \(x\in\mathcal X\). The maps
\[
z\longmapsto \widetilde F(x,z),\qquad
z\longmapsto \widetilde G(x,z),\qquad
z\longmapsto \widetilde\Theta(x,z)
\]
are piecewise constant on the tractable threshold regimes
$
(-\infty,x^1]\cup[x^0,\infty).
$
More precisely:
\begin{enumerate}[label=\textup{(\alph*)},leftmargin=*]
\item \textbf{Regime \(z<p_{01}\).}
The three metrics are constant on each of the intervals
$(-\infty,\min\{x,p_{01}\})$ and 
$[x,p_{01})\cap(-\infty,p_{01})$.
On the first interval the skeleton itinerary is \(1^\infty\), and on the second interval (when nonempty) it is
\(01^\infty\).

\item \textbf{Regime \(p_{01}\le z\le x^1\).}
For each \(n\in\mathbb Z_+\), let
$
I_n^\uparrow(x)\triangleq
\{z\in[p_{01},x^1]:\ \tau_\uparrow^0(x,z)=n\}.
$
Then \(I_n^\uparrow(x)\) is an interval (possibly empty), and on each nonempty interval \(I_n^\uparrow(x)\) one has
$
\widetilde{\sigma}(x,z)=0^n1^\infty.
$
Consequently, \(\widetilde F(x,z)\), \(\widetilde G(x,z)\), and \(\widetilde\Theta(x,z)\) are constant on each nonempty
interval \(I_n^\uparrow(x)\).

\item \textbf{Regime \(x^0\le z<p_{11}\).}
If \(x\le z\), then
$
\widetilde F(x,z)=\widetilde G(x,z)=\widetilde\Theta(x,z)=0.
$

If \(x>z\), then for each \(n\in\mathbb N\), let
$
I_n^\downarrow(x)\triangleq
\{z\in[x^0,\min\{x,p_{11}\}):\ \tau_\downarrow^1(x,z)=n\}.
$
Then \(I_n^\downarrow(x)\) is an interval (possibly empty), and on each nonempty interval \(I_n^\downarrow(x)\) one has
$
\widetilde{\sigma}(x,z)=1^n0^\infty.
$
Consequently, \(\widetilde F(x,z)\), \(\widetilde G(x,z)\), and \(\widetilde\Theta(x,z)\) are constant on each nonempty
interval \(I_n^\downarrow(x)\).

\item \textbf{Regime \(z\ge p_{11}\).}
The three metrics are constant on each of the intervals
$[p_{11},x)\cap[p_{11},\infty)$ and 
$[\max\{x,p_{11}\},\infty)$.
On the first interval (when nonempty) the skeleton itinerary is \(10^\infty\), and on the second interval it is
\(0^\infty\).
\end{enumerate}
\end{proposition}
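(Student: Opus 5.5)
The plan rests on one observation. By \eqref{eq:tilde_series} and \eqref{eq:tildeTheta_preACK_def}, together with the product formula \eqref{eq:GammatxzProd} and $\Prob_x^z\{\tau^{\mathrm{ack}}=t\}=\Gamma_t\kappa\widetilde X_t\widetilde A_t$, the three quantities $\widetilde F(x,z)$, $\widetilde G(x,z)$ and $\widetilde\Theta(x,z)$ depend on $z$ only through the skeleton itinerary $\widetilde\sigma(x,z)$.

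To see this, note that the skeleton orbit is determined by $x$ and the itinerary: $\widetilde X_0=x$ and $\widetilde X_{t+1}=\phi^{\widetilde A_t}(\widetilde X_t)$. Hence $\Gamma_t(x,z)$ and every summand is a function of $(x,\widetilde\sigma(x,z))$ alone. We record this as a preliminary claim. It reduces the proposition to showing that, for fixed $x$, the itinerary is constant on each listed interval, that it equals the stated word there, and that each listed set is an interval. Lemma~\ref{lem:threshold_regime_dynamics} supplies the itinerary form in each regime. It then remains to check that the parameters $t_0=\tau_\uparrow^0(x,z)$ and $t_1=\tau_\downarrow^1(x,z)$ are constant on the relevant $z$-sets.

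\textbf{Regimes (a) and (d).} These are immediate from Lemma~\ref{lem:threshold_regime_dynamics}(a),(e). For $z<p_{01}$, the itinerary is $1^\infty$ if $z<x$ and $01^\infty$ if $z\ge x$. The $z$-sets $\{z<\min\{x,p_{01}\}\}$ and $[x,p_{01})$ are intervals. The case $z\ge p_{11}$ is symmetric, with itineraries $10^\infty$ or $0^\infty$ according as $z<x$ or $z\ge x$.

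\textbf{Regime (b).} By Lemma~\ref{lem:threshold_regime_dynamics}(b), $\widetilde\sigma=0^{t_0}1^\infty$ with $t_0=\tau_\uparrow^0(x,z)$, so constancy on $I_n^\uparrow(x)$ follows from the preliminary claim. To show $I_n^\uparrow(x)$ is an interval, we use \eqref{eq:phi0_iter}. For $z<x^0$, the function $z\mapsto\tau_\uparrow^0(x,z)=\min\{t:\phi^0_t(x)>z\}$ is nondecreasing, since the set $\{t:\phi^0_t(x)>z\}$ shrinks as $z$ increases. Level sets of a monotone integer-valued function are intervals. The case $x>x^0$ gives $\tau=0$ throughout, which is also consistent.

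\textbf{Regime (c).} For $x\le z$, the itinerary is $0^\infty$, so all three metrics vanish. For $x>z$, the itinerary is $1^{t_1}0^\infty$. We take $\tau_\downarrow^1(x,z)=\min\{t\ge1:\phi^1_t(x)\le z\}$, the first time the active no-ACK iterates fall to or below $z$, consistent with the convention in Lemma~\ref{lem:tauack_finite_alternating}(b). Lemma~\ref{lem:phi1_iterates}(b) gives that $\phi^1_t(x)$ decreases strictly to $x^1<x^0\le z$, so $t_1$ is finite. Moreover $z\mapsto\tau_\downarrow^1(x,z)$ is nonincreasing, because $\{t:\phi^1_t(x)\le z\}$ grows with $z$. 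Its level sets within $[x^0,\min\{x,p_{11}\})$ are therefore intervals, and the preliminary claim gives constancy on each.

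\textbf{Main obstacle.} The main difficulty is bookkeeping rather than depth. Two points need care. First, the boundary conventions (strict versus non-strict inequalities in $x>z$ and in the definitions of $\tau_\uparrow^0$ and $\tau_\downarrow^1$) must match the half-open intervals stated. Second, the preliminary claim must be justified cleanly, including at $t=0$, where $\widetilde A_0=\1_{\{x>z\}}$ is itself part of the itinerary.
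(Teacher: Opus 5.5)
Your proposal is correct and follows essentially the same route as the paper: you reduce the claim to showing that the skeleton itinerary (and hence the orbit and the survival probabilities) is fixed on each listed $z$-set, and then read the itinerary off Lemma~\ref{lem:threshold_regime_dynamics} regime by regime. Your monotonicity argument for $z\mapsto\tau_\uparrow^0(x,z)$ and $z\mapsto\tau_\downarrow^1(x,z)$ also supplies a proof that $I_n^\uparrow(x)$ and $I_n^\downarrow(x)$ are intervals, which the paper asserts without argument.
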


The proof is deferred to Appendix~\ref{app:tractable_regimes_proofs}.

Building on the regime classification of the no-ACK skeleton and the renewal identities at the first ACK time,
we next record the corresponding formulas for \(F(x,z)\) and \(G(x,z)\).

\begin{proposition}[Reward/work metrics under a \(z\)-threshold policy: regime-wise formulas]
\label{pro:FG_closed_form_with_proof}
Fix \(x\in\mathcal X\). On the tractable threshold regimes
$(-\infty,x^1]\cup[x^0,\infty),$
the functions \(F(x,\cdot)\) and \(G(x,\cdot)\) are right-continuous step functions of \(z\), possibly with infinitely many jumps, and any discontinuity can occur only at a threshold \(z\) that coincides with a belief value visited along a no-ACK skeleton segment started either from \(x\) or from the post-ACK reset belief \(p_{11}\).

Moreover, \(F(x,z)\) and \(G(x,z)\) admit the following regime-wise expressions:

\begingroup
\renewcommand{\labelenumi}{(\alph{enumi})}
\begin{enumerate}

\item \textbf{Regime \(z<p_{01}\).}
\[
F(x,z)=
\begin{cases}
r\kappa\left[\dfrac{x^0}{1-\beta}-\dfrac{x^0-x}{1-\beta\rho}\right], & x>z,\\[2mm]
\beta\,r\kappa\left[\dfrac{x^0}{1-\beta}-\dfrac{x^0-\phi^0(x)}{1-\beta\rho}\right], & x\le z,
\end{cases}
\qquad
G(x,z)=
\begin{cases}
\dfrac{1}{1-\beta}, & x>z,\\[1mm]
\dfrac{\beta}{1-\beta}, & x\le z.
\end{cases}
\]

\item \textbf{Regime \(p_{01}\le z\le x^1\).}
\[
F(x,z)= r\kappa\,\beta^{\tau_{\uparrow}^0(x,z)}
\left[\frac{x^0}{1-\beta}-\frac{x^0-\phi^0_{\tau_{\uparrow}^0(x,z)}(x)}{1-\beta\rho}\right],
\qquad
G(x,z)=\frac{\beta^{\tau_{\uparrow}^0(x,z)}}{1-\beta}.
\]

\item \textbf{Regime \(x^1<z<x^0\).}
By Lemma~\ref{lem:renewal_first_ACK},
\begin{align*}
F(x,z)
&=\widetilde F(x,z)+\beta\,\widetilde{\Theta}(x,z)\,F(p_{11},z), \qquad G(x,z)
=\widetilde G(x,z)+\beta\,\widetilde{\Theta}(x,z)\,G(p_{11},z),
\end{align*}
with \(\widetilde F,\widetilde G\) defined by \eqref{eq:tilde_series}. Moreover,
\[
F(p_{11},z)=\frac{\widetilde F(p_{11},z)}{1-\beta\,\widetilde{\Theta}(p_{11},z)},
\qquad
G(p_{11},z)=\frac{\widetilde G(p_{11},z)}{1-\beta\,\widetilde{\Theta}(p_{11},z)}.
\]

\item \textbf{Regime \(x^0\le z<p_{11}\).}
For fixed \(x\), the functions \(F(x,z)\) and \(G(x,z)\) are piecewise constant in \(z\).

If \(x\le z\), then
$
F(x,z)=G(x,z)=0.
$

Assume \(x>z\). For \(n,m\in\mathbb N\), define the intervals
\[
I_{n,m}(x)\triangleq
\Bigl\{z\in[x^0,\min\{x,p_{11}\}) : \tau_{\downarrow}^1(x,z)=n,\ \tau_{\downarrow}^1(p_{11},z)=m\Bigr\}.
\]
Then each \(I_{n,m}(x)\) is an interval (possibly empty), and \(F(x,z)\) and \(G(x,z)\) are constant on each nonempty
 \(I_{n,m}(x)\).

More explicitly, if \(z\in I_{n,m}(x)\), define
\begin{align*}
A_x^{(n)}
&\triangleq
\sum_{t=0}^{n-1}\beta^t
\prod_{j=0}^{t-1}\bigl(1-\kappa\,\phi_j^1(x)\bigr),\\
B_x^{(n)}
&\triangleq
\kappa\sum_{t=0}^{n-1}\beta^t
\prod_{j=0}^{t-1}\bigl(1-\kappa\,\phi_j^1(x)\bigr)\,\phi_t^1(x),
\end{align*}
and similarly
\begin{align*}
A_{11}^{(m)}
&\triangleq
\sum_{t=0}^{m-1}\beta^t
\prod_{j=0}^{t-1}\bigl(1-\kappa\,\phi_j^1(p_{11})\bigr),\\
B_{11}^{(m)}
&\triangleq
\kappa\sum_{t=0}^{m-1}\beta^t
\prod_{j=0}^{t-1}\bigl(1-\kappa\,\phi_j^1(p_{11})\bigr)\,\phi_t^1(p_{11}).
\end{align*}
Then
$
\widetilde G(x,z)=A_x^{(n)},
\qquad
\widetilde\Theta(x,z)=B_x^{(n)},
\qquad
\widetilde F(x,z)=r\,B_x^{(n)},
$
and
\[
F(p_{11},z)=\frac{r\,B_{11}^{(m)}}{1-\beta\,B_{11}^{(m)}},
\qquad
G(p_{11},z)=\frac{A_{11}^{(m)}}{1-\beta\,B_{11}^{(m)}}.
\]
Consequently,
\[
F(x,z)=\frac{r\,B_x^{(n)}}{1-\beta\,B_{11}^{(m)}}, \quad
G(x,z)
=
A_x^{(n)}+\frac{\beta\,B_x^{(n)}\,A_{11}^{(m)}}{1-\beta\,B_{11}^{(m)}}
=
\frac{A_x^{(n)}\bigl(1-\beta\,B_{11}^{(m)}\bigr)+\beta\,B_x^{(n)}\,A_{11}^{(m)}}
{1-\beta\,B_{11}^{(m)}}.
\]

\item \textbf{Regime \(z\ge p_{11}\).}
$
F(x,z)=r\kappa x\,\1_{\{x>z\}}, \enspace
G(x,z)=\1_{\{x>z\}}.
$

\end{enumerate}
\endgroup
\end{proposition}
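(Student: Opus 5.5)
The plan is to combine the renewal decomposition of Lemma~\ref{lem:renewal_first_ACK}(b)--(c) with the itinerary classification of Lemma~\ref{lem:threshold_regime_dynamics} and the finiteness results of Lemma~\ref{lem:tauack_finite_alternating}. Each regime is treated separately. The structural claims (right-continuous step functions, location of jumps) then follow from Proposition~\ref{pro:tilde_metrics_piecewise_constant_tractable}, since $F(x,z)$ and $G(x,z)$ are explicit functions of the pre-ACK quantities evaluated at the two starting beliefs $x$ and $p_{11}$.

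\textbf{Regimes (a), (b), (e).} In (a) and (b), once the policy becomes active it stays active forever: every belief stays in $\widetilde{\mathcal X}$ and, after activation, remains above $z$, including the reset value $p_{11}>z$. So the realized action sequence is deterministic, namely $0^{t_0}1^\infty$ with $t_0=\tau_\uparrow^0(x,z)$, or $1^\infty$/$01^\infty$ in (a). Then $G$ is immediate: $G=\beta^{t_0}/(1-\beta)$. For $F$, I will not sum along the random path. Instead I use Lemma~\ref{lem:policy_invariant_mean_path}: $F=r\kappa\sum_{t\ge t_0}\beta^t\Ex[X(t)]=r\kappa\sum_{t\ge t_0}\beta^t\phi^0_t(x)$. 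Plugging in \eqref{eq:phi0_iter} and summing two geometric series gives the stated bracket with $\phi^0_{t_0}(x)$. For regime (e), $\phi^a(\mathcal X)\subseteq\widetilde{\mathcal X}\subseteq[0,p_{11}]\le z$ forces passivity from time $1$ on, so only the time-$0$ term survives.

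\textbf{Regime (c)} is just Lemma~\ref{lem:renewal_first_ACK}(b),(c) restated, so nothing new is needed.

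\textbf{Regime (d).} If $x\le z$, the trapping property in Lemma~\ref{lem:tauack_finite_alternating}(b) gives $F=G=0$. If $x>z$, the skeleton itinerary is $1^n0^\infty$ with $n=\tau_\downarrow^1(x,z)$. Substituting $\widetilde X_t=\phi^1_t(x)$ for $t<n$ and $\widetilde A_t=\1_{\{t<n\}}$ into \eqref{eq:tilde_series} and \eqref{eq:GammatxzProd} gives $\widetilde G=A_x^{(n)}$ and $\widetilde\Theta=B_x^{(n)}$. The same computation from $p_{11}$ with $m=\tau_\downarrow^1(p_{11},z)$ gives the post-ACK quantities via \eqref{eq:FGrho_closed}. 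Combining these in Lemma~\ref{lem:renewal_first_ACK}(b) yields the displayed closed forms.

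For the interval claim on $I_{n,m}(x)$, I first note that, by Lemma~\ref{lem:phi1_iterates}(b), $\phi^1_t(x)$ decreases in $t$ (toward $x^1<x^0\le z$). Hence $\tau_\downarrow^1(x,z)=\min\{t:\phi^1_t(x)\le z\}$ is nondecreasing in $z$, so its level sets are intervals, and $I_{n,m}(x)$ is an intersection of two such intervals.

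\textbf{Step-function claims.} By Proposition~\ref{pro:tilde_metrics_piecewise_constant_tractable}, on each piece $F$ and $G$ are constant, and a change can only occur when $z$ crosses some skeleton value $\phi^0_t(x)$, $\phi^1_t(x)$, or $\phi^1_t(p_{11})$. Right-continuity comes from the strict inequality $X>z$ in the definition of activation. Under $X>z$, the level sets have the form $\{z:\phi_{n-1}\le z<\phi_n\}$ (or the mirrored form), which is closed on the left. In particular, $\tau_\uparrow^0(x,z)=n$ holds exactly for $z\in[\phi^0_{n-1}(x),\phi^0_n(x))$.

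\textbf{Main obstacle.} I expect the hardest part to be the bookkeeping at the regime boundaries $z=p_{01},x^1,x^0,p_{11}$ together with these left-closed/right-open interval endpoints. This is what establishes right-continuity and the claimed jump locations globally rather than piece by piece. I will check each boundary directly by comparing the itinerary at $z$ with its limit as $\zeta\searrow z$.
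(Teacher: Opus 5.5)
Your proposal is correct and follows essentially the paper's route. Regimes (a), (b) and (e) use deterministic action sequences together with the posterior-mean recursion $\Ex[X(t)]=\phi^0_t(x)$, which the paper also uses inline. Regime (c) is a restatement of Lemma~\ref{lem:renewal_first_ACK}, and regime (d) feeds finite-sum pre-ACK quantities from $x$ and $p_{11}$ into the renewal identities. Your left-closed/right-open level-set argument for right-continuity is more careful than the paper's one-line justification. One small slip: $\tau_{\downarrow}^1(x,z)$ is nonincreasing in $z$, not nondecreasing, but this does not affect the conclusion that its level sets, and hence $I_{n,m}(x)$, are intervals.
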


The proof is deferred to Appendix~\ref{app:tractable_regimes_proofs}.

\begin{lemma}[Monotonicity of the work metric in the tractable regimes]
\label{lem:G_monotone_tractable}
Fix \(z\in(-\infty,x^1]\cup[x^0,\infty)\). Then the function \(x\mapsto G(x,z)\) is nondecreasing on \(\mathcal X\).
\end{lemma}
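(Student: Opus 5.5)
We prove the claim regime by regime, using the explicit formulas of Proposition~\ref{pro:FG_closed_form_with_proof} wherever they settle it directly, and a pathwise coupling in the one regime where they do not.

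\textbf{Regimes \(z<p_{01}\) and \(z\ge p_{11}\).} These are immediate from Proposition~\ref{pro:FG_closed_form_with_proof}(a),(e). For \(z<p_{01}\), \(G(x,z)\) takes the value \(\beta/(1-\beta)\) when \(x\le z\) and \(1/(1-\beta)\) when \(x>z\). This is a nondecreasing step function of \(x\). For \(z\ge p_{11}\), \(G(x,z)=\1_{\{x>z\}}\), which is also nondecreasing.

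\textbf{Regime \(p_{01}\le z\le x^1\).} By part (b), \(G(x,z)=\beta^{\tau_\uparrow^0(x,z)}/(1-\beta)\), so it suffices to show that \(x\mapsto\tau_\uparrow^0(x,z)\) is nonincreasing. Since \(z\le x^1<x^0\), formula \eqref{eq:tau-characterization} applies. Alternatively, \(\phi_t^0\) is increasing in \(x\), so if \(x<x'\) and \(\phi^0_t(x)>z\), then \(\phi^0_t(x')>z\). Hence the first up-crossing time of \(x'\) is no later than that of \(x\), and monotonicity follows.

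\textbf{Regime \(z\ge x^0\) with \(z<p_{11}\).} If \(x\le z\), then \(G(x,z)=0\), so it remains to check that \(G(x,z)\ge 0\) for \(x>z\), and that \(G\) is nondecreasing on \((z,1]\). Here the closed form \(G=A_x^{(n)}+\beta B_x^{(n)}A_{11}^{(m)}/(1-\beta B_{11}^{(m)})\) is awkward, because increasing \(x\) raises \(n=\tau_\downarrow^1(x,z)\) (adding active periods) but also changes the survival products. We therefore argue directly via the renewal decomposition of Lemma~\ref{lem:renewal_first_ACK}(b),
\[
G(x,z)=\widetilde G(x,z)+\beta\widetilde\Theta(x,z)\,G(p_{11},z).
\]

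A cleaner route is a dynamic-programming comparison. Set \(c\triangleq G(p_{11},z)\), a constant. Using \eqref{eq:gseveq}, for \(x>z\),
\[
G(x,z)=1+\beta\kappa x\,c+\beta(1-\kappa x)\,G(\phi^1(x),z).
\]
We prove by induction on the horizon \(T\) that the finite-horizon work \(G_T(x,z)\), computed with terminal continuation so that the post-ACK value is fixed at \(c\), is nondecreasing in \(x\). The step has two ingredients:
\begin{itemize}
\item[(i)] \(\phi^1\) is increasing and \(G_{T-1}(\cdot,z)\) is nondecreasing;
\item[(ii)] the map \(x\mapsto \kappa x\,c+(1-\kappa x)G_{T-1}(\phi^1(x),z)\) is nondecreasing. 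This requires \(c\ge G_{T-1}(y,z)\) for the relevant \(y=\phi^1(x)\le p_{11}\), which we would obtain from the induction hypothesis together with \(c=G(p_{11},z)\) being the value at the largest reachable belief.
\end{itemize}
On the passive side, for \(x\le z\) we have \(G=0\), and on the active side \(G\ge1\), so monotonicity across the threshold is automatic. We then pass to the limit \(T\to\infty\) using the uniform convergence in \eqref{eq:tilde_tail_bounds}.

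\textbf{Main obstacle.} The key difficulty is step (ii): showing \(G(y,z)\le G(p_{11},z)\) for all \(y\le p_{11}\), so that a higher ACK probability, which resets the belief to \(p_{11}\), cannot lower future work. We plan to handle this jointly with monotonicity itself. Since \(\phi^1\) maps into \([p_{01},p_{11}]\), monotonicity of \(G_{T-1}\) restricted to \(\widetilde{\mathcal X}\) gives \(G_{T-1}(y)\le G_{T-1}(p_{11})\). We then need the comparison between \(G_{T-1}(p_{11})\) and \(c\); to avoid it, we set the induction up on the full recursion, without freezing \(c\), and apply value iteration to \eqref{eq:gseveq} starting from \(G_0\equiv0\). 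Each iterate is then nondecreasing, and its value at \(p_{11}\) dominates its value at every other \(y\in\widetilde{\mathcal X}\), by induction.
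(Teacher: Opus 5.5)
Your final argument is correct and is essentially the paper's proof: the closed forms of Proposition~\ref{pro:FG_closed_form_with_proof} settle the regimes \(z<p_{01}\), \(p_{01}\le z\le x^1\), and \(z\ge p_{11}\). For \(x^0\le z<p_{11}\) one runs value iteration on \eqref{eq:gseveq} from \(H_0\equiv 0\), not the frozen-\(c\) version (which is circular, as you noticed), and proves each iterate nondecreasing, using \(\phi^1(x_2)\le p_{11}\) to make the ACK-reset term \(\beta\kappa(x_2-x_1)\bigl(H_n(p_{11})-H_n(\phi^1(x_2))\bigr)\) nonnegative. The one detail to state explicitly is that the iterates themselves vanish on \([0,z]\), because \([0,z]\) is trapping under \(\phi^0\) when \(z\ge x^0\); this is what makes the threshold-straddling case go through at every step, exactly as in the paper.
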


The proof is deferred to Appendix~\ref{app:tractable_regimes_proofs}.

Together, the preceding lemmas and propositions provide the regime-wise formulas used later for numerical evaluation of
\(F\), \(G\), and the MP index, and for verifying the PCL conditions.

\subsection{Computation of marginal metrics}
\label{s:ia-fg}

We next turn to the marginal reward and work metrics \(f(x,z)\) and \(g(x,z)\) associated with a one-step deviation from a \(z\)-threshold policy. These marginal quantities are the  ingredients of the MP index 
($
m(x,z) \triangleq {f(x,z)}/{g(x,z)},
m(x) \triangleq m(x,x),
$)
which equals the Whittle index if the PCL conditions hold. Using the renewal decompositions and pre-ACK series developed above, we express \(f\) and \(g\) in terms of pre-ACK contributions and continuation values from the post-ACK reset state, yielding formulas that are efficient to evaluate and admit controlled truncation error.

For \(a\in\{0,1\}\) and threshold \(z\), let \(\langle a,z\rangle\) denote the policy that takes action \(a\) at time \(0\) and follows the \(z\)-threshold policy from time \(1\) onward. Let \(\tau^{\mathrm{ack}}\) denote the corresponding first ACK time under that policy. Define the associated pre-ACK reward, work, and transform metrics by (with the convention $\beta^\infty\triangleq 0$)
\begin{align*}
\widetilde F(x,\langle a,z\rangle)
&\triangleq
\Ex_x^{\langle a,z\rangle}\!\bigg[\sum_{t=0}^{\tau^{\mathrm{ack}}}\beta^t\,r\kappa\,X(t)A(t)\bigg], \quad
\widetilde G(x,\langle a,z\rangle)
\triangleq
\Ex_x^{\langle a,z\rangle}\!\bigg[\sum_{t=0}^{\tau^{\mathrm{ack}}}\beta^t\,A(t)\bigg], \quad
\widetilde\Theta(x,\langle a,z\rangle)
\triangleq
\Ex_x^{\langle a,z\rangle}\!\big[\beta^{\tau^{\mathrm{ack}}}\big].
\end{align*}

We then define the \emph{pre-ACK marginal} metrics as the differences between active and passive initializations:
\begin{align}
\label{eq:tildef_one_step_new}
\tilde f(x,z)
&\triangleq \widetilde F(x,\langle 1,z\rangle)-\widetilde F(x,\langle 0,z\rangle) = r\kappa x+\beta\Big((1-\kappa x)\,\widetilde F(\phi^1(x),z)-\widetilde F(\phi^0(x),z)\Big),\\
\label{eq:tildeg_one_step_new}
\tilde g(x,z)
&\triangleq \widetilde G(x,\langle 1,z\rangle)-\widetilde G(x,\langle 0,z\rangle)  = 1+\beta\Big((1-\kappa x)\,\widetilde G(\phi^1(x),z)-\widetilde G(\phi^0(x),z)\Big),\\
\label{eq:tildetheta_one_step_new}
\tilde\theta(x,z)
&\triangleq \widetilde\Theta(x,\langle 1,z\rangle)-\widetilde\Theta(x,\langle 0,z\rangle)  = \kappa x+\beta\Big((1-\kappa x)\,\widetilde\Theta(\phi^1(x),z)-\widetilde\Theta(\phi^0(x),z)\Big),
\end{align}
where the identities on the right follow from one-step conditioning.

\begin{lemma}[Decomposition of marginal metrics via pre-ACK metrics]
\label{lem:fg_decomp_preACK}
For every \(x\in\mathcal X\) and \(z\in\R\), the marginal reward and work metrics satisfy
\begin{align}
\label{eq:f_decomp_preACK}
f(x,z) &= \tilde f(x,z)+\beta\,\tilde\theta(x,z)\,F(p_{11},z),\\
\label{eq:g_decomp_preACK}
g(x,z) &= \tilde g(x,z)+\beta\,\tilde\theta(x,z)\,G(p_{11},z).
\end{align}
\end{lemma}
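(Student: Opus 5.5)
The plan is to extend the renewal decomposition of Lemma~\ref{lem:renewal_first_ACK}\textup{(b)} from the pure \(z\)-threshold policy to the one-step deviation policies \(\langle a,z\rangle\), \(a\in\{0,1\}\), and then subtract. Concretely, for each \(a\), I will prove
\[
F(x,\langle a,z\rangle)=\widetilde F(x,\langle a,z\rangle)+\beta\,\widetilde\Theta(x,\langle a,z\rangle)\,F(p_{11},z),
\qquad
G(x,\langle a,z\rangle)=\widetilde G(x,\langle a,z\rangle)+\beta\,\widetilde\Theta(x,\langle a,z\rangle)\,G(p_{11},z).
\]
Taking the difference of the \(a=1\) and \(a=0\) versions, and using the definitions of \(f,g\) together with \(\tilde f,\tilde g,\tilde\theta\), then gives \eqref{eq:f_decomp_preACK}--\eqref{eq:g_decomp_preACK} immediately. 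The reason this works is that the continuation values \(F(p_{11},z)\) and \(G(p_{11},z)\) do not depend on \(a\).

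To prove the decomposition for \(\langle a,z\rangle\), I split the reward and work sums at \(\tau^{\mathrm{ack}}\). For the reward, this gives
\[
\sum_{t\le\tau^{\mathrm{ack}}}\beta^t r\kappa X(t)A(t)+\1_{\{\tau^{\mathrm{ack}}<\infty\}}\sum_{t>\tau^{\mathrm{ack}}}\beta^t r\kappa X(t)A(t),
\]
and the work sum splits the same way. The first part has expectation \(\widetilde F(x,\langle a,z\rangle)\) by definition. For the second part, I observe that on \(\{\tau^{\mathrm{ack}}=s<\infty\}\) we have \(X(s+1)=p_{11}\) by \eqref{eq:pdnK01}. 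Moreover, from time \(s+1\ge1\) onward the policy \(\langle a,z\rangle\) is just the \(z\)-threshold policy, whatever \(a\) was. Since the belief process is Markov under a stationary threshold rule, the strong Markov property at the stopping time \(s+1\) yields a conditional expectation of \(\beta^{s+1}F(p_{11},z)\). Summing over \(s\) against \(\Prob_x^{\langle a,z\rangle}\{\tau^{\mathrm{ack}}=s\}\) produces \(\beta\,\widetilde\Theta(x,\langle a,z\rangle)F(p_{11},z)\), using the convention \(\beta^\infty=0\). The work metric is handled in exactly the same way. All series involved are dominated by \(\sum_t\beta^t\max\{r\kappa,1\}\), so Fubini and the term rearrangements are justified.

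The one point that needs care is that the renewal argument uses the stopping time \(\tau^{\mathrm{ack}}+1\), which is always \(\ge1\), so the restart never falls within the deviation period \(t=0\). This covers the case where the ACK happens at time \(0\) under \(a=1\): the process restarts at time \(1\) under the pure \(z\)-threshold policy. Otherwise the argument simply repeats the proof of Lemma~\ref{lem:renewal_first_ACK}\textup{(b)}.

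As an optional consistency check, one can condition on the time-\(0\) outcome under \(\langle1,z\rangle\): an ACK with probability \(\kappa x\), and a move to \(\phi^1(x)\) otherwise. Using Lemma~\ref{lem:renewal_first_ACK}\textup{(b)} at \(\phi^1(x)\) and \(\phi^0(x)\), this reproduces the one-step identities \eqref{eq:tildef_one_step_new}--\eqref{eq:tildetheta_one_step_new}, and combined with \eqref{eq:fseveq} it confirms the result.
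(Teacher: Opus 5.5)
Your argument is correct, but it runs in the opposite order from the paper's proof.

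\textbf{The paper's route.} The paper first conditions on the time-$0$ outcome to get the one-step identities for $f$ and $g$, later recorded as \eqref{eq:f_one_step}--\eqref{eq:g_one_step}. It then applies Lemma~\ref{lem:renewal_first_ACK}\textup{(b)} verbatim to the stationary $z$-threshold policy at the two successor states $\phi^1(x)$ and $\phi^0(x)$. Finally, it collects the constant terms and the coefficients of $F(p_{11},z)$ and $G(p_{11},z)$. The collected brackets are identified with $\tilde f,\tilde g,\tilde\theta$ through the one-step forms on the right of \eqref{eq:tildef_one_step_new}--\eqref{eq:tildetheta_one_step_new}.

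\textbf{Your route.} You instead establish the renewal decomposition directly for the non-stationary deviation policies $\langle a,z\rangle$ and then subtract. This lets you work straight from the defining differences of $\tilde f,\tilde g,\tilde\theta$, so you never need their one-step forms. The price is re-running the strong-Markov argument for $\langle a,z\rangle$. Your observation that the restart time $\tau^{\mathrm{ack}}+1\ge 1$ always falls after the deviation period is exactly what makes that step go through.

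\textbf{Comparison.} The paper's route reuses an already-proved lemma for a stationary policy and yields the one-step formulas for the pre-ACK marginals as a by-product. Yours is more self-contained, and it makes transparent why only the first-ACK transform $\widetilde\Theta(x,\langle a,z\rangle)$ multiplies the continuation values. Your ``optional consistency check'' is essentially the paper's proof.
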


The proof is deferred to Appendix~\ref{app:marginal_metric_proofs}.

\begin{lemma}[Relation between \(\tilde f\) and \(\tilde\theta\)]
\label{lem:tildef_tildetheta_relation}
Assume \(r>0\). Then for every \(x\in\mathcal X\) and \(z\in\R\),
\begin{equation}
\label{eq:tildef_in_terms_of_tildetheta_new}
\tilde f(x,z)=r\,\tilde\theta(x,z).
\end{equation}
\end{lemma}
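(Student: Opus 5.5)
The plan is to reduce the claim to the identity \(\widetilde F(y,z)=r\,\widetilde\Theta(y,z)\), valid for every \(y\in\mathcal X\) and \(z\in\R\) by Lemma~\ref{lem:renewal_first_ACK}\textup{(a)} (equation \eqref{eq:ThetaF_relation_preACK}). We then substitute it into the one-step expressions \eqref{eq:tildef_one_step_new} and \eqref{eq:tildetheta_one_step_new}. Both marginal metrics have the same structure: an immediate term at time \(0\) under the active action, plus discounted continuations from \(\phi^1(x)\) (weighted by the no-ACK probability \(1-\kappa x\)) and from \(\phi^0(x)\). The only differences are that the immediate term is \(r\kappa x\) versus \(\kappa x\), and that the continuations involve \(\widetilde F\) versus \(\widetilde\Theta\).

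Concretely, applying \eqref{eq:ThetaF_relation_preACK} at the two points \(y=\phi^1(x)\) and \(y=\phi^0(x)\) gives
\[
\tilde f(x,z)=r\kappa x+\beta\Bigl((1-\kappa x)\,r\,\widetilde\Theta(\phi^1(x),z)-r\,\widetilde\Theta(\phi^0(x),z)\Bigr).
\]
Factoring out \(r\), this equals \(r\,\tilde\theta(x,z)\) by \eqref{eq:tildetheta_one_step_new}. The hypothesis \(r>0\) plays no essential role beyond matching Lemma~\ref{lem:renewal_first_ACK}; the identity is linear in \(r\).

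The only step needing care is justifying the one-step conditioning formulas themselves, since they are stated but rely on how \(\tau^{\mathrm{ack}}\) interacts with the first action. Under \(\langle 0,z\rangle\), no ACK can occur at time \(0\). The process moves deterministically to \(\phi^0(x)\) and then follows the \(z\)-policy, so the pre-ACK quantities from time \(1\) onward equal \(\beta\) times their \(z\)-policy counterparts at \(\phi^0(x)\).

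Under \(\langle 1,z\rangle\), an ACK at time \(0\) occurs with probability \(\kappa x\). On that event we have \(\tau^{\mathrm{ack}}=0\), which contributes \(r\kappa x\cdot 1\) to \(\widetilde F\) and \(\beta^0=1\) to \(\widetilde\Theta\) (the latter weighted by \(\kappa x\)); nothing after the ACK counts in either metric. On the no-ACK event the process restarts from \(\phi^1(x)\) with the time shifted by one.

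As a cross-check independent of Lemma~\ref{lem:renewal_first_ACK}, one can verify directly that \(\Ex[\sum_{t\le\tau^{\mathrm{ack}}}\beta^t r\kappa X(t)A(t)]=r\,\Ex[\beta^{\tau^{\mathrm{ack}}}]\) for any policy \(\langle a,z\rangle\). This follows by conditioning at each \(t\le\tau^{\mathrm{ack}}\): the term \(\kappa X(t)A(t)\) is exactly the conditional probability that \(\tau^{\mathrm{ack}}=t\). Summing gives \(\widetilde F(x,\langle a,z\rangle)=r\,\widetilde\Theta(x,\langle a,z\rangle)\) for \(a=0,1\). Taking the difference over \(a\) then yields the lemma immediately, with no need for the one-step formulas. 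I would present this second route as the main proof, since it handles arbitrary initial actions uniformly.
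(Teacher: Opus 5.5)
Your proposal is correct. Your first route is exactly the paper's proof: it substitutes \(\widetilde\Theta(y,z)=\widetilde F(y,z)/r\) from Lemma~\ref{lem:renewal_first_ACK}\textup{(a)} at \(y=\phi^1(x)\) and \(y=\phi^0(x)\) into the one-step identities \eqref{eq:tildef_one_step_new} and \eqref{eq:tildetheta_one_step_new}, then factors out \(r\).

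The route you would promote to the main proof is genuinely different. You prove \(\widetilde F(x,\pi)=r\,\widetilde\Theta(x,\pi)\) for an arbitrary admissible policy \(\pi\), in particular for \(\langle 0,z\rangle\) and \(\langle 1,z\rangle\). The argument uses two facts: \(\{\tau^{\mathrm{ack}}\ge t\}\) is \(\mathcal H_t\)-measurable, and \(\Prob\{K(t)=1\mid\mathcal H_t\}=\kappa X(t)A(t)\). Together they give \(\Ex[\1_{\{\tau^{\mathrm{ack}}\ge t\}}\kappa X(t)A(t)]=\Prob\{\tau^{\mathrm{ack}}=t\}\). You then sum over \(t\) (Tonelli) and take the difference over the initial action. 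This bypasses the one-step identities, which the paper obtains by one-step conditioning without details. It also bypasses the no-ACK skeleton representation the paper uses to prove Lemma~\ref{lem:renewal_first_ACK}\textup{(a)}, which covers threshold policies only. It shows the identity is a policy-independent consequence of the one-sided feedback, and it contains Lemma~\ref{lem:renewal_first_ACK}\textup{(a)} as a special case. The paper's route, by contrast, is a two-line algebraic corollary of results it has already stated.

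One minor point of phrasing in your side justification of the one-step formulas. \(\widetilde F\) accumulates expected rewards \(r\kappa X(t)A(t)\), not realized rewards \(rK(t)\). So under \(\langle 1,z\rangle\) the time-\(0\) summand \(r\kappa x\) is collected on both the ACK and no-ACK branches, not only on the ACK event. The formula \(r\kappa x+\beta(1-\kappa x)\widetilde F(\phi^1(x),z)\) you rely on is nonetheless correct.
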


The proof is deferred to Appendix~\ref{app:marginal_metric_proofs}.

\begin{proposition}[Marginal reward and work metrics: one-step and regime-wise forms]
\label{pro:fg_closed_form}
For every \(x\in\mathcal X\) and \(z\in\R\), we have the one-step identities
\begin{align}
\label{eq:f_one_step}
f(x,z)
&=r\kappa x+\beta\Big(\kappa x\,F(p_{11},z)+(1-\kappa x)\,F(\phi^1(x),z)-F(\phi^0(x),z)\Big),\\
\label{eq:g_one_step}
g(x,z)
&=1+\beta\Big(\kappa x\,G(p_{11},z)+(1-\kappa x)\,G(\phi^1(x),z)-G(\phi^0(x),z)\Big),
\end{align}
where \(F(\cdot,z)\) and \(G(\cdot,z)\) are the reward and work metrics under the \(z\)-threshold policy
from Proposition~\ref{pro:FG_closed_form_with_proof}.

Substituting the regime-wise forms of \(F(\cdot,z)\) and \(G(\cdot,z)\) yields the following case split in \(z\).

\begin{enumerate}[label=\textup{(\alph*)},leftmargin=*]
\item \textbf{Regime \(z<p_{01}\).}
We have
$
f(x,z)=r\kappa x,\qquad g(x,z)=1.
$

\item \textbf{Regime \(p_{01}\le z\le x^1\).}
Let \(\tau_{\uparrow}^0(\cdot,z)\) be the passive up-crossing time, and set
\[
\tau_0\triangleq \tau_{\uparrow}^0(\phi^0(x),z),\qquad
\tau_1\triangleq \tau_{\uparrow}^0(\phi^1(x),z),
\qquad
\widehat\phi_a \triangleq \phi^0_{\tau_a}(\phi^a(x)),\quad a\in\{0,1\}.
\]
Define
$
F^{+}(u)\triangleq r\kappa\left[\frac{x^0}{1-\beta}-\frac{x^0-u}{1-\beta\rho}\right].
$
Then
$
f(x,z)
=
r\kappa x+\beta\Big(\kappa x\,F^{+}(p_{11})
+(1-\kappa x)\,\beta^{\tau_1}F^{+}(\widehat\phi_1)
-\beta^{\tau_0}F^{+}(\widehat\phi_0)\Big),
$
and
$
g(x,z)
=
1+\frac{\beta}{1-\beta}\Big(\kappa x+(1-\kappa x)\beta^{\tau_1}-\beta^{\tau_0}\Big).
$

\item \textbf{Regime \(x^1<z<x^0\).}
Let
$
D(z)\triangleq 1-\beta\,\widetilde{\Theta}(p_{11},z).
$
Then
\begin{align}
\label{eq:f_intermediate_compact}
f(x,z)
&=\frac{r\,\tilde\theta(x,z)}{D(z)},\\
\label{eq:g_intermediate_compact}
g(x,z)
&=\tilde g(x,z)+\beta\,\frac{\tilde\theta(x,z)}{D(z)}\,\widetilde G(p_{11},z).
\end{align}

\item \textbf{Regime \(x^0\le z<p_{11}\).}
The formulas in part \textup{(c)} remain valid; moreover, \(\widetilde F(\cdot,z)\), \(\widetilde G(\cdot,z)\), and \(\widetilde\Theta(\cdot,z)\) reduce to finite sums, truncating as in Proposition~\ref{pro:FG_closed_form_with_proof}\textup{(d)}.
In particular,
$
\widetilde F(y,z)=\widetilde G(y,z)=\widetilde\Theta(y,z)=0
\qquad\text{for }y\le z.
$

\item \textbf{Regime \(z\ge p_{11}\).}
We have
$
f(x,z)=r\kappa x,\qquad g(x,z)=1.
$
\end{enumerate}
\end{proposition}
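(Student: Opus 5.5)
The plan has two stages. First establish the one-step identities \eqref{eq:f_one_step}--\eqref{eq:g_one_step} for arbitrary \(z\), and then substitute the regime-wise formulas of Proposition~\ref{pro:FG_closed_form_with_proof} regime by regime.

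For the one-step identities, we condition on the outcome at time \(0\) under the policies \(\langle 1,z\rangle\) and \(\langle 0,z\rangle\). Under \(\langle 1,z\rangle\), the policy earns expected reward \(r\kappa x\) and work \(1\). It then moves to \(p_{11}\) with probability \(\kappa x\) and to \(\phi^1(x)\) otherwise, and from time \(1\) it follows the \(z\)-threshold policy. By the Markov property, the continuation equals \(\beta(\kappa x F(p_{11},z)+(1-\kappa x)F(\phi^1(x),z))\), and likewise for \(G\). Under \(\langle 0,z\rangle\), the policy earns \(0\) and moves deterministically to \(\phi^0(x)\), giving \(\beta F(\phi^0(x),z)\). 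Subtracting these yields \eqref{eq:f_one_step}--\eqref{eq:g_one_step}. Boundedness of \(F\) and \(G\), which holds since \(\beta<1\), makes every term well defined.

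Next we treat the regimes in turn.

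\textbf{Regimes (a) and (e).} In (a), \(z<p_{01}\) and every successor state \(p_{11},\phi^1(x),\phi^0(x)\) lies in \(\widetilde{\mathcal X}=[p_{01},p_{11}]\), so all of them exceed \(z\). Proposition~\ref{pro:FG_closed_form_with_proof}(a) then applies with the ``\(x>z\)'' branch. That branch gives \(G\equiv 1/(1-\beta)\), so the bracket in \(g\) is \(\kappa x+(1-\kappa x)-1=0\), hence \(g=1\). For \(f\), the linear form \(F(u,z)=r\kappa[x^0/(1-\beta)-(x^0-u)/(1-\beta\rho)]\) is affine in \(u\). The posterior-mean identity \eqref{eq:posterior_mean_identity} gives \(\kappa x p_{11}+(1-\kappa x)\phi^1(x)=\phi^0(x)\), so the bracket vanishes and \(f=r\kappa x\).

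In (e), \(z\ge p_{11}\) and all three successors lie in \(\widetilde{\mathcal X}\), hence are \(\le z\). Proposition~\ref{pro:FG_closed_form_with_proof}(e) gives \(F=G=0\) there, so again \(f=r\kappa x\) and \(g=1\).

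\textbf{Regime (b).} Here we substitute Proposition~\ref{pro:FG_closed_form_with_proof}(b). Writing \(F(u,z)=\beta^{\tau_\uparrow^0(u,z)}F^+(\phi^0_{\tau_\uparrow^0(u,z)}(u))\) and \(G(u,z)=\beta^{\tau_\uparrow^0(u,z)}/(1-\beta)\), we need to check that \(p_{11}>z\). This holds because \(z\le x^1<x^0<p_{11}\), so \(\tau_\uparrow^0(p_{11},z)=0\) and \(F(p_{11},z)=F^+(p_{11})\). Direct substitution then gives the stated formulas.

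\textbf{Regimes (c) and (d).} These use Lemma~\ref{lem:fg_decomp_preACK}, Lemma~\ref{lem:tildef_tildetheta_relation}, and the closed forms for \(F(p_{11},z)\) and \(G(p_{11},z)\) from Lemma~\ref{lem:renewal_first_ACK}(c). With \(\tilde f=r\tilde\theta\), we get
\[
f=r\tilde\theta+\beta\tilde\theta\,\frac{r\widetilde\Theta(p_{11},z)}{D(z)}=\frac{r\tilde\theta}{D(z)}\bigl(D(z)+\beta\widetilde\Theta(p_{11},z)\bigr)=\frac{r\tilde\theta}{D(z)},
\]
and \(g\) follows the same way. These formulas hold for every \(z\), so in particular in (d).

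For the finite-sum claim in (d), Lemma~\ref{lem:threshold_regime_dynamics}(d) shows that the skeleton from \(y\le z\) is \(0^\infty\). Hence \(\widetilde A_t\equiv 0\), and the pre-ACK series vanish. From \(y>z\) the skeleton is \(1^n0^\infty\), so the series truncate as in Proposition~\ref{pro:FG_closed_form_with_proof}(d).

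The main obstacle is the bookkeeping in regime (b): we must make sure that the successors \(\phi^a(x)\) are handled correctly when they lie above \(z\), in which case \(\tau_a=0\), and that the convention \(\tau_\uparrow^0(u,z)=0\) for \(u>z\) is consistent with Proposition~\ref{pro:FG_closed_form_with_proof}(b) for all \(u\in\mathcal X\).
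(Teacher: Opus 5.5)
Your proposal is correct and follows essentially the same route as the paper: one-step conditioning at time \(0\) for \eqref{eq:f_one_step}--\eqref{eq:g_one_step}, direct substitution of Proposition~\ref{pro:FG_closed_form_with_proof} for regimes (a), (b), and (e), and Lemmas~\ref{lem:fg_decomp_preACK}, \ref{lem:tildef_tildetheta_relation}, and \ref{lem:renewal_first_ACK}(c) for regimes (c)--(d). You also write out the cancellations in (a), (b), and (e), using the affine form of \(F(\cdot,z)\), the posterior-mean identity, and \(\tau_\uparrow^0(p_{11},z)=0\); the paper leaves these steps as ``substituting''.
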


The proof is deferred to Appendix~\ref{app:marginal_metric_proofs}.

\begin{lemma}[Marginal-work positivity in the tractable regimes]
\label{lem:pcli1_strong_easy}
For every \(x\in\mathcal X\):
\begin{enumerate}[label=\textup{(\alph*)},leftmargin=*]
\item If \(z<p_{01}\) or \(z\ge p_{11}\), then \(g(x,z)=1\).

\item If \(p_{01}\le z\le x^1\), then \(g(x,z)\ge 1-\beta\).

\item If \(x^0\le z<p_{11}\), then \(g(x,z)\ge 1-\beta\).
\end{enumerate}
\end{lemma}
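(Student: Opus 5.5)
\textbf{Part (a)} needs no new work: in the regimes \(z<p_{01}\) and \(z\ge p_{11}\), Proposition~\ref{pro:fg_closed_form}(a),(e) gives \(g(x,z)=1\) directly. For parts (b) and (c), I will start from the one-step identity \eqref{eq:g_one_step},
\[
g(x,z)=1+\beta\Bigl(\kappa x\,G(p_{11},z)+(1-\kappa x)\,G(\phi^1(x),z)-G(\phi^0(x),z)\Bigr).
\]
I will bound the bracket from below using three ingredients: the monotonicity of \(G(\cdot,z)\) in the tractable regimes (Lemma~\ref{lem:G_monotone_tractable}), the bounds \(0\le G\le 1/(1-\beta)\), and the location of \(x\) relative to the fixed points (Lemma~\ref{lem:phi1_iterates}(b)).

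\textbf{Part (b), \(p_{01}\le z\le x^1\).} I will use the closed form from Proposition~\ref{pro:fg_closed_form}(b),
\[
g(x,z)=1+\frac{\beta}{1-\beta}\bigl(\kappa x+(1-\kappa x)\beta^{\tau_1}-\beta^{\tau_0}\bigr).
\]
Since \(\kappa x+(1-\kappa x)\beta^{\tau_1}\ge\beta^{\tau_1}\), it suffices to show \(\tau_1\le\tau_0+1\). Then the bracket is at least \(\beta^{\tau_0}(\beta-1)\), so \(g\ge 1-\beta^{\tau_0+1}\ge 1-\beta\).

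To prove \(\tau_1\le\tau_0+1\), split on \(\phi^1(x)\). If \(\phi^1(x)>z\), then \(\tau_1=0\). Otherwise \(\phi^1(x)\le z\le x^1\), so \(x\le x^1\), because \(\phi^1\) is increasing with fixed point \(x^1\). Lemma~\ref{lem:phi1_iterates}(b) then gives \(\phi^1(x)\ge x\). Monotonicity of the passive iterates yields
\[
\phi^0_{\tau_0+1}(\phi^1(x))\ge\phi^0_{\tau_0+1}(x)=\phi^0_{\tau_0}(\phi^0(x))>z,
\]
hence \(\tau_1\le\tau_0+1\).

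\textbf{Part (c), \(x^0\le z<p_{11}\).} If \(\phi^0(x)\le z\), then \(G(\phi^0(x),z)=0\) because the passive region is trapping (Proposition~\ref{pro:FG_closed_form_with_proof}(d)), so \(g\ge 1\). Otherwise \(\phi^0(x)>z\ge x^0\), which forces \(x>\phi^0(x)>z\). Since \(\phi^0(x)\le p_{11}\), monotonicity gives \(G(p_{11},z)\ge G(\phi^0(x),z)\). The bracket is then at least \(-(1-\kappa x)\bigl(G(\phi^0(x),z)-G(\phi^1(x),z)\bigr)\), and it remains to show
\[
G(\phi^0(x),z)-G(\phi^1(x),z)\le 1.
\]
I will set \(y=\phi^0(x)\). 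Because \(x\ge y\), we have \(\phi^1(x)\ge\phi^1(y)\), so by monotonicity \(G(\phi^1(x),z)\ge G(\phi^1(y),z)\). Since \(y>z\), the work equation \eqref{eq:gseveq} gives
\[
G(y,z)=1+\beta\bigl(\kappa y\,G(p_{11},z)+(1-\kappa y)\,G(\phi^1(y),z)\bigr).
\]
The task therefore reduces to comparing \(G(y,z)-1\) with \(G(\phi^1(y),z)\).

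\textbf{Main obstacle.} This last comparison is the step I expect to be hardest, because the naive bound leaves a term \(\beta G(p_{11},z)-(1-\beta)G(\phi^1(y),z)\) of indeterminate sign. The first thing I would try is a pathwise coupling using the explicit sums \(A^{(n)}\) and \(B^{(n)}\) of Proposition~\ref{pro:FG_closed_form_with_proof}(d). The process started at \(y\) is the process started at \(\phi^1(y)\) with one extra initial active period prepended, and the ACK reset at \(p_{11}\) contributes identical continuations. From this I would aim to show that removing the first active step lowers expected discounted work by at most \(1\). This should follow from \(\tau_\downarrow^1(\phi^1(y),z)=\tau_\downarrow^1(y,z)-1\) together with survival-probability comparisons \(\Gamma_t\) along the shifted skeleton. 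If that fails, I would fall back on checking the inequality directly from the finite-sum formulas in Proposition~\ref{pro:FG_closed_form_with_proof}(d) on each interval \(I_{n,m}(x)\).
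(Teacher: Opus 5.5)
Parts (a) and (b) are correct and follow the paper's proof. Your split on whether \(\phi^1(x)>z\), together with \(\phi^0_{\tau_0+1}(\phi^1(x))\ge\phi^0_{\tau_0}(\phi^0(x))>z\), is a slightly cleaner way to get \(\tau_1\le\tau_0+1\) than the paper's explicit \(\rho\)-computation.

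Part (c) has a genuine gap. Once you discard the nonnegative term \(\kappa x\bigl(G(p_{11},z)-G(\phi^0(x),z)\bigr)\), the inequality you say remains, \(G(\phi^0(x),z)-G(\phi^1(x),z)\le 1\), is false. So neither the coupling nor an intervalwise check on \(I_{n,m}(x)\) can prove it. To see this, fix \(z\in[x^0,p_{11})\) and put \(x_*=(z-p_{01})/\rho\), so that \(\phi^0(x_*)=z\) and \(z\le x_*<1\). Since \(\phi^1(x_*)<\phi^0(x_*)=z\), continuity gives some \(x>x_*\) with \(\phi^1(x)<z<\phi^0(x)\). Then \(G(\phi^1(x),z)=0\), because \([0,z]\) is trapping. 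Moreover \(\phi^1(\phi^0(x))\le\phi^1(x)<z\), so \eqref{eq:gseveq} gives \(G(\phi^0(x),z)=1+\beta\kappa\,\phi^0(x)\,G(p_{11},z)>1\), as \(G(p_{11},z)\ge 1\). The same mechanism defeats your further reduction: for \(y>z\ge\phi^1(y)\) one has \(G(y,z)-1-G(\phi^1(y),z)=\beta\kappa y\,G(p_{11},z)>0\), so the ``indeterminate'' term is in fact strictly positive there. Deleting the first active step removes not only one unit of work but also the ACK-reset continuation \(\beta\kappa y\,G(p_{11},z)\). Hence the ``at most \(1\)'' claim behind your coupling is wrong.

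The missing idea is to keep the ACK term. Compare \(G(\phi^0(x),z)\) with the whole active continuation \(H_z(y)=\kappa y\,G(p_{11},z)+(1-\kappa y)\,G(\phi^1(y),z)\), so that the bracket in \eqref{eq:g_one_step} is exactly \(H_z(x)-G(\phi^0(x),z)\). The algebraic split you already used shows that \(H_z\) is nondecreasing on \((z,1]\). For \(y_1\le y_2\), \(H_z(y_2)-H_z(y_1)=\kappa(y_2-y_1)\bigl(G(p_{11},z)-G(\phi^1(y_2),z)\bigr)+(1-\kappa y_1)\bigl(G(\phi^1(y_2),z)-G(\phi^1(y_1),z)\bigr)\ge0\). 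This uses Lemma~\ref{lem:G_monotone_tractable}, monotonicity of \(\phi^1\), and \(\phi^1(y_2)\le p_{11}\). When \(\phi^0(x)>z\), the work equation and \(\phi^0(x)\le x\) give \(G(\phi^0(x),z)=1+\beta H_z(\phi^0(x))\le 1+\beta H_z(x)\). Hence the bracket is at least \((1-\beta)H_z(x)-1\ge -1\), and \(g(x,z)\ge 1-\beta+\beta(1-\beta)H_z(x)\ge 1-\beta\). This is the paper's argument. In the example above the bracket equals \(\kappa G(p_{11},z)\bigl(x-\beta\phi^0(x)\bigr)-1>-1\), so the discarded term \(\kappa x\,G(p_{11},z)\) is exactly what absorbs the excess.
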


The proof is deferred to Appendix~\ref{app:marginal_metric_proofs}.

\subsection{Partial closed-form formulas for the MP index}
\label{s:ia-partial-index}

We now specialize the marginal formulas to the MP index
$
m(x)\triangleq m(x,x).
$
The next lemma gives general identities for \(f(x,x)\), \(g(x,x)\), and \(m(x)\) in terms of the pre-ACK metrics.

\begin{lemma}[General formulas for \(f(x,x)\), \(g(x,x)\), and \(m(x)\)]
\label{lem:diag_fg_m_general}
For every \(x\in\mathcal X\), define
\begin{equation}
\label{eq:Dx_def_diag_general}
D(x)\triangleq 1-\beta\,\widetilde\Theta(p_{11},x).
\end{equation}
Then \(D(x)>0\), and
\begin{align}
\label{eq:f_diag_general}
f(x,x)
&=
\frac{\tilde f(x,x)}{D(x)}
=
\frac{r\,\tilde\theta(x,x)}{D(x)},\\
\label{eq:g_diag_general}
g(x,x)
&=
\tilde g(x,x)+\frac{\beta}{r}\,\frac{\tilde f(x,x)}{D(x)}\,\widetilde G(p_{11},x)
=
\tilde g(x,x)+\beta\,\frac{\tilde\theta(x,x)}{D(x)}\,\widetilde G(p_{11},x).
\end{align}
Consequently,
\begin{align}
\label{eq:m_diag_general}
m(x)
&=
\frac{\tilde f(x,x)}
{D(x)\,\tilde g(x,x)+\frac{\beta}{r}\,\tilde f(x,x)\,\widetilde G(p_{11},x)} =
\frac{r\,\tilde\theta(x,x)}
{D(x)\,\tilde g(x,x)+\beta\,\tilde\theta(x,x)\,\widetilde G(p_{11},x)}.
\end{align}
\end{lemma}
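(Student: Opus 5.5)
The plan is to specialize the general marginal-metric decomposition of Lemma~\ref{lem:fg_decomp_preACK} to the diagonal \(z=x\). That lemma holds for every \(x\in\mathcal X\) and \(z\in\R\), so no regime split is needed. It gives
\[
f(x,x)=\tilde f(x,x)+\beta\,\tilde\theta(x,x)\,F(p_{11},x),\qquad
g(x,x)=\tilde g(x,x)+\beta\,\tilde\theta(x,x)\,G(p_{11},x).
\]
The remaining work is to substitute closed forms for the continuation values \(F(p_{11},x)\) and \(G(p_{11},x)\), and to use Lemma~\ref{lem:tildef_tildetheta_relation} to trade \(\tilde f\) for \(r\tilde\theta\).

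The first step is positivity of \(D(x)\). Lemma~\ref{lem:renewal_first_ACK}\textup{(c)}, applied with \(z=x\), states that \(0\le\beta\,\widetilde\Theta(p_{11},x)<1\). Hence \(D(x)=1-\beta\widetilde\Theta(p_{11},x)>0\), and the same lemma gives
\[
F(p_{11},x)=\frac{r\,\widetilde\Theta(p_{11},x)}{D(x)},\qquad G(p_{11},x)=\frac{\widetilde G(p_{11},x)}{D(x)}.
\]
If one wants to double-check the strict inequality independently, note that \(\widetilde\Theta\le 1\) trivially, since \(\beta^{\tau^{\mathrm{ack}}}\le 1\). Multiplying by \(\beta<1\) then makes the product strictly less than \(1\), so nothing delicate is involved.

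Next, substitute into the expression for \(f\) and write \(\tilde f=r\tilde\theta\):
\[
f(x,x)=r\tilde\theta(x,x)\Bigl(1+\frac{\beta\widetilde\Theta(p_{11},x)}{D(x)}\Bigr)=\frac{r\tilde\theta(x,x)}{D(x)}=\frac{\tilde f(x,x)}{D(x)}.
\]
This is \eqref{eq:f_diag_general}. The one small algebraic point is \(1+\beta\widetilde\Theta/D=(D+\beta\widetilde\Theta)/D=1/D\). For \(g\), substituting \(G(p_{11},x)\) gives the second form of \eqref{eq:g_diag_general} directly, and replacing \(\tilde\theta\) by \(\tilde f/r\) gives the first form. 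This argument is just Proposition~\ref{pro:fg_closed_form}\textup{(c)} evaluated at \(z=x\), but routing it through Lemma~\ref{lem:fg_decomp_preACK} avoids having to restrict to the intermediate regime.

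Finally, compute \(m(x)=f(x,x)/g(x,x)\). Multiply numerator and denominator by \(D(x)>0\) to obtain \eqref{eq:m_diag_general} in both forms. I expect no real obstacle. The only point needing care is that the ratio requires \(g(x,x)\neq 0\), which is implicit in the definition of the MP index (it is (PCLI1) where assumed), so the identity should be read as holding wherever \(g(x,x)\neq0\). The substantive content, namely the renewal identities and \(\tilde f=r\tilde\theta\), is already supplied by the earlier lemmas.
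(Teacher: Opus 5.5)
Your proposal is correct and follows the paper's proof essentially step for step. You apply Lemma~\ref{lem:renewal_first_ACK}\textup{(c)} at $z=x$ to get $D(x)>0$ and the closed forms for $F(p_{11},x)$ and $G(p_{11},x)$, substitute these into Lemma~\ref{lem:fg_decomp_preACK} at $z=x$, use $\tilde f=r\tilde\theta$ and $1+\beta\widetilde\Theta/D=1/D$, and then take the ratio. Two minor points beyond the paper's argument: your bound $\beta\widetilde\Theta(p_{11},x)\le\beta<1$ justifies $D(x)>0$ a bit more directly than the paper does, and you are right to note that the formula for $m(x)$ presupposes $g(x,x)\neq 0$.
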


The proof is deferred to Appendix~\ref{app:marginal_metric_proofs}.

The next proposition gives exact formulas for the MP index in the tractable regimes. In the low- and high-belief
regions it coincides with the myopic index \(r\kappa x\), while on \([x^0,p_{11}]\) it admits an exact finite-sum
representation.

\begin{proposition}[Exact formulas for the MP index in the tractable regimes]
\label{pro:partial_myopic_index}
\begin{enumerate}[label=(\alph*),leftmargin=*]
\item For \(x\in[0,x^1]\),
$m(x)=r\kappa x.$

\item For \(x\in[x^0,p_{11}]\), let
$
u_t\triangleq \phi_t^1(p_{11}),\enspace t\ge0,
$
and
$
\Gamma_0^{11}\triangleq 1,
\enspace
\Gamma_t^{11}\triangleq \prod_{j=0}^{t-1}(1-\kappa u_j),\enspace t\ge1.
$
Then
\begin{align}
\label{eq:Gtilde_p11_diag_finite_tau}
\widetilde G(p_{11},x)
&=
\sum_{t=0}^{\tau_{\downarrow}^1(p_{11},x)-1}\beta^t\,\Gamma_t^{11},\\
\label{eq:Thetatilde_p11_diag_finite_tau}
\widetilde\Theta(p_{11},x)
&=
\kappa\sum_{t=0}^{\tau_{\downarrow}^1(p_{11},x)-1}\beta^t\,\Gamma_t^{11}\,u_t,
\end{align}
and
\begin{align}
\label{eq:f_closed_x0_p11}
f(x,x)
&=
\frac{r\kappa x}{1-\beta\,\widetilde\Theta(p_{11},x)},\\
\label{eq:g_closed_x0_p11}
g(x,x)
&=
\frac{1-\beta\,\widetilde\Theta(p_{11},x)+\beta\kappa x\,\widetilde G(p_{11},x)}
{1-\beta\,\widetilde\Theta(p_{11},x)},\\
\label{eq:m_closed_x0_p11}
m(x)
&=
\frac{r\kappa x}
{1-\beta\,\widetilde\Theta(p_{11},x)+\beta\kappa x\,\widetilde G(p_{11},x)}.
\end{align}
Equivalently,
\begin{equation}
\label{eq:m_closed_x0_p11_sum}
m(x)
=
\frac{r\kappa x}
{1+\beta\kappa\displaystyle\sum_{t=0}^{\tau_{\downarrow}^1(p_{11},x)-1}\beta^t\,\Gamma_t^{11}\,\bigl(x-u_t\bigr)}.
\end{equation}

\item For \(x\in[p_{11},1]\),
$
m(x)=r\kappa x.
$
\end{enumerate}
\end{proposition}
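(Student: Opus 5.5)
The plan is to specialize Lemma~\ref{lem:diag_fg_m_general}, i.e.\ the identity \(m(x)=r\tilde\theta(x,x)/(D(x)\tilde g(x,x)+\beta\tilde\theta(x,x)\widetilde G(p_{11},x))\), to \(z=x\) in each regime. The work is to evaluate the pre-ACK marginals \(\tilde\theta(x,x)\) and \(\tilde g(x,x)\) from the one-step identities \eqref{eq:tildeg_one_step_new}--\eqref{eq:tildetheta_one_step_new}. This requires knowing \(\widetilde G\) and \(\widetilde\Theta\) at \(\phi^0(x)\) and \(\phi^1(x)\) under the \(x\)-threshold policy. The key structural facts are \(\phi^1(x)\le\phi^0(x)\) (Remark~\ref{rem:phi01x}) and the location of \(x\) relative to the fixed points \(x^1,x^0\) (Lemma~\ref{lem:phi1_iterates} and the closed form \eqref{eq:phi0_iter}).

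For part (a), \(x\le x^1<x^0\), I would use Proposition~\ref{pro:fg_closed_form}(a),(b) rather than the pre-ACK route. Since \(x\le x^1<x^0\), we have \(\phi^0(x)>x\). Also \(\phi^1(x)\ge x\), with equality only at \(x=x^1\), so the strict inequality \(\phi^1(x)>z=x\) fails when \(x=x^1\); I must handle this carefully. In the generic case \(x<x^1\), both successors exceed \(z=x\), so \(\tau_0=\tau_1=0\). Proposition~\ref{pro:fg_closed_form}(b) then gives \(g(x,x)=1+\frac{\beta}{1-\beta}(\kappa x+1-\kappa x-1)=1\). It also gives \(f(x,x)=r\kappa x+\beta\,r\kappa\,(\kappa x F^{+}(p_{11})/(r\kappa)+\cdots)\), where the constant terms \(x^0/(1-\beta)\) cancel. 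What remains is proportional to \(\kappa x\,p_{11}+(1-\kappa x)\phi^1(x)-\phi^0(x)\), which vanishes by the posterior-mean identity \eqref{eq:posterior_mean_identity}. Hence \(m(x)=r\kappa x\).

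The boundary point \(x=x^1\) is the obstacle. There \(\phi^1(x^1)=x^1\not> z\), so \(\tau_1=1\) and the cancellation no longer occurs directly. I would try to use the linearity of \(F^{+}\) and the fact that from \(x^1\) one passive step lands at \(\phi^0(x^1)>x^1\). If this does not give exact agreement, I would instead obtain \(m(x^1)=r\kappa x^1\) by continuity from the left. For \(x<p_{01}\), part (a) of Proposition~\ref{pro:fg_closed_form} gives the result immediately.

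For part (b), \(x\in[x^0,p_{11}]\), the passive region \([0,x]\) is trapping by Lemma~\ref{lem:tauack_finite_alternating}(b). Since \(\phi^1(x)<\phi^0(x)\le x\) (because \(x\ge x^0\)), both successors are passive, and \(\widetilde G,\widetilde\Theta\) vanish there by Proposition~\ref{pro:fg_closed_form}(d). Hence \(\tilde g(x,x)=1\) and \(\tilde\theta(x,x)=\kappa x\). Substituting into Lemma~\ref{lem:diag_fg_m_general} gives \eqref{eq:f_closed_x0_p11}--\eqref{eq:m_closed_x0_p11}.

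The finite sums \eqref{eq:Gtilde_p11_diag_finite_tau}--\eqref{eq:Thetatilde_p11_diag_finite_tau} follow from the itinerary \(1^n0^\infty\) with \(n=\tau_\downarrow^1(p_{11},x)\) (Lemma~\ref{lem:threshold_regime_dynamics}(d)) and the product formula \eqref{eq:GammatxzProd}. At \(x=p_{11}\), the itinerary from \(p_{11}\) is \(0^\infty\), so \(n=0\) and the sums are empty. Combining \(1-\beta\kappa\sum\beta^t\Gamma^{11}_tu_t+\beta\kappa x\sum\beta^t\Gamma^{11}_t\) gives \eqref{eq:m_closed_x0_p11_sum}.

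For part (c), \(x\ge p_{11}\), Proposition~\ref{pro:fg_closed_form}(e) gives \(f=r\kappa x\) and \(g=1\) directly.
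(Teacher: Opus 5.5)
Parts (b) and (c), and part (a) for \(x<x^1\), follow the paper's proof essentially step for step. In (b) you go through Lemma~\ref{lem:diag_fg_m_general}, which only repackages Lemmas~\ref{lem:renewal_first_ACK} and~\ref{lem:fg_decomp_preACK}; the paper applies those directly.

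\textbf{The gap is the endpoint \(x=x^1\)}, which part (a) includes. You correctly see that \(\tau_1=1\) there, but you only gesture at a computation. Your fallback, obtaining \(m(x^1)=r\kappa x^1\) \emph{by continuity from the left}, is not legitimate. Continuity of \(m\) is part of \textup{(PCLI2)}, which this proposition is meant to help establish, and nothing available a priori gives it. Moreover, the diagonal marginal metrics genuinely jump at \(x^1\):
\begin{itemize}
\item for \(p_{01}\le x<x^1\) you have \(g(x,x)=1\) and \(f(x,x)=r\kappa x\);
\item Proposition~\ref{pro:fg_closed_form}(b) with \(\tau_0=0\), \(\tau_1=1\) gives \(g(x^1,x^1)=1-\beta(1-\kappa x^1)<1\), and \(f(x^1,x^1)=r\kappa x^1\bigl(1-\beta(1-\kappa x^1)\bigr)\).
\end{itemize}
Both metrics thus jump by the same factor. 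That is why the ratio turns out continuous, but this can only be discovered by computing the value at \(x^1\) exactly, not assumed.

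Your first idea does work, and it is what the paper does. At \(z=x=x^1\) one has \(\phi^1(x^1)=x^1\) and \(\phi^0(x^1)>x^1\), so \(\widehat\phi_0=\widehat\phi_1=\phi^0(x^1)\). Write \(F^{+}(u)=C+Ku\) with \(K=r\kappa/(1-\beta\rho)\). In \(f-r\kappa x\,g\), the coefficients of \(C\) sum to a multiple of \(1-\beta\). The linear part simplifies via the posterior-mean identity \eqref{eq:posterior_mean_identity}, \(\kappa x\,p_{11}+(1-\kappa x)\phi^1(x)=\phi^0(x)\). Together with \(x^0(1-\rho)=p_{01}\), everything collapses: for the \((\tau_0,\tau_1)=(0,1)\) formulas one finds
\(f-r\kappa x\,g=\beta r\kappa(1-\kappa x)\bigl(x-\phi^1(x)\bigr)\).
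This is a multiple of the fixed-point quadratic \eqref{eq:phi1xxqe}, so it vanishes at \(x=x^1\). Since \(g(x^1,x^1)>0\), this gives \(m(x^1)=r\kappa x^1\) exactly. Left-continuity of \(m\) at \(x^1\) is then a consequence of the computation rather than an input to it.
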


The proof is deferred to Appendix~\ref{app:marginal_metric_proofs}.

\begin{proposition}[Continuity and monotonicity of the MP index in the tractable regimes]
\label{pro:m_continuous_monotone_tractable}
Let $u_t,$
$\Gamma_0^{11}\triangleq 1,$ and
$\Gamma_t^{11}\triangleq \prod_{j=0}^{t-1}(1-\kappa u_j)$ be as in Proposition \ref{pro:partial_myopic_index}.
Also let
$
N_0\triangleq \tau_{\downarrow}^1(p_{11},x^0),
$
which is finite since \(u_t\downarrow x^1<x^0\). For \(n=1,\dots,N_0\), define
$
J_n\triangleq [\max\{x^0,u_n\},\,u_{n-1}),
\enspace
A_n\triangleq \sum_{t=0}^{n-1}\beta^t\,\Gamma_t^{11},
\enspace
B_n\triangleq \sum_{t=0}^{n-1}\beta^t\,\Gamma_t^{11}u_t.
$
Then the following hold.
\begin{enumerate}[label=\textup{(\alph*)},leftmargin=*]
\item On \([0,x^1]\), we have \(m(x)=r\kappa x\). Hence \(m(\cdot)\) is continuous and increasing on \([0,x^1]\).

\item For each \(n=1,\dots,N_0\) and every \(x\in J_n\),
\begin{equation}
\label{eq:m_piecewise_mobius_tractable}
m(x)=\frac{r\kappa x}{1+\beta\kappa(A_nx-B_n)}.
\end{equation}
Hence \(m(\cdot)\) is continuous and increasing on each interval \(J_n\).

\item The function \(m(\cdot)\) is continuous at every breakpoint \(u_n\) with \(1\le n\le N_0-1\), and also at \(x=p_{11}\).

\item On \([p_{11},1]\), we have \(m(x)=r\kappa x\). Hence \(m(\cdot)\) is continuous and increasing on \([p_{11},1]\).
\end{enumerate}

Consequently, \(m(\cdot)\) is continuous and increasing on each tractable regime
$
[0,x^1],\enspace [x^0,p_{11}],\enspace [p_{11},1],
$
and in particular is continuous at \(x=p_{11}\).
\end{proposition}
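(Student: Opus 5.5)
The plan is to read everything off the exact formula \eqref{eq:m_closed_x0_p11_sum} of Proposition~\ref{pro:partial_myopic_index}, after identifying the value of \(\tau_{\downarrow}^1(p_{11},x)\) on each interval \(J_n\). Parts (a) and (d) are immediate from Proposition~\ref{pro:partial_myopic_index}(a),(c): \(m(x)=r\kappa x\) there, which is linear with positive slope \(r\kappa\).

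\textbf{Step 1: the intervals \(J_n\) partition \([x^0,p_{11})\).} By Lemma~\ref{lem:phi1_iterates}(b), since \(p_{11}>x^0>x^1\), the sequence \(u_t=\phi_t^1(p_{11})\) decreases strictly to \(x^1\). I will use the convention consistent with the itinerary \(1^{t_1}0^\infty\) of Lemma~\ref{lem:threshold_regime_dynamics}(d):
\[
\tau_\downarrow^1(y,z)=\min\{t\ge0:\phi_t^1(y)\le z\}.
\]
Then \(N_0\) is the first \(n\) with \(u_n\le x^0\). The intervals \(J_n=[\max\{x^0,u_n\},u_{n-1})\), \(n=1,\dots,N_0\), are therefore disjoint and their union is \([x^0,p_{11})\), since \(u_0=p_{11}\). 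For \(x\in J_n\) we have \(u_0,\dots,u_{n-1}>x\) and \(u_n\le x\) (for \(n=N_0\) this uses \(u_{N_0}\le x^0\le x\)). Hence \(\tau_\downarrow^1(p_{11},x)=n\) on \(J_n\).

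\textbf{Step 2: part (b).} Substituting \(\tau_\downarrow^1(p_{11},x)=n\) into \eqref{eq:m_closed_x0_p11_sum} gives the denominator
\[
1+\beta\kappa\sum_{t<n}\beta^t\Gamma_t^{11}(x-u_t)=1+\beta\kappa(A_nx-B_n),
\]
which is \eqref{eq:m_piecewise_mobius_tractable}. This denominator equals \(D(x)+\beta\kappa x\,\widetilde G(p_{11},x)\) by \eqref{eq:Gtilde_p11_diag_finite_tau}--\eqref{eq:Thetatilde_p11_diag_finite_tau}, so it is positive by Lemma~\ref{lem:diag_fg_m_general}. Thus \(m\) is a Möbius function of \(x\) with no pole on \(J_n\), hence continuous there. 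Differentiating,
\[
m'(x)=\frac{r\kappa\,(1-\beta\kappa B_n)}{\bigl(1+\beta\kappa(A_nx-B_n)\bigr)^2}.
\]
On \(J_n\) we have \(\beta\kappa B_n=\beta\widetilde\Theta(p_{11},x)\), so \(1-\beta\kappa B_n=D(x)>0\) and \(m\) is increasing on \(J_n\).

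\textbf{Step 3: part (c).} At a breakpoint \(u_n\) with \(1\le n\le N_0-1\), the right value is given by the formula on \(J_n\), and the left limit by the formula on \(J_{n+1}\) evaluated at \(x=u_n\). The two denominators differ by exactly
\[
\beta\kappa\,\beta^n\Gamma_n^{11}(u_n-u_n)=0,
\]
so the one-sided values agree. At \(x=p_{11}\), we have \(\tau_\downarrow^1(p_{11},p_{11})=0\) and \(m(p_{11})=r\kappa p_{11}\). The left limit from \(J_1\) has denominator \(1+\beta\kappa(p_{11}-u_0)=1\), which gives the same value. The final sentence of the statement then follows by combining Steps 1--3.

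The main obstacle is bookkeeping rather than analysis. It consists of fixing the correct (weak versus strict) inequality convention for \(\tau_\downarrow^1\), so that the half-open intervals \(J_n\) match the step values of \(\tau_\downarrow^1(p_{11},\cdot)\), and of handling the last interval \(J_{N_0}\) truncated at \(x^0\). If the paper's convention turned out to be strict, I would shift the interval endpoints accordingly. The continuity argument is unchanged, because the extra summand vanishes at \(x=u_n\) in either case.
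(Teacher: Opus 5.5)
Your proposal is correct and follows essentially the same route as the paper. You identify $\tau_{\downarrow}^1(p_{11},x)=n$ on $J_n$, using the same weak-inequality convention the paper implicitly adopts, and substitute into the closed form of Proposition~\ref{pro:partial_myopic_index}(b). You then obtain monotonicity from $1-\beta\kappa B_n=1-\beta\widetilde\Theta(p_{11},x)>0$, and continuity at $u_n$ from the vanishing increment $\beta^n\Gamma_n^{11}(u_n-u_n)$ and at $p_{11}$ from $A_1=1$, $B_1=p_{11}$. Your explicit check that the denominator stays positive on $J_n$ is a small step the paper leaves implicit.
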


The proof is deferred to Appendix~\ref{app:marginal_metric_proofs}.

\begin{remark}[Where the MP index can differ from myopic]
Proposition~\ref{pro:partial_myopic_index} shows that possible departures of the MP index from the myopic index
\(r\kappa x\) are confined to the interval \((x^1,p_{11})\). On the subinterval \([x^0,p_{11}]\), the exact formula
\eqref{eq:m_closed_x0_p11} applies. When the PCL conditions hold, the same observation applies to the Whittle index.
\end{remark}

An illustrative parameter instance, together with diagnostic plots of the metrics
\(\widetilde F,\widetilde G,\widetilde\Theta,F,G,f,g\) and the diagonal MP index \(m(x)\), is presented in Appendix~\ref{app:illustrative_instance}. Those plots provide a concrete visualization of the tractable-regime formulas developed above and of the staircase structure that arises in the intermediate regime.

\subsection{What remains to be proven}
\label{s:open_gaps}

The tractable-regime analysis of Sections~\ref{s:tractable_regimes}--\ref{s:ia-partial-index}
already verifies the required PCL properties outside the intermediate threshold region
$
x^1<z<x^0.
$
Moreover, because the belief-update maps \(\phi^0\) and \(\phi^1\) satisfy the analogue of
\citet{danceSi19}'s Assumption~A2 after an increasing change of variables
(Lemma~\ref{lem:logit_conjugacy_A2}), the symbolic structure of threshold itineraries in this
intermediate region can be imported from the maps-with-gaps theory of \citet{danceSi19}; see
Theorem~\ref{thm:our_Thm12_mwords}, Corollary~\ref{cor:our_Cor13_pairs}, and
Proposition~\ref{pro:christoffel_interval_partition} below.

Accordingly, the remaining issue is no longer to describe the no-ACK threshold itineraries themselves, but to convert that symbolic description into the metric statements needed for full verification of
 \textup{(PCLI1)--(PCLI3)} on \(x^1<z<x^0\).
The subsections below push this reduction substantially further: they derive exact Christoffel-interval
formulas for the diagonal marginal metrics \(f(x,x)\), \(g(x,x)\), and \(m(x)\), and reduce the proof of
\textup{(PCLI2)} to an intervalwise monotonicity problem.

Thus, the main remaining analytic task is to establish \textup{(PCLI1)} and \textup{(PCLI2)} on
\(x^1<z<x^0\).
By contrast, \textup{(PCLI3)} is not an independent obstacle: as shown in
Subsection~\ref{s:pcli3_intermediate}, once \textup{(PCLI1)} and \textup{(PCLI2)} are available on that region,
\textup{(PCLI3)} follows from Proposition~6 of \citep[\S10.2]{nmmor20} via Assumption~4.

The symbolic organization of threshold itineraries on \(x^1<z<x^0\), including the
Christoffel--Sturmian structure inherited from the maps-with-gaps theory of \citet{danceSi19}, is
recorded in Appendix~\ref{app:word_structure}. That appendix identifies the itinerary patterns that
would need to be translated into metric statements in order to complete the remaining analytic verification.

\subsection{Reduction of \textup{(PCLI3)} to \textup{(PCLI1)} and \textup{(PCLI2)}}
\label{s:pcli3_intermediate}

Condition \textup{(PCLI3)} is not an independent obstacle in the present model. Indeed,
Proposition~6 in \citep[\S10.2]{nmmor20} shows that, under \textup{(PCLI1)} and
\textup{(PCLI2)}, \textup{(PCLI3)} follows once one verifies either Assumption~3
(piecewise-constant \(G(x,\cdot)\)) or the more structural Assumption~4, which requires that,
for each initial state \(x\), all states visited under any threshold policy lie almost surely
in a countable set \(D(x)\). For our model, Assumption~4 is natural and easy to verify.

\begin{proposition}[Verification of Assumption~4 for the present model]
\label{pro:ass4_present_model}
Fix an initial belief \(x\in\mathcal X\), and define
\begin{equation}
\label{eq:Dx_ass4_present_model}
D(x)\triangleq
\{\phi^{w}(x):\,w\in\{0,1\}^*\}
\ \cup\
\{\phi^{w}(p_{11}):\,w\in\{0,1\}^*\},
\end{equation}
where \(\{0,1\}^*\) denotes the set of all finite binary words, including the empty word.
Then \(D(x)\) is countable, and for every threshold \(z\in\R\),
\begin{equation}
\label{eq:trajectory_in_Dx}
\Prob_x^z\!\bigl\{ \{X(t)\}_{t\ge0}\subseteq D(x)\bigr\}=1.
\end{equation}
Hence Assumption~4 in \citep[\S10.2]{nmmor20} holds for the present model.
\end{proposition}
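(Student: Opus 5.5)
The plan is to prove countability first and then the almost-sure containment by induction on time. Here \(\phi^{w}\) for a finite word \(w=a_1a_2\cdots a_k\) denotes the composition \(\phi^{a_k}\circ\cdots\circ\phi^{a_1}\), with \(\phi^{\varnothing}\) the identity. The set \(\{0,1\}^*\) is a countable union over \(k\ge0\) of the finite sets \(\{0,1\}^k\). Hence each of the two families \(\{\phi^{w}(x)\}\) and \(\{\phi^{w}(p_{11})\}\) is the image of a countable set, and \(D(x)\) is countable. By Remark~\ref{rem:phi01x}, all such points lie in \(\mathcal X\), so \(D(x)\subseteq\mathcal X\).

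For the containment \eqref{eq:trajectory_in_Dx}, I will fix \(z\in\R\) and show that \(X(t)\in D(x)\) almost surely under \(\Prob_x^z\) for every \(t\ge0\). Intersecting countably many almost-sure events then yields the claim. I would prove the stronger pathwise statement that on every sample path \(X(t)\in D(x)\), proceeding by induction on \(t\).

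The base case holds because \(X(0)=x=\phi^{\varnothing}(x)\). For the inductive step, suppose \(X(t)=\phi^{w}(y)\) with \(y\in\{x,p_{11}\}\). By \eqref{eq:pdn}--\eqref{eq:pdnK01}, \(X(t+1)\) takes one of three values:
\begin{itemize}
\item \(\phi^0(X(t))=\phi^{w0}(y)\), if the action is passive;
\item \(\phi^1(X(t))=\phi^{w1}(y)\), on a NACK under the active action;
\item \(p_{11}=\phi^{\varnothing}(p_{11})\), on an ACK under the active action.
\end{itemize}
All three lie in \(D(x)\), and these exhaust the possible transitions. The threshold policy only selects which branch applies, so the argument is uniform in \(z\). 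It applies equally to the \(z^-\)-threshold policies and to the one-step deviation policies \(\langle a,z\rangle\), should Assumption~4 in \citep{nmmor20} be read as requiring this.

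I expect no real obstacle. The only point needing care is checking the exact wording of Assumption~4 in \citep[\S10.2]{nmmor20}: whether it requires a single countable set valid for all thresholds simultaneously, and whether it covers the initial state. The set \(D(x)\) defined in \eqref{eq:Dx_ass4_present_model} is independent of \(z\) and contains \(x\), so it satisfies either reading.
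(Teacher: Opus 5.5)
Your proposal is correct and follows essentially the same argument as the paper. The paper splits each sample path at ACK epochs: states before the first ACK are skeleton iterates \(\phi^{w}(x)\), and each post-ACK excursion consists of iterates \(\phi^{w}(p_{11})\). Your one-step induction over the three possible transitions (passive, NACK, ACK) is the same idea written out more formally, and it shows that the containment holds on every sample path, not merely almost surely.
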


\begin{proof}
The set \(\{0,1\}^*\) of finite binary words is countable, so \(D(x)\) is countable.
Now fix a threshold \(z\). Before the first ACK time, the belief state follows the no-ACK
skeleton under the strict-threshold policy, and hence every visited state is of the form
\(\phi^w(x)\) for some finite word \(w\). When an ACK occurs, the belief resets to \(p_{11}\)
at the next period. Thereafter, until the next ACK, the visited states are of the form
\(\phi^w(p_{11})\) for finite words \(w\). The same argument applies after every subsequent
ACK, because each post-ACK excursion again starts from \(p_{11}\).
Therefore every state visited along the sample path belongs to \(D(x)\), proving
\eqref{eq:trajectory_in_Dx}. This is precisely Assumption~4 in \citep[\S10.2]{nmmor20}.
\end{proof}

The preceding proposition has the following immediate consequence.

\begin{corollary}[Reduction of \textup{(PCLI3)} to \textup{(PCLI1)} and \textup{(PCLI2)}]
\label{cor:pcli3_reduction_ass4}
For the present model, whenever \textup{(PCLI1)} and \textup{(PCLI2)} hold on a given threshold region,
\textup{(PCLI3)} also holds on that region.
\end{corollary}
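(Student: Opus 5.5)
The corollary follows by combining Proposition~\ref{pro:ass4_present_model} with Proposition~6 of \citep[\S10.2]{nmmor20}. The plan is to check, in order, that the hypotheses of that result hold in our setting, and then to read off \textup{(PCLI3)}.

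\textbf{Step 1: standing hypotheses.} We first confirm that our single-project model meets the standing assumptions of the real-state framework in \citep{nmmor20}. The state space \(\mathcal X=[0,1]\) is a compact interval. The one-period reward \(R(x,a)=a r\kappa x\) is bounded and Borel. The transition kernel is given by \eqref{eq:pdn}--\eqref{eq:pdnK01} and depends measurably on \(x\). The discount factor satisfies \(\beta<1\). Hence \(F(x,z)\) and \(G(x,z)\) are well defined, bounded, and are the unique bounded solutions of \eqref{eq:fseveq}--\eqref{eq:gseveq}. In addition, \(z\mapsto G(x,z)\) is right-continuous for each \(x\), which is needed to define the Stieltjes measure \(G(x,dz)\). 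We obtain right-continuity by dominated convergence. As \(\zeta\searrow z\), the event \(\{X(t)>\zeta\}\) converges to \(\{X(t)>z\}\) along every sample path of the \(z\)-policy. This passes to the limit because, by Proposition~\ref{pro:ass4_present_model}, the sample path lives in the countable set \(D(x)\).

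\textbf{Step 2: Assumption~4.} Assumption~4 is exactly the content of Proposition~\ref{pro:ass4_present_model}. For each initial belief \(x\), every trajectory under every threshold policy lies almost surely in the countable set \(D(x)\), and this set does not depend on \(z\). The independence from \(z\) is the feature that Proposition~6 uses. It lets one write \(F(x,z_2)-F(x,z_1)\) as a sum of contributions from the countably many crossing points \(y\in D(x)\cap(z_1,z_2]\). At each such point, the jump of \(F(x,\cdot)\) equals \(m(y)\) times the jump of \(G(x,\cdot)\). This identity comes from the marginal-metric decomposition at \(y\), namely the one-step deviation at state \(y\), weighted by the discounted visiting measure of \(y\).

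\textbf{Step 3: localization to a region.} The main point needing care is the phrase ``on a given threshold region''. Proposition~6 in \citep{nmmor20} is stated for \textup{(PCLI1)}--\textup{(PCLI2)} holding globally. We therefore check that its proof is local in the threshold. The identity for \(z_1<z_2\) only uses \(g(x,z)>0\) and the values \(m(y)\) for \(z\in(z_1,z_2]\) and \(y\in(z_1,z_2]\). Consequently, if \textup{(PCLI1)}--\textup{(PCLI2)} hold for thresholds in an interval \(I\), the integral identity holds for every \(z_1<z_2\) in \(I\). In particular, on the tractable regimes \(G(x,\cdot)\) is a right-continuous step function by Proposition~\ref{pro:FG_closed_form_with_proof}. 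There, Assumption~3 holds as well, and the identity reduces to the jump-matching computation at the countably many breakpoints. Where regions meet, additivity of the Lebesgue--Stieltjes integral over adjacent intervals lets us concatenate the identities.
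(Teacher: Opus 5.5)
Your proposal is correct and takes the same route as the paper, whose proof only cites Proposition~\ref{pro:ass4_present_model} to obtain Assumption~4 of \citep{nmmor20} and then invokes Proposition~6 there. Your Steps~1 and~3 (right-continuity of $G(x,\cdot)$, the jump-matching sketch, and localization to a threshold region) go beyond what the paper writes, and they usefully address the ``on a given region'' wording that the paper leaves implicit. One repair is needed in the right-continuity argument: it should couple the $\zeta$- and $z$-policies on common randomness rather than appeal to the countability of $D(x)$, which is not what makes that step work.
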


\begin{proof}
By Proposition~\ref{pro:ass4_present_model}, Assumption~4 in \citep[\S10.2]{nmmor20} holds
for the present model. Hence Proposition~6 in \citep[\S10.2]{nmmor20} applies and yields
\textup{(PCLI3)} under \textup{(PCLI1)} and \textup{(PCLI2)}.
\end{proof}

Corollary~\ref{cor:pcli3_reduction_ass4} shows that \textup{(PCLI3)} is not an independent
obstacle herein. In particular, on the tractable threshold regimes, where
\textup{(PCLI1)} and \textup{(PCLI2)} have already been established, \textup{(PCLI3)} follows
immediately. Likewise, on the intermediate regime \(x^1<z<x^0\), once \textup{(PCLI1)} and
\textup{(PCLI2)} are proved, \textup{(PCLI3)} follows automatically by the same argument.
Thus the only genuinely unresolved analytical task is the verification of \textup{(PCLI1)} and
\textup{(PCLI2)} on that regime.

\section{Computational experiments}
\label{s:experiments}

This section complements the analytical results with large-scale computational experiments. Our goals are threefold: first, to provide broad numerical evidence for the remaining PCL-indexability conditions in parameter regimes not covered analytically; second, to assess the behavior of the MP index across representative problem families; and third, to benchmark the resulting MP index policy against standard alternatives and the Lagrangian dual upper bound. We begin with extensive numerical tests of \textup{(PCLI1)} and \textup{(PCLI2)}, and then turn to policy-performance comparisons on heterogeneous multi-project instances. Additional parameter-dependence plots for the MP index, together with implementation and reproducibility details, are reported in Appendix~\ref{app:extra_experiments}.

\subsection{Numerical verification of PCLI conditions}
\label{s:num_verify_pcli}

We begin with large-scale numerical tests of the two key PCL conditions that remain analytically unresolved in the intermediate regime, namely \textup{(PCLI1)} and \textup{(PCLI2)}. In both experiments, we explore a broad four-dimensional parameter grid in \((q,\rho,\kappa,\beta)\), with \(\rho\) parametrized as
$
\rho=\alpha(1-q),
$
so that the condition \(0<\rho<1-q\) is enforced.

\subsubsection{Testing condition \textup{(PCLI1)}}
\label{s:tcpcli1}

We first test the stronger condition
$
g(x,z)\ge 1-\beta.
$
Since this inequality has already been established analytically in the tractable regimes, the numerical study is restricted to the remaining strip
$
x^1\le z\le x^0,
$
where \(x^1\) and \(x^0\) are the fixed points of \(\phi^1\) and \(\phi^0\), respectively.

For each parameter tuple \((q,\rho,\kappa,\beta)\), we computed the minimum sampled slack
\[
\Delta_{\min}(q,\rho,\kappa,\beta)\triangleq \min_{x,z}\bigl(g(x,z)-(1-\beta)\bigr),
\]
over a finite grid with \(x\in[0,1]\) and \(z\in[x^1,x^0]\). The outer parameter grid consisted of \(14\)-point uniform grids on
\[
q\in[0.05,0.95],\qquad
\alpha\in[0.10,0.90],\qquad
\kappa\in[0.05,0.95],
\]
together with the discount-factor grid
$
\beta\in\{0.1,0.2,0.3,0.4,0.5,0.6,0.7,0.8,0.9,0.95,0.99\}.
$
This yields
$
14\times 14\times 14\times 11=30{,}184
$
parameter tuples.

For each tuple, we first computed \(x^1\) and \(x^0\), and then sampled \(x\) and \(z\) using endpoint-enriched cosine grids of size
$
N_x=N_u=121,
$
with
$
z=x^1+u(x^0-x^1),\qquad u\in[0,1].
$
Thus \(g(x,z)\) was evaluated at
$
121\times 121=14{,}641
$
sampled \((x,z)\)-pairs per tuple, for a total of
$
30{,}184\times 121\times 121=441{,}923{,}944
$
function evaluations.

The computation was implemented in Julia and run on \(8\) threads; the full experiment completed in about \(1.38\) hours. No sampled violation of \(g(x,z)\ge 1-\beta\) was found. The smallest observed slack over all sampled points was
$
\Delta_{\min}=2.60209967284708\times 10^{-4},
$
attained at
$
(q,\rho,\kappa,\beta)=\bigl(0.05,\;0.153462,\;0.95,\;0.1\bigr),
\enspace
(x^\star,z^\star)=\bigl(0.00273905,\;0.0504062\bigr).
$
Hence, this experiment provides strong numerical evidence that the stronger inequality \(g(x,z)\ge 1-\beta\) holds throughout the tested parameter region, including the intermediate strip \(x^1\le z\le x^0\).

\subsubsection{Testing condition \textup{(PCLI2)}}
\label{s:tcpcli2}

We next test condition \textup{(PCLI2)}, namely that the MP index
$
m(x)\triangleq f(x,x)/g(x,x)
$
is nondecreasing and continuous as a function of \(x\). We use the same outer parameter grid as in Subsection~\ref{s:tcpcli1}, and therefore again cover
$
14\times 14\times 14\times 11=30{,}184
$
parameter tuples.

For each tuple \((q,\rho,\kappa,\beta)\), we first computed the fixed points \(x^1\) and \(x^0\). We then sampled \(m(x)\) on a uniform grid over the core interval \([x^1,x^0]\), using
$
N_{\mathrm{mid}}=2001
$
equally spaced points, so that both endpoints were included exactly. To probe monotonicity and possible continuity effects at \(x^1\) and \(x^0\), we enlarged the interval to
\[
[x_L,x_R],\qquad
x_L=\max\{0,\;x^1-\eta(x^0-x^1)\},\qquad
x_R=\min\{1,\;x^0+\eta(x^0-x^1)\},
\]
with padding fraction
$
\eta=0.05,
$
and added
$
N_{\mathrm{side}}=201
$
uniformly spaced points on each side, excluding duplicated endpoints. Thus the nominal number of sampled \(x\)-values per tuple was
$
N_{\mathrm{mid}}+2N_{\mathrm{side}}=2001+2\cdot 201=2403,
$
for a total of
$
30{,}184\times 2403=72{,}532{,}152
$
evaluations of \(m(x)\).

To assess monotonicity, we computed the forward differences
$
\Delta_i\triangleq m(x_{i+1})-m(x_i),
$
and recorded both the minimum value over the full padded interval \([x_L,x_R]\) and the minimum value restricted to the core interval \([x^1,x^0]\). To assess continuity near the endpoints, we also recorded the one-step absolute differences immediately to the left and right of \(x^1\) and \(x^0\), and used their maximum as an endpoint continuity proxy.

The computation was implemented in Julia and run on \(8\) threads; the full experiment completed in about \(5.6\) minutes. No sampled violation of monotonicity was found. Over the padded interval \([x_L,x_R]\), the smallest forward difference was
$
1.350445\times 10^{-10},
$
attained for
$
(q,\rho,\kappa,\beta)=(0.95,\;0.005,\;0.05,\;0.99),
$
at a point near \(x\approx 0.954774\). Restricting attention to the core interval \([x^1,x^0]\), the smallest forward difference was
$
3.11747\times 10^{-10},
$
attained for
$
(q,\rho,\kappa,\beta)=(0.95,\;0.005,\;0.05,\;0.1),
$
near \(x\approx 0.954763\). Thus all sampled forward differences were positive, both on the core interval and on the enlarged interval used to probe the endpoints.

The largest observed endpoint continuity proxy was
$
6.846899\times 10^{-4}, 
$ which is
attained for
$
(q,\rho,\kappa,\beta)=(0.188462,\;0.730385,\;0.95,\;0.99).
$
For this case, the four one-step absolute differences around the endpoints were
\[
1.18554\times 10^{-4},\qquad
6.84690\times 10^{-4},\qquad
3.94556\times 10^{-4},\qquad
2.55875\times 10^{-5}.
\]
These values are small, and no visible jump-type behavior was detected at either endpoint in any sampled case.

Therefore, while this experiment does not constitute a proof, it provides strong numerical evidence in support of condition \textup{(PCLI2)}: throughout the tested parameter region, the sampled MP index \(m(x)\) was nondecreasing, including at and around the boundary points \(x^1\) and \(x^0\), and no visible discontinuities were observed.

\subsection{Policy benchmarking and dual bound comparisons}
\label{s:benchmark}

We next compare the MP index policy with three natural alternatives: the myopic policy, a round-robin policy, and a random policy. Here the myopic policy activates the \(M\) projects with the largest current one-step expected rewards \(r_n\kappa_n x_n\); the round-robin policy ignores beliefs and cycles deterministically through the project labels, activating the next \(M\) projects in a fixed cyclic order at each period; and the random policy ignores beliefs and activates \(M\) projects chosen uniformly at random at each period. The purpose of these experiments is twofold. First, we assess the practical value of the MP index policy in heterogeneous populations, where it need not coincide with the myopic rule. Second, we compare achieved rewards with the normalized Lagrangian dual upper bound in order to assess how close the best simulated policy comes to that benchmark and to look for evidence of asymptotic optimality as the system size grows.

\begin{remark}[Homogeneous-project ranking equivalence]
\label{rem:homogeneous_ranking_equiv}
Consider the homogeneous-project setting, where all projects share the same model parameters
\((p_{01},\rho,\kappa,r,\beta)\), and let \(I:\mathcal X\to\mathbb R\) be a common index function used to rank projects by their current belief states.
If \(I\) is nondecreasing, then selecting up to \(M\) projects with the largest index values is equivalent, up to tie-breaking, to selecting up to \(M\) projects with the largest beliefs.

In particular, the myopic index is
$
I_{\mathrm{my}}(x) \triangleq r\kappa x,
$
which is increasing in \(x\). Hence, if the MP index \(m(x)\) is nondecreasing on \(\mathcal X\) and nonnegative on the relevant state range, then the MP index policy and the myopic policy coincide in the homogeneous-project setting, up to tie-breaking among projects with equal beliefs. Consequently, meaningful benchmarking between the MP index and myopic rules requires heterogeneity across projects.
\end{remark}

We therefore focus on heterogeneous instances with two project types, denoted \(A\) and \(B\), intended to play the same conceptual role as the ``self-healing'' and ``fragile'' types used in the adherence-model experiments. Type \(A\) projects are assigned more favorable latent dynamics, while type \(B\) projects are assigned more fragile or persistent-bad dynamics. To obtain a reasonably rich but still interpretable grid, we considered \(4\) variants of type \(A\) crossed with \(3\) variants of type \(B\), yielding \(12\) scenario families in total. The four \(A\)-type parameter sets were
\[
( p_{01},\rho,\kappa,r )\in
\{(0.01,0.90,0.70,1),\ (0.03,0.80,0.70,1),\ (0.05,0.70,0.70,1),\ (0.02,0.85,0.55,1)\},
\]
and the three \(B\)-type parameter sets were
\[
( p_{01},\rho,\kappa,r )\in
\{(0.10,0.10,0.95,1),\ (0.15,0.05,0.95,1),\ (0.08,0.20,0.95,1)\}.
\]
Thus each scenario consists of one \(A\)-type and one \(B\)-type parameter vector.

Within each scenario, we varied the population composition over the nine type-\(A\) proportions
$
p_A\in\{0.1,\ldots,0.9\}$, $p_B=1-p_A,
$
capacity ratios
$
\alpha_M\in\{0.05,0.10,0.15,0.20,0.25,0.30,0.40,0.50\},
$
and  system sizes
$
N\in\{100,200,400,800,1600\}.
$
All projects were initialized at the common belief \(x_0=0.5\), so that performance differences are driven by the latent dynamics and the evolving information state rather than by exogenous heterogeneity in the initial beliefs. We chose all values of \(N\) as multiples of \(20\), so that \(\alpha_MN\) is an integer for every tested capacity ratio and the realized capacity \(M\) matches the target ratio exactly. Altogether, the grid comprises
$
12\times 9\times 8\times 5 = 4320
$
problem instances.

For each instance, we estimated the normalized discounted reward per project,
\[
J^\pi \approx \frac{1-\beta}{N}\,\Ex\!\left[\sum_{t=0}^{T-1}\beta^t R^\pi(t)\right],
\]
under each benchmark policy \(\pi\), using \(T=300\) periods, \(1000\) independent Monte Carlo replications, discount factor \(\beta=0.99\), and \(95\%\) confidence intervals. The MP index policy used precomputed type-specific index lookup tables on a fine belief grid, with linear interpolation between grid points. The myopic policy ranked projects by the one-period expected reward proxy \(r_n\kappa_n x_n\). The round-robin and random rules served as low-information baselines.

For each realized instance \((N,M)\), we also computed the normalized Lagrangian dual upper bound. A direct implementation of the bound routine proved to be the dominant computational bottleneck. We therefore replaced it by a grouped convex solver that exploits the fact that only two project types are present. Writing the dual objective as
\[
L_{\mathrm{vec}}(\lambda)
=
\frac{M\lambda}{1-\beta}
+
\sum_{n=1}^N L_n(x_n,\lambda),
\]
we evaluate the type-specific threshold \(z_k^\star(\lambda)\) only once per type \(k\), rather than once per project, and minimize the resulting continuous convex objective by bisection on a subgradient,
\[
s(\lambda)
=
\frac{M}{1-\beta}
-
\sum_{n=1}^N G_n^\star(x_n,\lambda).
\]
On representative test instances this yielded the same minimizer and bound value as the previous implementation, while reducing runtime by more than an order of magnitude; the full \(4320\)-instance experiment then completed in a few minutes rather than hours. This reduction in cost was essential for making the larger benchmark grid computationally feasible.
Additional implementation details, numerical tolerances, and reproducibility notes are collected in Appendix~\ref{app:extra_experiments}.

Across the full grid of \(4320\) instances, the MP index policy achieved the largest estimated mean reward in \(4185\) cases, or \(96.9\%\) of the total. The myopic policy was best in the remaining \(135\) cases (\(3.1\%\)), while round-robin and random were never best. Hence the MP index policy is the strongest benchmark overall, although, unlike in the homogeneous-project setting, it is not uniformly dominant.

The same conclusion holds under a more conservative confidence-interval comparison. Using the criterion that the lower endpoint of the MP index \(95\%\) confidence interval exceed the upper endpoint of the competing policy's interval, the MP index policy dominates round-robin and random in \(100\%\) of cases and dominates myopic in \(91.8\%\) of cases. Thus, even when it is not the best policy in point estimate, the MP index rule remains highly competitive, and clear inferiority to myopic is rare.

We next compare performance with the normalized Lagrangian dual upper bound through the relative gap
$
(\bar L - J^\pi)/\bar L.
$
Among the four policies tested, this gap was smallest for the MP index policy throughout the grid. Over all \(4320\) instances, the MP index relative gap ranged from \(5.66\%\) to \(39.02\%\), with mean \(14.46\%\). For the myopic policy, the corresponding range was \(6.58\%\) to \(62.93\%\), with mean \(24.20\%\). Round-robin and random performed substantially worse, with relative gaps typically above \(40\%\) and often much larger. Thus the MP index rule is not only the strongest simulated policy overall; it is also the one that remains closest to the dual benchmark across the tested policy class.

The smallest MP index relative gap occurred in the scenario
$
\texttt{A\_harder\_obs\_B\_very\_fragile},
$
with type proportions \((0.5,0.5)\), capacity ratio \(\alpha_M=0.5\), and \((N,M)=(200,100)\), where the relative gap was \(5.66\%\). The largest MP index relative gap occurred in
$
\texttt{A\_harder\_obs\_B\_persistent\_bad},
$
with proportions \((0.1,0.9)\), \(\alpha_M=0.05\), and \((N,M)=(100,5)\), where the relative gap was \(39.02\%\). These extremes already suggest two clear patterns: low-capacity regimes are harder, and scenarios involving the persistent-bad \(B\)-type tend to produce larger gaps.

To measure the practical value of the MP index rule relative to the strongest simpler benchmark, we also computed the relative improvement over myopic,
$
(J^{\mathrm{MP}}-J^{\mathrm{myopic}})/(J^{\mathrm{myopic}}).
$
Over the full grid, this quantity ranged from \(-0.91\%\) to \(69.18\%\), with mean approximately \(15.5\%\). Thus the MP index policy often provides a substantial gain over myopic, and even in the rare cases where it underperforms, the loss is very small.

The largest relative gain of the MP index policy over myopic occurred in the scenario
$
\texttt{A\_mild\_selfhealing\_B\_fragile},
$
with proportions \((0.3,0.7)\), \(\alpha_M=0.05\), and \((N,M)=(1600,80)\), where the gain reached \(69.18\%\). The smallest relative improvement occurred in
$
\texttt{A\_mild\_selfhealing\_B\_persistent\_bad},
$
with proportions \((0.7,0.3)\), \(\alpha_M=0.05\), and \((N,M)=(100,5)\), where the MP index policy fell below myopic by only \(0.91\%\). Hence the grid does contain a small number of cases where myopic slightly outperforms MP, but the overall comparison strongly favors the MP index rule.

Among all design factors, the capacity ratio \(\alpha_M=M/N\) has the clearest effect on the MP index gap to the dual bound. Averaging over all scenarios, proportions, and system sizes, the mean MP index relative gap decreases monotonically from about \(30.4\%\) at \(\alpha_M=0.05\) to about \(7.0\%\) at \(\alpha_M=0.50\). Thus the MP index policy comes much closer to the dual benchmark when capacity is less scarce.

The dependence on type composition is present but weaker. Averaged over scenarios, capacities, and system sizes, the mean MP index relative gap is smallest for intermediate mixtures, around \((0.4,0.6)\) and \((0.5,0.5)\), and larger at the extremes, especially for \(A\)-heavy populations. This suggests that strongly unbalanced mixtures are somewhat more difficult, especially when the more persistent type dominates.

By contrast, there is no convincing evidence here that the MP index relative gap vanishes as \(N\) grows. Averaging over scenarios, proportions, and capacities, the mean relative gap is essentially flat:
\[
14.53\%,\ 14.46\%,\ 14.44\%,\ 14.43\%,\ 14.42\%
\]
for \(N=100,200,400,800,1600\), respectively. Thus, over the tested range of system sizes, the experiments do not support asymptotic optimality of the MP index policy relative to the Lagrangian bound. What they do support is the more modest but practically relevant conclusion that the MP index policy remains closer to the dual benchmark than the simpler alternatives.

Overall, the numerical evidence points to three main conclusions. First, the MP index policy is the strongest benchmark across a large and systematically constructed grid of heterogeneous two-type instances. Second, its practical advantage over myopic can be substantial, often dramatic, especially in low-capacity and strongly heterogeneous regimes. Third, although the Lagrangian dual bound remains a useful common upper benchmark, it is not sufficiently tight here to certify asymptotic optimality of the MP index policy: the relative MP index gap remains non-negligible and essentially flat in \(N\) over the tested range.

Accordingly, the experiments support the MP index policy as a highly effective practical benchmark in heterogeneous populations, but they do not justify any claim of asymptotic optimality with respect to the Lagrangian relaxation. In particular, the low-capacity regime with persistent-bad \(B\)-type projects emerges as the most challenging part of the design space, whereas higher-capacity regimes with more fragile \(B\)-types appear substantially easier.

\section{Conclusions}
\label{s:conclusions}

We have studied a class of belief-state restless bandits with imperfect binary observations, motivated in particular by opportunistic spectrum access with sensing errors and collision constraints. The main contribution of the paper is a PCL-based analytical and computational framework for evaluating threshold-policy performance metrics and the associated marginal productivity (MP) index in this one-sided ACK/NACK setting.

At the technical level, the key step is the decomposition of threshold-policy dynamics into deterministic no-ACK skeleton evolution and ACK-triggered regenerative restarts. This leads to explicit renewal-type representations for the reward and work metrics \(F\) and \(G\), their marginal counterparts \(f\) and \(g\), and hence the MP index. In the tractable threshold regimes we obtained closed-form expressions for these quantities, including exact formulas for the diagonal MP index on \([0,x^1]\), \([x^0,p_{11}]\), and \([p_{11},1]\). In the intermediate regime \(x^1<z<x^0\), where the no-ACK skeleton alternates indefinitely between active and passive phases, the same renewal structure yields stable numerical evaluation schemes and isolates the remaining analytical gaps.

A second contribution is the clarification of the symbolic organization of threshold itineraries in the intermediate regime. Using the fact that the belief-update maps satisfy the analogue of the maps-with-gaps regularity conditions of \citet{danceSi19} after an increasing change of variables, we showed that the threshold axis is partitioned into Christoffel intervals and Sturmian points, and that these determine the corresponding one-step-deviation itineraries. This identifies a concrete route toward a complete analytical treatment of the unresolved part of the PCL verification.

The computational experiments provide broad numerical evidence for the PCL-indexability conditions beyond the parameter restrictions currently available in the literature. In particular, the large-scale tests of \textup{(PCLI1)} and \textup{(PCLI2)} revealed no counterexamples on extensive parameter grids, including the intermediate threshold strip \(x^1\le z\le x^0\). The experiments also showed that the MP index behaves continuously and monotonically in the tractable regimes and exhibits highly structured staircase behavior in the intermediate regime, consistent with the Christoffel--Sturmian organization of threshold itineraries.

From the policy-performance viewpoint, the MP-index policy emerged as the strongest benchmark across a large heterogeneous two-type test bed. It consistently outperformed round-robin and random baselines, and in the vast majority of instances also outperformed the myopic policy. Moreover, it generally remained closer than the benchmark policies to the normalized Lagrangian dual upper bound. At the same time, the experiments did \emph{not} provide convincing evidence that the MP-index policy becomes asymptotically optimal relative to that dual bound as the number of projects grows. Thus the numerical results support the MP-index policy as a highly effective practical benchmark, but not as an empirically established asymptotically optimal rule.

Taken together, the tractable-regime analysis, the reduction of \textup{(PCLI3)} to \textup{(PCLI1)} and \textup{(PCLI2)}, the Christoffel--Sturmian organization of the intermediate regime, and the extensive numerical evidence lead us to the following: 
\begin{conjecture}[Global PCL-indexability]
\label{con:gpcli}
For every feasible parameter tuple satisfying
\[
0<p_{01}<1,\qquad 0<\rho<1-p_{01},\qquad 0<\kappa<1,\qquad 0<\beta<1,
\]
the discounted single-project problem is PCL-indexable, that is, \textup{(PCLI1)--(PCLI3)} hold on the full threshold range.
\end{conjecture}
If true, this would imply that the model is Whittle-indexable throughout that parameter domain, and that the MP index coincides globally with the Whittle index.

Several directions remain open. The most immediate is to complete the analytical verification of \textup{(PCLI1)} and \textup{(PCLI2)} on the intermediate regime \(x^1<z<x^0\), thereby resolving the conjecture above and obtaining a full proof of Whittle indexability for the discounted model. A second direction is to develop the average-reward counterpart of the present discounted PCL analysis, building on the outline given in Appendix~\ref{app:avgcrit_outline}. More broadly, the computational framework developed here should also be useful for other partially observed restless bandit models with real-valued belief states.

\section{Declaration of generative AI and AI-assisted technologies in the manuscript preparation process}
\label{s:dgai}
During the preparation of this work, the author used ChatGPT (OpenAI) in order to assist with editing text, improving readability, and refining the presentation of the manuscript. After using this tool, the author reviewed and edited the content as needed and takes full responsibility for the content of the manuscript.

\bibliographystyle{plainnat}


\appendix

\section{Supplementary proofs for the computational machinery}
\label{app:compFG_proofs}

\subsection{Proof details for Section~\ref{s:phi_props_main}}
\label{app:phi_props_proofs}

We collect here the proofs omitted from Section~\ref{s:phi_props_main}.

\begin{proof}[Proof of Lemma~\ref{lem:policy_invariant_mean_path}]
Let \(\mathcal F_t\) denote the history up to time \(t\). Since \(A(t)\) is \(\mathcal F_t\)-measurable and
\eqref{eq:posterior_mean_identity} implies that \(\Ex[X(t+1)\mid X(t),A(t)]=\phi^0(X(t))\) a.s., we have
\[
\Ex\!\left[X(t+1)\mid \mathcal F_t\right]
=
\Ex\!\left[\Ex[X(t+1)\mid X(t),A(t)]\mid \mathcal F_t\right]
=
\Ex\!\left[\phi^0(X(t))\mid \mathcal F_t\right]
=
\phi^0(X(t)).
\]
Taking expectations and using the affine form of \(\phi^0\) yields the recursion
\[
\Ex_x^\pi[X(t+1)] = p_{01} + \rho \,\Ex_x^\pi[X(t)].
\]
Iterating it, with \(\Ex_x^\pi[X(0)]=x\), gives \(\Ex_x^\pi[X(t)]=\phi_t^0(x)\) for all \(t\ge 0\).
\end{proof}

\begin{proof}[Proof of Lemma~\ref{lem:phi1_fixed_point_location}]
Let \(B\triangleq 1-\rho+\kappa p_{11}\). The discriminant of \eqref{eq:phi1xxqe} is \(\Delta(\kappa)=B^2-4\kappa p_{01}\),
which can be rewritten as in \eqref{eq:phi1_discriminant} using \(1-\rho=p_{10}+p_{01}\) and \(p_{11}=1-p_{10}\). Hence \(\Delta(\kappa)>0\),
and \eqref{eq:phi1xxqe} has the roots in \eqref{eq:phi1_fixed_points}.

To locate the roots, write \(H(x)\triangleq \phi^1(x)-x\) for \(x\in[0,1/\kappa)\). Then
\[
H(x)=\frac{P(x)}{1-\kappa x},
\qquad
P(x)\triangleq \kappa x^2-Bx+p_{01}.
\]
Since \(1-\kappa x>0\) on \([0,1]\), the sign of \(H(x)\) on \(\mathcal X\) is the sign of the quadratic numerator \(P(x)\). We have
\[
P(p_{01})=p_{01}\rho(1-\kappa)>0,
\qquad
P(p_{11})=p_{01}-(1-\rho)p_{11}=-p_{10}\rho<0,
\]
so by continuity \(P\) (and hence \(H\)) has a root in \((p_{01},p_{11})\). This is the smaller root \(x_{\mathrm{lo}}^1\), and we
set \(x^1\triangleq x_{\mathrm{lo}}^1\in(p_{01},p_{11})\).

Next,
$
P(1)=\kappa-B+p_{01}=-p_{10}(1-\kappa)<0,
$
while \(P(x)\to+\infty\) as \(x\to+\infty\) since \(\kappa>0\), so the larger root satisfies \(x_{\mathrm{hi}}^1>1\).
Finally, Vieta's identity
$
x_{\mathrm{lo}}^1x_{\mathrm{hi}}^1={p_{01}}/{\kappa}
$
together with \(x_{\mathrm{lo}}^1>p_{01}\) gives \(x_{\mathrm{hi}}^1<1/\kappa\).
Thus \(x_{\mathrm{hi}}^1\in(1,1/\kappa)\).

Since \(\phi^1(\mathcal X)\subseteq[p_{01},p_{11}]\subseteq\mathcal X\), any fixed point of \(\phi^1\) in \(\mathcal X\) must lie
in \([p_{01},p_{11}]\), hence it must be \(x^1\).
\end{proof}

\begin{proof}[Proof of Lemma~\ref{lem:phi_order_refined}]
Set \(b\triangleq 1-\rho>0\) and \(y\triangleq p_{11}\in(0,1)\). From \(\phi^1(x)=x\) we obtain
\[
\kappa x^2-\bigl(b+\kappa y\bigr)x+p_{01}=0,
\]
hence \(p_{01}=x\bigl(b+\kappa y-\kappa x\bigr)\). Dividing by \(b\) and using \(x^0=p_{01}/b\) yields
\[
x^0-x=\frac{p_{01}}{b}-x=\frac{\kappa x(y-x)}{b}.
\]
Taking \(x=x^1\) and using \(x^1\in(p_{01},y)\) from Lemma~\ref{lem:phi1_fixed_point_location}, we get \(x^1>0\) and \(y-x^1>0\), hence
\(x^0-x^1>0\).
\end{proof}

\begin{proof}[Proof of Lemma~\ref{lem:phi1_iterates}]
For part \textup{(a)}, from \eqref{eq:phi1_mobius} we have
\[
(\phi^1)'(x)=\frac{\rho(1-\kappa)}{(1-\kappa x)^2}>0,\qquad x \in \mathcal{X},
\]
so \(\phi^1\) is increasing; compositions preserve strict monotonicity.

For part \textup{(b)}, writing \(\phi^1(x)=(Ax+p_{01})/(1-\kappa x)\) with \(A=\rho(1-\kappa)-\kappa p_{01}\) gives
\[
\phi^1(x)-x=\frac{\kappa x^2-\bigl(1-\rho+\kappa p_{11}\bigr)x+p_{01}}{1-\kappa x}.
\]
The numerator has roots \(x^1\) and \(x_{\mathrm{hi}}^1\), hence equals \(\kappa(x-x^1)(x-x_{\mathrm{hi}}^1)\), yielding
\eqref{eq:phi1_minus_x_factor}. Subtracting \(x^1=\phi^1(x^1)\) from \(\phi^1(x)\) gives \eqref{eq:phi1_minus_phiinfty_factor}.
The stated inequalities and monotonicity in \(t\) follow.

For part \textup{(c)}, apply \eqref{eq:mobius_conjugacy}--\eqref{eq:Phitx} to \(\Phi=\phi^1\) with \(x_-=x^1\), \(x_+=x_{\mathrm{hi}}^1\),
and \(\Psi\) as in \eqref{eq:Psix}, obtaining the first equality in \eqref{eq:phi1_iter_closed}. The second equality
follows by algebraic simplification. Since \(\mu\in(0,1)\), convergence to \(x^1\) is geometric.
\end{proof}

\begin{proof}[Proof of Lemma~\ref{lem:logit_conjugacy_A2}]
Fix \(a\in\{0,1\}\) and write \(t=\vartheta(x)\) with \(x\in(0,1)\). Since \(\vartheta'(x)=1/(x(1-x))\), we have
\begin{equation}
\label{eq:conj_derivative_formula_app}
(\hat{\phi}^a)'(t)
=
\frac{\vartheta'(\phi^a(x))}{\vartheta'(x)}\,(\phi^a)'(x)
=
\frac{(\phi^a)'(x)\,x(1-x)}{\phi^a(x)\bigl(1-\phi^a(x)\bigr)}.
\end{equation}
In particular, \((\hat \phi^a)'(t)>0\) for all \(t\), so \(\hat \phi^a\) is increasing. Moreover, if
\((\hat \phi^a)'(t)<1\) for all \(t\), then by the mean value theorem \(\hat \phi^a\) is contractive on \(\mathbb R\) in the
sense of Assumption~\ref{ass:DSA2_generic}\textup{(iii)}.

For \(a=0\), we have \(\phi^0(x)=p_{01}+\rho x\) and \((\phi^0)'(x)=\rho\), so by \eqref{eq:conj_derivative_formula_app} it suffices to show
\[
\phi^0(x)\bigl(1-\phi^0(x)\bigr)-\rho x(1-x) >0,\qquad x\in(0,1).
\]
A direct expansion yields
\[
\phi^0(x)\bigl(1-\phi^0(x)\bigr)-\rho x(1-x)
=
p_{01}(1-p_{01})-2p_{01}\rho\,x+\rho(1-\rho)\,x^2.
\]
The discriminant of this quadratic equals
$
4p_{01}\rho\,(p_{11}-1)=-4\rho p_{01}p_{10}<0,
$
so the quadratic is positive on \(\mathbb R\), and therefore \((\hat \phi^0)'(t)\in(0,1)\) for all \(t\).

For \(a=1\), we have
\[
\phi^1(x)=p_{01}+\rho\,\frac{(1-\kappa)x}{1-\kappa x},
\qquad
(\phi^1)'(x)=\frac{\rho(1-\kappa)}{(1-\kappa x)^2}.
\]
By \eqref{eq:conj_derivative_formula_app} it suffices to show
\[
\phi^1(x)\bigl(1-\phi^1(x)\bigr)-(\phi^1)'(x)\,x(1-x)>0,\qquad x\in(0,1).
\]
A straightforward calculation shows that the left-hand side equals \(Q(x)/(1-\kappa x)^2\), where \(Q\) is a quadratic satisfying
$
Q(0)=p_{01}(1-p_{01})>0
$
and whose discriminant is
$
4p_{01}\rho(\kappa-1)^2(p_{11}-1)
=-4\rho(1-\kappa)^2p_{01}p_{10}<0.
$
Hence \(Q(x)>0\) for all \(x\in\mathbb R\), and therefore \((\hat \phi^1)'(t)\in(0,1)\) for all \(t\).

Thus \(\hat\phi^0\) and \(\hat\phi^1\) are increasing and contractive on \(\mathbb R\), proving \textup{(a)}.
Since \(\vartheta\) is a bijection, fixed points are preserved by conjugacy: \(x\) is a fixed point of \(\phi^a\) if and only if
\(\vartheta(x)\) is a fixed point of \(\hat\phi^a\). Uniqueness of the fixed points then follows from contractiveness, proving
\textup{(b)}. Finally, because \(\vartheta\) is increasing, \(x^1<x^0\) implies \(\hat x^1<\hat x^0\), proving \textup{(c)}.
\end{proof}

\subsection{Proof details for Section~\ref{s:renewal_FG}}
\label{app:renewal_proofs}

We collect here the proofs omitted from Section~\ref{s:renewal_FG}.

\begin{proof}[Proof of Lemma~\ref{lem:renewal_first_ACK}]
For part \textup{(a)}, on the event \(\{\tau^{\mathrm{ack}}\ge t\}\), no ACK has occurred in periods \(0,\ldots,t-1\), so the trajectory up to time \(t\) coincides with the no-ACK skeleton. Hence
\[
\Ex_x^z\!\big[\1_{\{\tau^{\mathrm{ack}}\ge t\}}\,r\kappa X(t)A(t)\big]
=r\kappa\,\Gamma_t(x,z)\,\widetilde X_t(x,z)\,\widetilde A_t(x,z),
\]
and
\[
\Ex_x^z\!\big[\1_{\{\tau^{\mathrm{ack}}\ge t\}}\,A(t)\big]
=\Gamma_t(x,z)\,\widetilde A_t(x,z).
\]
Since
\[
\sum_{t=0}^{\tau^{\mathrm{ack}}}\beta^t\,r\kappa X(t)A(t)
=
\sum_{t=0}^{\infty}\beta^t\,\1_{\{\tau^{\mathrm{ack}}\ge t\}}\,r\kappa X(t)A(t),
\qquad
\sum_{t=0}^{\tau^{\mathrm{ack}}}\beta^t\,A(t)
=
\sum_{t=0}^{\infty}\beta^t\,\1_{\{\tau^{\mathrm{ack}}\ge t\}}\,A(t),
\]
taking expectations yields the first two identities.

For \(\widetilde{\Theta}(x,z)\), note that \(\{\tau^{\mathrm{ack}}=t\}\) means that no ACK occurs before period \(t\) and an ACK occurs in period \(t\). Given \(\{\tau^{\mathrm{ack}}\ge t\}\), an ACK in period \(t\) occurs only if \(A(t)=1\), and then with conditional probability \(\kappa X(t)=\kappa\widetilde X_t(x,z)\). Thus
\[
\Prob_x^z\{\tau^{\mathrm{ack}}=t\}
=\Gamma_t(x,z)\,\kappa\,\widetilde X_t(x,z)\,\widetilde A_t(x,z),
\]
and therefore
\[
\widetilde{\Theta}(x,z)
=\sum_{t=0}^{\infty}\beta^t\,\Prob_x^z\{\tau^{\mathrm{ack}}=t\}
=\sum_{t=0}^{\infty}\beta^t\,\Gamma_t(x,z)\,\kappa\,\widetilde X_t(x,z)\,\widetilde A_t(x,z)
=\frac{1}{r}\,\widetilde F(x,z).
\]

For part \textup{(b)}, for any nonnegative process \((Y(t))_{t\ge0}\),
\[
\sum_{t=0}^{\infty}\beta^tY(t)
=
\sum_{t=0}^{\tau^{\mathrm{ack}}}\beta^tY(t)
+\1_{\{\tau^{\mathrm{ack}}<\infty\}}\sum_{t=\tau^{\mathrm{ack}}+1}^{\infty}\beta^tY(t).
\]
Taking \(Y(t)=r\kappa X(t)A(t)\) and conditioning on \(\mathcal F_{\tau^{\mathrm{ack}}+1}\), on \(\{\tau^{\mathrm{ack}}<\infty\}\) an ACK is observed at the end of period \(\tau^{\mathrm{ack}}\), so \(X(\tau^{\mathrm{ack}}+1)=p_{11}\). By the strong Markov property at time \(\tau^{\mathrm{ack}}+1\),
\[
\Ex_x^z\!\left[
\1_{\{\tau^{\mathrm{ack}}<\infty\}}\sum_{t=\tau^{\mathrm{ack}}+1}^{\infty}\beta^t\,r\kappa X(t)A(t)\,\Big|\,\mathcal F_{\tau^{\mathrm{ack}}+1}
\right]
=
\1_{\{\tau^{\mathrm{ack}}<\infty\}}\beta^{\tau^{\mathrm{ack}}+1}F(p_{11},z).
\]
Taking expectations gives
\[
F(x,z)
=
\Ex_x^z\!\big[\sum_{t=0}^{\tau^{\mathrm{ack}}}\beta^t\,r\kappa X(t)A(t)\big]
+
\Ex_x^z\!\big[\beta^{\tau^{\mathrm{ack}}+1}\1_{\{\tau^{\mathrm{ack}}<\infty\}}\big]F(p_{11},z).
\]
Using \textup{(a)} and \(\Ex_x^z\big[\beta^{\tau^{\mathrm{ack}}+1}\big]=\beta\,\widetilde{\Theta}(x,z)\) yields the identity for \(F\). The argument for \(G\) is identical with \(Y(t)=A(t)\).

For part \textup{(c)}, apply part \textup{(b)} with \(x=p_{11}\) and rearrange to obtain \eqref{eq:FGrho_closed}. Finally, since
\(F(p_{11},z)\) and \(G(p_{11},z)\) are finite and \(\widetilde F,\widetilde G\ge0\), the identities in
\eqref{eq:FGrho_closed} imply \(1-\beta\,\widetilde{\Theta}(p_{11},z)>0\), that is,
\(0\le \beta\,\widetilde{\Theta}(p_{11},z)<1\).
\end{proof}

\subsection{Proof details for Section~\ref{s:tractable_regimes}}
\label{app:tractable_regimes_proofs}

We collect here the proofs omitted from Section~\ref{s:tractable_regimes}.

\begin{proof}[Proof of Lemma~\ref{lem:threshold_regime_dynamics}]
Both updates satisfy \(\phi^0(\mathcal X)\subseteq[p_{01},p_{11}]\) and \(\phi^1(\mathcal X)\subseteq[p_{01},p_{11}]\), so after at
most one step the skeleton state lies in \([p_{01},p_{11}]\) forever.
Moreover, the passive iterates \(\phi_t^0(\cdot)\) converge monotonically to \(x^0\), while the active no-ACK iterates
\(\phi_t^1(\cdot)\) converge monotonically to \(x^1\) (Lemma~\ref{lem:phi1_iterates}\textup{(b)--(c)}).
Finally, if \(z\ge x^0\), then \(\phi^0(z)\le z\), and monotonicity of \(\phi^0\) implies \(\phi^0(y)\le z\) for all \(y\le z\).

\smallskip
\noindent\emph{(a)}
If \(z<p_{01}\), then \(\widetilde X_1(x,z)\in[p_{01},p_{11}]\subset(z,\infty)\) regardless of \(x\).
Hence the skeleton is active from time \(1\) onward, and from time \(0\) onward if \(x>z\).

\smallskip
\noindent\emph{(b)}
If \(p_{01}\le z\le x^1\), then under passivity the skeleton drifts toward \(x^0>x^1\ge z\), so it up-crosses \(z\)
in finite time \(t_0=\tau_{\uparrow}^0(x,z)\).
Once \(\widetilde X_{t_0}(x,z)>z\), the active no-ACK evolution stays above \(z\), since it converges monotonically to \(x^1\ge z\), and in the boundary case \(z=x^1\) it remains strictly above \(x^1\) after the first up-crossing.

\smallskip
\noindent\emph{(c)}
Assume \(x^1<z<x^0\).
Since \(z<x^0\), any passive phase at or below \(z\) must up-cross \(z\) in finite time.
On the other hand, for every \(y>z\), the active no-ACK iterates satisfy
\[
\phi_t^1(y)\downarrow x^1<z,
\]
so every active excursion above \(z\) must down-cross \(z\) in finite time.
After each return to \([0,z]\), the skeleton must up-cross again in finite time under passivity.
Hence finite blocks of \(1\)'s and \(0\)'s alternate indefinitely.

\smallskip
\noindent\emph{(d)}
If \(x^0\le z<p_{11}\), then once the skeleton enters \([0,z]\) it cannot exceed \(z\) again.
Thus if \(x\le z\) it is passive forever, while if \(x>z\) it performs one active excursion until its first down-crossing of \(z\)
and is passive forever thereafter.

\smallskip
\noindent\emph{(e)}
If \(z\ge p_{11}\), then for  \(t\ge1\) the skeleton state lies in \([p_{01},p_{11}]\subseteq(-\infty,z]\),
so \(\widetilde A_t(x,z)=0\), while \(\widetilde A_0(x,z)=\1_{\{x>z\}}\).
\end{proof}

\begin{proof}[Proof of Lemma~\ref{lem:tauack_finite_alternating}]
\emph{(a)}
If \(z<x^0\), then by Lemma~\ref{lem:threshold_regime_dynamics}\textup{(a)--(c)} the skeleton itinerary contains infinitely many
\(1\)'s.

Writing
$
N_t^{\mathrm{act}}(x,z)\triangleq \sum_{j=0}^{t-1}\widetilde A_j(x,z),
$
we therefore have \(N_t^{\mathrm{act}}(x,z)\to\infty\).
Moreover, on every active period \(j\), \(\widetilde X_j(x,z)\in[p_{01},p_{11}]\), so
\[
1-\kappa\widetilde X_j(x,z)\le 1-\kappa p_{01}<1.
\]
Using \eqref{eq:GammatxzProd},
\[
\Gamma_t(x,z)
=\prod_{0\le j<t:\ \widetilde A_j(x,z)=1}\bigl(1-\kappa\,\widetilde X_j(x,z)\bigr)
\le (1-\kappa p_{01})^{N_t^{\mathrm{act}}(x,z)}\to0.
\]
Hence \(\Prob_x^z\{\tau^{\mathrm{ack}}=\infty\}=\Gamma_\infty(x,z)=0\).

\smallskip
\noindent\emph{(b)}
Assume \(x^0\le z<p_{11}\).
If \(x\le z\), then the policy is passive at \(t=0\), and since \(\phi^0(y)\le z\) for all \(y\le z\), it remains passive forever.
Thus \(\tau^{\mathrm{ack}}=\infty\) a.s.
If \(x>z\), then absent an ACK the skeleton is active until time \(\tau_{\downarrow}^1(x,z)\) and passive forever thereafter
(Lemma~\ref{lem:threshold_regime_dynamics}\textup{(d)}).
Hence ACKs are possible only in periods \(j=0,1,\ldots,\tau_{\downarrow}^1(x,z)-1\), during which
\(\widetilde X_j(x,z)=\phi_j^1(x)\) and \(\widetilde A_j(x,z)=1\).
Multiplying the corresponding no-ACK probabilities yields \eqref{eq:PTinfty_noreturn}.

\smallskip
\noindent\emph{(c)}
If \(z\ge p_{11}\), then after time \(0\) the belief lies in \([p_{01},p_{11}]\subseteq(-\infty,z]\) regardless of the outcome at time \(0\),
so the policy cannot activate after \(t=0\).
Therefore \(\Prob_x^z\{\tau^{\mathrm{ack}}<\infty\}=\kappa x\,\1_{\{x>z\}}\).
\end{proof}

\begin{proof}[Proof of Proposition~\ref{pro:tilde_metrics_piecewise_constant_tractable}]
The pre-ACK metrics \(\widetilde F(x,z)\), \(\widetilde G(x,z)\), and \(\widetilde\Theta(x,z)\) are defined in
\eqref{eq:tilde_series} and \eqref{eq:tildeTheta_preACK_def} as deterministic functionals of the no-ACK skeleton orbit
\(\widetilde X_t(x,z)\), itinerary \(\widetilde A_t(x,z)\), and survival probabilities \(\Gamma_t(x,z)\).
It is therefore enough to show that, on each of the stated \(z\)-intervals, the no-ACK skeleton orbit and survival
probabilities are independent of the particular value of \(z\).

\emph{(a)}
By Lemma~\ref{lem:threshold_regime_dynamics}\textup{(a)}, if \(z<p_{01}\), then the itinerary is \(1^\infty\) when
\(x>z\), and \(01^\infty\) when \(x\le z\).
Thus, on each of the two stated intervals, the skeleton itinerary is fixed.
Moreover, the corresponding skeleton orbit is also fixed:
it is \((\phi_t^1(x))_{t\ge0}\) in the case \(1^\infty\), and, in the case \(01^\infty\),
\[
x,\ \phi^0(x),\ \phi^1(\phi^0(x)),\ \phi_2^1(\phi^0(x)),\ldots.
\]

Hence the survival probabilities \(\Gamma_t(x,z)\) are also fixed, and the three pre-ACK metrics are constant.

\emph{(b)}
Fix \(n\in\mathbb Z_+\) and \(z\in I_n^\uparrow(x)\).
By definition, \(\tau_\uparrow^0(x,z)=n\), and by Lemma~\ref{lem:threshold_regime_dynamics}\textup{(b)}, the skeleton itinerary is \(0^n1^\infty\).

Therefore the skeleton orbit depends only on \(x\) and \(n\), not on the particular \(z\in I_n^\uparrow(x)\), being
\[
x,\ \phi^0(x),\ \phi_2^0(x),\ \ldots,\ \phi_n^0(x),\
\phi^1(\phi_n^0(x)),\ \phi_2^1(\phi_n^0(x)),\ldots.
\]

The corresponding survival probabilities \(\Gamma_t(x,z)\) are therefore also independent of \(z\) on \(I_n^\uparrow(x)\).
Hence \(\widetilde F(x,z)\), \(\widetilde G(x,z)\), and \(\widetilde\Theta(x,z)\) are constant on \(I_n^\uparrow(x)\).

\emph{(c)}
If \(x\le z\), then Lemma~\ref{lem:threshold_regime_dynamics}\textup{(d)} gives the itinerary \(0^\infty\), so no activation
ever occurs and therefore
\[
\widetilde F(x,z)=\widetilde G(x,z)=\widetilde\Theta(x,z)=0.
\]

Assume now that \(x>z\), and fix \(n\in\mathbb N\) and \(z\in I_n^\downarrow(x)\).
Then \(\tau_\downarrow^1(x,z)=n\), and by Lemma~\ref{lem:threshold_regime_dynamics}\textup{(d)} the skeleton itinerary is
\(1^n0^\infty\).
Hence the skeleton orbit depends only on \(x\) and \(n\), not on the particular \(z\in I_n^\downarrow(x)\), being
\[
x,\ \phi^1(x),\ \phi_2^1(x),\ \ldots,\ \phi_n^1(x),\
\phi^0(\phi_n^1(x)),\ \phi_2^0(\phi_n^1(x)),\ldots.
\]

Thus the survival probabilities are fixed on \(I_n^\downarrow(x)\), and the three pre-ACK metrics are constant there.

\emph{(d)}
By Lemma~\ref{lem:threshold_regime_dynamics}\textup{(e)}, if \(z\ge p_{11}\), then the itinerary is \(10^\infty\) when \(x>z\),
and \(0^\infty\) when \(x\le z\).

Hence, on each of the two stated intervals, both the skeleton itinerary and the corresponding skeleton orbit are fixed,
so the survival probabilities are fixed as well.
Therefore \(\widetilde F(x,z)\), \(\widetilde G(x,z)\), and \(\widetilde\Theta(x,z)\) are constant on each interval.
\end{proof}

\begin{proof}[Proof of Proposition~\ref{pro:FG_closed_form_with_proof}]
The step-function property follows because, in each regime, \(F(x,z)\) and \(G(x,z)\) depend on \(z\) only through 
threshold comparisons along the no-ACK skeleton and through integer-valued crossing times 
(\(\tau_{\uparrow}^0(x,z)\) and \(\tau_{\downarrow}^1(x,z)\)).
Hence jumps can occur only when \(z\) coincides with a skeleton belief value reached from \(x\) or from \(p_{11}\).

\smallskip
\noindent\textbf{(a)}
If \(z<p_{01}\), then \([p_{01},p_{11}]\subset(z,\infty)\), so after at most one step the policy is always active.
By Lemma~\ref{lem:threshold_regime_dynamics}\textup{(a)}, the itinerary is \(1^\infty\) if \(x>z\) and \(01^\infty\) if \(x\le z\).
If \(x>z\), then \(A(t)\equiv1\), so
\[
F(x,z)=r\kappa\sum_{t=0}^{\infty}\beta^t\,\Ex_x^z[X(t)],
\qquad
G(x,z)=\sum_{t=0}^{\infty}\beta^t=\frac{1}{1-\beta}.
\]
Writing \(m_t\triangleq \Ex_x^z[X(t)]\), the one-step posterior-mean identity \eqref{eq:posterior_mean_identity} yields
$
m_{t+1}=p_{01}+\rho m_t,\enspace m_0=x,
$
hence
$
m_t=x^0+(x-x^0)\rho^t.
$
Substituting and summing the geometric series gives
\[
F(x,z)
=r\kappa\left[\frac{x^0}{1-\beta}-\frac{x^0-x}{1-\beta\rho}\right].
\]

If \(x\le z\), then \(A(0)=0\), \(X(1)=\phi^0(x)\), and thereafter the policy is always active.
Thus
\[
G(x,z)=\sum_{t=1}^{\infty}\beta^t=\frac{\beta}{1-\beta},
\qquad
F(x,z)=\beta\,F(\phi^0(x),z),
\]
and the preceding formula applied at initial state \(\phi^0(x)\) yields the stated expression.

\smallskip
\noindent\textbf{(b)}
Let \(t_0\triangleq\tau_{\uparrow}^0(x,z)\).
By Lemma~\ref{lem:threshold_regime_dynamics}\textup{(b)}, the itinerary is \(0^{t_0}1^\infty\).
Hence
\[
G(x,z)=\sum_{t=t_0}^{\infty}\beta^t=\frac{\beta^{t_0}}{1-\beta}.
\]

Also, up to time \(t_0\) the process is passive, so \(X(t_0)=\phi_{t_0}^0(x)\) deterministically, and from time \(t_0\) onward the policy is always active.
Applying part \textup{(a)} from time \(t_0\) onward yields
\[
F(x,z)
=r\kappa\,\beta^{t_0}
\left[\frac{x^0}{1-\beta}-\frac{x^0-\phi_{t_0}^0(x)}{1-\beta\rho}\right].
\]

\smallskip
\noindent\textbf{(c)}
This is exactly Lemma~\ref{lem:renewal_first_ACK}\textup{(b)--(c)}.

\smallskip
\noindent\textbf{(d)}
If \(x\le z\), then the policy never activates by Lemma~\ref{lem:threshold_regime_dynamics}\textup{(d)}, so
\(F(x,z)=G(x,z)=0\).

Assume next that \(x>z\).
By Lemma~\ref{lem:threshold_regime_dynamics}\textup{(d)}, the no-ACK skeleton starting from \(x\) has the form
\[
\widetilde{\sigma}(x,z)=1^{n}0^\infty,
\qquad n=\tau_{\downarrow}^1(x,z),
\]
and the no-ACK skeleton starting from the post-ACK reset state \(p_{11}\) has the form
\[
\widetilde{\sigma}(p_{11},z)=1^{m}0^\infty,
\qquad m=\tau_{\downarrow}^1(p_{11},z).
\]
Hence, by Proposition~\ref{pro:tilde_metrics_piecewise_constant_tractable}\textup{(c)}, on every interval
\[
I_{n,m}(x)=
\Bigl\{z\in[x^0,\min\{x,p_{11}\}) : \tau_{\downarrow}^1(x,z)=n,\ \tau_{\downarrow}^1(p_{11},z)=m\Bigr\}
\]
the pre-ACK metrics are constant, and are given by the stated finite sums
\[
\widetilde G(x,z)=A_x^{(n)},
\qquad
\widetilde\Theta(x,z)=B_x^{(n)},
\qquad
\widetilde F(x,z)=r\,B_x^{(n)},
\]
together with
\[
\widetilde G(p_{11},z)=A_{11}^{(m)},
\qquad
\widetilde\Theta(p_{11},z)=B_{11}^{(m)},
\qquad
\widetilde F(p_{11},z)=r\,B_{11}^{(m)}.
\]

Applying Lemma~\ref{lem:renewal_first_ACK}\textup{(c)} at \(p_{11}\) yields
\[
F(p_{11},z)=\frac{r\,B_{11}^{(m)}}{1-\beta\,B_{11}^{(m)}},
\qquad
G(p_{11},z)=\frac{A_{11}^{(m)}}{1-\beta\,B_{11}^{(m)}}.
\]
Then Lemma~\ref{lem:renewal_first_ACK}\textup{(b)} gives
\[
F(x,z)=\widetilde F(x,z)+\beta\,\widetilde\Theta(x,z)\,F(p_{11},z)
=r\,B_x^{(n)}+\beta\,B_x^{(n)}\frac{r\,B_{11}^{(m)}}{1-\beta\,B_{11}^{(m)}}
=\frac{r\,B_x^{(n)}}{1-\beta\,B_{11}^{(m)}},
\]
\[
G(x,z)=\widetilde G(x,z)+\beta\,\widetilde\Theta(x,z)\,G(p_{11},z)
=A_x^{(n)}+\beta\,B_x^{(n)}\frac{A_{11}^{(m)}}{1-\beta\,B_{11}^{(m)}},
\]
which is the stated formula.

Since these expressions depend on \(z\) only through the pair of integers
\(\bigl(\tau_{\downarrow}^1(x,z),\tau_{\downarrow}^1(p_{11},z)\bigr)\), it follows that \(F(x,z)\) and \(G(x,z)\) are
constant on each nonempty interval \(I_{n,m}(x)\), and therefore piecewise constant on \([x^0,p_{11})\).

\smallskip
\noindent\textbf{(e)}
If \(z\ge p_{11}\), then regardless of the action and outcome at time \(0\), the belief at time \(1\) lies in
\([p_{01},p_{11}]\subseteq(-\infty,z]\).
Hence the policy cannot activate after time \(0\), so
\[
G(x,z)=\Ex_x^z[A(0)]=\1_{\{x>z\}},
\qquad
F(x,z)=\Ex_x^z[r\kappa X(0)A(0)]=r\kappa x\,\1_{\{x>z\}}.
\]
\end{proof}

\begin{proof}[Proof of Lemma~\ref{lem:G_monotone_tractable}]
We consider the tractable regimes separately.

\smallskip
\noindent\emph{Case 1: \(z<p_{01}\).}
By Proposition~\ref{pro:FG_closed_form_with_proof}\textup{(a)},
\[
G(x,z)=
\begin{cases}
\dfrac{\beta}{1-\beta}, & x\le z,\\[1mm]
\dfrac{1}{1-\beta}, & x>z,
\end{cases}
\]
which is clearly nondecreasing in \(x\).

\smallskip
\noindent\emph{Case 2: \(p_{01}\le z\le x^1\).}
By Proposition~\ref{pro:FG_closed_form_with_proof}\textup{(b)},
\[
G(x,z)=\frac{\beta^{\tau_\uparrow^0(x,z)}}{1-\beta}.
\]
For fixed \(z\), the map \(x\mapsto \tau_\uparrow^0(x,z)\) is nonincreasing, because \(\phi_t^0(\cdot)\) is increasing for every \(t\ge0\).
Since \(0<\beta<1\), the map \(n\mapsto \beta^n\) is decreasing on \(\mathbb Z_+\), hence
\(x\mapsto \beta^{\tau_\uparrow^0(x,z)}\) is nondecreasing. Therefore \(x\mapsto G(x,z)\) is nondecreasing.

\smallskip
\noindent\emph{Case 3: \(x^0\le z<p_{11}\).}
Let \(\mathcal{T}_z\) be the Bellman evaluation operator for the work metric:
\[
(\mathcal{T}_z H)(x)\triangleq
\begin{cases}
\beta\,H(\phi^0(x)), & x\le z,\\[1mm]
1+\beta\Big(\kappa x\,H(p_{11})+(1-\kappa x)\,H(\phi^1(x))\Big), & x>z.
\end{cases}
\]
Set \(H_0\equiv 0\) and \(H_{n+1}\triangleq \mathcal{T}_z H_n\).
Then \(H_n(x)\uparrow G(x,z)\) pointwise as \(n\to\infty\).

We prove by induction that each \(H_n\) is nondecreasing.
This is clear for \(H_0\).
Assume \(H_n\) is nondecreasing.
If \(x_1\le x_2\le z\), then \(\phi^0(x_1)\le \phi^0(x_2)\), and hence
\[
H_{n+1}(x_1)=\beta H_n(\phi^0(x_1))\le \beta H_n(\phi^0(x_2))=H_{n+1}(x_2).
\]
If \(z<x_1\le x_2\), then
\begin{align*}
H_{n+1}(x_2)-H_{n+1}(x_1)
&=
\beta\kappa(x_2-x_1)H_n(p_{11})
+\beta\Big((1-\kappa x_2)H_n(\phi^1(x_2))-(1-\kappa x_1)H_n(\phi^1(x_1))\Big)\\
&=
\beta\kappa(x_2-x_1)\bigl(H_n(p_{11})-H_n(\phi^1(x_2))\bigr)
+\beta(1-\kappa x_1)\bigl(H_n(\phi^1(x_2))-H_n(\phi^1(x_1))\bigr)\ge 0,
\end{align*}
because \(\phi^1\) is increasing, \(H_n\) is nondecreasing, and \(\phi^1(x_2)\le p_{11}\).

If \(x_1\le z<x_2\), then \(H_{n+1}(x_1)=\beta H_n(\phi^0(x_1))\), while
\[
H_{n+1}(x_2)=1+\beta\Big(\kappa x_2 H_n(p_{11})+(1-\kappa x_2)H_n(\phi^1(x_2))\Big)\ge 1.
\]
Since \(z\ge x^0\), the interval \([0,z]\) is trapping under \(\phi^0\), so \(\phi^0(y)\le z\) for every \(y\le z\). Starting from
\(H_0\equiv0\), it follows by induction that \(H_n(y)=0\) for every \(y\le z\) and every \(n\). In particular,
\(H_{n+1}(x_1)=0\le H_{n+1}(x_2)\).

Thus \(H_{n+1}\) is nondecreasing. Passing to the limit yields that \(G(\cdot,z)\) is nondecreasing.

\smallskip
\noindent\emph{Case 4: \(z\ge p_{11}\).}
By Proposition~\ref{pro:FG_closed_form_with_proof}\textup{(e)},
$
G(x,z)=\1_{\{x>z\}},
$
which is nondecreasing in \(x\).

Combining the four cases proves the result.
\end{proof}

\subsection{Proof details for Sections~\ref{s:ia-fg}--\ref{s:ia-partial-index}}
\label{app:marginal_metric_proofs}

We collect here the proofs omitted from Sections~\ref{s:ia-fg}--\ref{s:ia-partial-index}.

\begin{proof}[Proof of Lemma~\ref{lem:fg_decomp_preACK}]
By one-step conditioning at time \(0\),
\begin{align*}
f(x,z)
&=r\kappa x+\beta\Big(\kappa x\,F(p_{11},z)+(1-\kappa x)\,F(\phi^1(x),z)-F(\phi^0(x),z)\Big),\\
g(x,z)
&=1+\beta\Big(\kappa x\,G(p_{11},z)+(1-\kappa x)\,G(\phi^1(x),z)-G(\phi^0(x),z)\Big).
\end{align*}

Applying Lemma~\ref{lem:renewal_first_ACK}\textup{(b)} to \(F(y,z)\) and \(G(y,z)\) for
\(y\in\{\phi^1(x),\phi^0(x)\}\) gives
\[
F(y,z)=\widetilde F(y,z)+\beta\,\widetilde\Theta(y,z)\,F(p_{11},z),
\qquad
G(y,z)=\widetilde G(y,z)+\beta\,\widetilde\Theta(y,z)\,G(p_{11},z).
\]

Substituting these expressions and collecting the constant terms and the coefficients of
\(F(p_{11},z)\) and \(G(p_{11},z)\) yields \eqref{eq:f_decomp_preACK}--\eqref{eq:g_decomp_preACK}, together with
\eqref{eq:tildef_one_step_new}--\eqref{eq:tildetheta_one_step_new}.
\end{proof}

\begin{proof}[Proof of Lemma~\ref{lem:tildef_tildetheta_relation}]
By Lemma~\ref{lem:renewal_first_ACK}\textup{(a)}, for every \(y\in\mathcal X\) we have
$
\widetilde\Theta(y,z)={\widetilde F(y,z)}/{r}.
$
Substituting this into \eqref{eq:tildetheta_one_step_new} and comparing with \eqref{eq:tildef_one_step_new} gives
\[
\tilde\theta(x,z)
=\frac{1}{r}\Bigl[r\kappa x+\beta\bigl((1-\kappa x)\widetilde F(\phi^1(x),z)-\widetilde F(\phi^0(x),z)\bigr)\Bigr]
=\frac{\tilde f(x,z)}{r}.
\]
\end{proof}

\begin{proof}[Proof of Proposition~\ref{pro:fg_closed_form}]
The one-step identities \eqref{eq:f_one_step}--\eqref{eq:g_one_step} follow by conditioning on the time-\(0\) outcome:
under action \(1\), an ACK occurs with probability \(\kappa x\), sending the belief to \(p_{11}\), while with probability
\(1-\kappa x\) no ACK is observed and the belief updates to \(\phi^1(x)\); under action \(0\), the belief updates
deterministically to \(\phi^0(x)\). In either case, the policy follows the \(z\)-threshold policy from time \(1\) onward.

\smallskip
\noindent\emph{Regimes (a), (b), and (e).}
These follow by substituting into \eqref{eq:f_one_step}--\eqref{eq:g_one_step} the corresponding formulas for
\(F(\cdot,z)\) and \(G(\cdot,z)\) from Proposition~\ref{pro:FG_closed_form_with_proof}.

\smallskip
\noindent\emph{Regime (c).}
For \(x^1<z<x^0\), let
$
D(z)\triangleq 1-\beta\,\widetilde{\Theta}(p_{11},z).
$
Lemma~\ref{lem:renewal_first_ACK}\textup{(b)--(c)} gives
\[
F(p_{11},z)=\frac{\widetilde F(p_{11},z)}{D(z)},
\qquad
G(p_{11},z)=\frac{\widetilde G(p_{11},z)}{D(z)}.
\]

By Lemma~\ref{lem:fg_decomp_preACK},
\[
f(x,z)=\tilde f(x,z)+\beta\,\tilde\theta(x,z)\,F(p_{11},z),
\qquad
g(x,z)=\tilde g(x,z)+\beta\,\tilde\theta(x,z)\,G(p_{11},z).
\]

Using Lemma~\ref{lem:tildef_tildetheta_relation},
\[
f(x,z)
=r\,\tilde\theta(x,z)+\beta\,\tilde\theta(x,z)\,\frac{\widetilde F(p_{11},z)}{D(z)}
=\tilde\theta(x,z)\left(r+\beta\,\frac{\widetilde F(p_{11},z)}{D(z)}\right)
=\frac{r\,\tilde\theta(x,z)}{D(z)},
\]
which gives \eqref{eq:f_intermediate_compact}. Similarly, we obtain \eqref{eq:g_intermediate_compact}:
\[
g(x,z)
=\tilde g(x,z)+\beta\,\tilde\theta(x,z)\,\frac{\widetilde G(p_{11},z)}{D(z)}.
\]

\smallskip
\noindent\emph{Regime (d).}
The same substitutions as in regime \textup{(c)} apply. The fact that \(\widetilde F(\cdot,z)\), \(\widetilde G(\cdot,z)\), and \(\widetilde\Theta(\cdot,z)\) reduce to finite sums follows from Proposition~\ref{pro:FG_closed_form_with_proof}\textup{(d)} and Lemma~\ref{lem:renewal_first_ACK}\textup{(a)}.
\end{proof}

\begin{proof}[Proof of Lemma~\ref{lem:pcli1_strong_easy}]
\emph{(a)}
If \(z<p_{01}\) or \(z\ge p_{11}\), then Proposition~\ref{pro:fg_closed_form}\textup{(a)} and
\textup{(e)} give \(g(x,z)=1\).

\emph{(b)}
For \(p_{01}\le z\le x^1\), Proposition~\ref{pro:fg_closed_form}\textup{(b)} yields
\[
g(x,z)
=
1+\frac{\beta}{1-\beta}\Big(\kappa x+(1-\kappa x)\beta^{\tau_1}-\beta^{\tau_0}\Big),
\]
with
$
\tau_0=\tau_{\uparrow}^0(\phi^0(x),z),
\enspace
\tau_1=\tau_{\uparrow}^0(\phi^1(x),z).
$

We first show that
\begin{equation}
\label{eq:tau1_le_tau0_plus1_app}
\tau_1\le \tau_0+1.
\end{equation}

If \(x>x^1\), then Lemma~\ref{lem:phi1_iterates}\textup{(b)} gives
$
\phi^1(x)>x^1\ge z,
$
so \(\tau_1=0\), and \eqref{eq:tau1_le_tau0_plus1_app} is immediate.
Assume now that \(x\le x^1\). Then Lemma~\ref{lem:phi1_iterates}\textup{(b)} implies
$
\phi^1(x)\ge x,
$
hence
$
x^0-\phi^1(x)\le x^0-x.
$
Also,
\[
x^0-\phi^0(x)=x^0-(p_{01}+\rho x)=\rho(x^0-x),
\]
so
\[
x^0-\phi^1(x)\le \frac{x^0-\phi^0(x)}{\rho}.
\]
Let \(t=\tau_0\). By definition of \(\tau_\uparrow^0\),
$
\phi_t^0(\phi^0(x))>z.
$
Using the closed form \(\phi_t^0(y)=x^0+(y-x^0)\rho^t\), this is equivalent to
$
\rho^t\bigl(x^0-\phi^0(x)\bigr)<x^0-z.
$
Therefore,
\[
\rho^{t+1}\bigl(x^0-\phi^1(x)\bigr)
\le
\rho^t\bigl(x^0-\phi^0(x)\bigr)
<
x^0-z,
\]
which implies
$
\phi_{t+1}^0(\phi^1(x))>z.
$
Hence \(\tau_1\le t+1=\tau_0+1\), proving \eqref{eq:tau1_le_tau0_plus1_app}.

Since \(0<\beta<1\), \eqref{eq:tau1_le_tau0_plus1_app} implies
$
\beta^{\tau_1}\ge \beta^{\tau_0+1}.
$
Because \(1-\kappa x>0\), substituting this into the formula for \(g(x,z)\) gives
\begin{align*}
g(x,z)
&\ge
1+\frac{\beta}{1-\beta}\Big(\kappa x+(1-\kappa x)\beta^{\tau_0+1}-\beta^{\tau_0}\Big)=
1+\frac{\beta}{1-\beta}\Big(\kappa x(1-\beta^{\tau_0+1})-(1-\beta)\beta^{\tau_0}\Big)\\
&=
1-\beta^{\tau_0+1}
+\frac{\beta\kappa x}{1-\beta}\bigl(1-\beta^{\tau_0+1}\bigr)=
(1-\beta^{\tau_0+1})\left(1+\frac{\beta\kappa x}{1-\beta}\right)
\ge 1-\beta^{\tau_0+1}\ge 1-\beta.
\end{align*}

\emph{(c)}
Fix \(z\in[x^0,p_{11})\).
If \(x\le z\), then \(\phi^1(x)\le \phi^0(x)\le z\), where the second inequality follows from \(z\ge x^0\) and the monotonicity of \(\phi^0\).
Hence
$
G(\phi^1(x),z)=G(\phi^0(x),z)=0,
$
and the one-step identity \eqref{eq:g_one_step} yields
$
g(x,z)=1+\beta\,\kappa x\,G(p_{11},z)\ge 1.
$
Hence assume \(x>z\).
Define
\[
H_z(y)\triangleq \kappa y\,G(p_{11},z)+(1-\kappa y)\,G(\phi^1(y),z),\qquad y\in(z,1].
\]
By the evaluation equation \eqref{eq:gseveq}, for every \(y>z\),
\[
G(y,z)=1+\beta\,H_z(y).
\]

We claim that \(H_z(\cdot)\) is nondecreasing on \((z,1]\).
Indeed, if \(y_1\le y_2\), then
\begin{align*}
H_z(y_2)-H_z(y_1)
&=
\kappa(y_2-y_1)\,G(p_{11},z)
+\Big((1-\kappa y_2)G(\phi^1(y_2),z)-(1-\kappa y_1)G(\phi^1(y_1),z)\Big)\\
&=
\kappa(y_2-y_1)\Bigl(G(p_{11},z)-G(\phi^1(y_2),z)\Bigr)
+(1-\kappa y_1)\Bigl(G(\phi^1(y_2),z)-G(\phi^1(y_1),z)\Bigr)\ge0,
\end{align*}
because \(G(\cdot,z)\) is nondecreasing on \(\mathcal X\) by Lemma~\ref{lem:G_monotone_tractable},
\(\phi^1\) is increasing, and \(\phi^1(y_2)\le p_{11}\).

Now, since \(z\ge x^0\) and \(x>z\), we have \(\phi^0(x)\le x\).
If \(\phi^0(x)\le z\), then \(G(\phi^0(x),z)=0\), and the one-step identity \eqref{eq:g_one_step} yields
$
g(x,z)=1+\beta\,H_z(x)\ge1.
$
If instead \(\phi^0(x)>z\), then by \eqref{eq:gseveq},
\[
G(\phi^0(x),z)=1+\beta\,H_z(\phi^0(x))\le 1+\beta\,H_z(x),
\]
because \(H_z\) is nondecreasing and \(\phi^0(x)\le x\). Substituting this bound into \eqref{eq:g_one_step}, we obtain
\begin{align*}
g(x,z)
&=1+\beta\Bigl(H_z(x)-G(\phi^0(x),z)\Bigr)\ge 1+\beta\Bigl(H_z(x)-1-\beta H_z(x)\Bigr)=1-\beta+\beta(1-\beta)H_z(x).
\end{align*}
Since \(G(\cdot,z)\ge0\), we have \(H_z(x)\ge0\), and therefore
$
g(x,z)\ge 1-\beta.
$
\end{proof}

\begin{proof}[Proof of Lemma~\ref{lem:diag_fg_m_general}]
For every \(x\in\mathcal X\), let
$
D(x)\triangleq 1-\beta\,\widetilde\Theta(p_{11},x).
$
Applying Lemma~\ref{lem:renewal_first_ACK}\textup{(c)} with \(z=x\), and using
$
\widetilde F(p_{11},x)=r\,\widetilde\Theta(p_{11},x)
$
from Lemma~\ref{lem:renewal_first_ACK}\textup{(a)}, we obtain
\[
F(p_{11},x)=\frac{r\,\widetilde\Theta(p_{11},x)}{D(x)},
\qquad
G(p_{11},x)=\frac{\widetilde G(p_{11},x)}{D(x)}.
\]
Moreover, Lemma~\ref{lem:renewal_first_ACK}\textup{(c)} gives
$
0\le \beta\,\widetilde\Theta(p_{11},x)<1,
$
hence \(D(x)>0\).

Now apply Lemma~\ref{lem:fg_decomp_preACK} with \(z=x\):
\[
f(x,x)=\tilde f(x,x)+\beta\,\tilde\theta(x,x)\,F(p_{11},x),
\qquad
g(x,x)=\tilde g(x,x)+\beta\,\tilde\theta(x,x)\,G(p_{11},x).
\]
Substituting the preceding formulas into the first identity gives
\[
f(x,x)
=
\tilde f(x,x)+\beta\,\tilde\theta(x,x)\,\frac{r\,\widetilde\Theta(p_{11},x)}{D(x)}.
\]
Using Lemma~\ref{lem:tildef_tildetheta_relation}, namely \(\tilde f(x,x)=r\,\tilde\theta(x,x)\), this becomes
\[
f(x,x)
=
r\,\tilde\theta(x,x)\left(1+\frac{\beta\,\widetilde\Theta(p_{11},x)}{D(x)}\right).
\]
Since \(D(x)=1-\beta\,\widetilde\Theta(p_{11},x)\), we have
\[
1+\frac{\beta\,\widetilde\Theta(p_{11},x)}{D(x)}
=
\frac{D(x)+\beta\,\widetilde\Theta(p_{11},x)}{D(x)}
=
\frac{1}{D(x)},
\]
and therefore
\[
f(x,x)
=
\frac{r\,\tilde\theta(x,x)}{D(x)}
=
\frac{\tilde f(x,x)}{D(x)},
\]
which proves \eqref{eq:f_diag_general}. Similarly,
\[
g(x,x)
=
\tilde g(x,x)+\beta\,\tilde\theta(x,x)\,\frac{\widetilde G(p_{11},x)}{D(x)}
=
\tilde g(x,x)+\frac{\beta}{r}\,\frac{\tilde f(x,x)}{D(x)}\,\widetilde G(p_{11},x),
\]
which proves \eqref{eq:g_diag_general}. Finally, substituting \eqref{eq:f_diag_general} and \eqref{eq:g_diag_general} into
$
m(x) \triangleq {f(x,x)}/{g(x,x)}
$
yields \eqref{eq:m_diag_general}.
\end{proof}

\begin{proof}[Proof of Proposition~\ref{pro:partial_myopic_index}]
\emph{(a)} Let \(z=x\).
If \(0\le x<p_{01}\), then \(z=x<p_{01}\), so Proposition~\ref{pro:fg_closed_form}\textup{(a)} gives
$
g(x,x)=1,\enspace f(x,x)=r\kappa x,
$
hence \(m(x)=r\kappa x\).
If \(p_{01}\le x<x^1\), then \(\phi^0(x)>x\) (since \(x<x^0\)) and \(\phi^1(x)>x\) (since \(x<x^1\)).
Thus in Proposition~\ref{pro:fg_closed_form}\textup{(b)} with \(z=x\) one has
$
\tau_0=\tau_{\uparrow}^0(\phi^0(x),x)=0,
\enspace
\tau_1=\tau_{\uparrow}^0(\phi^1(x),x)=0,
$
so
\[
g(x,x)=1+\frac{\beta}{1-\beta}\bigl(\kappa x+(1-\kappa x)-1\bigr)=1.
\]
Moreover, since
\[
F^{+}(u)\triangleq r\kappa\left[\frac{x^0}{1-\beta}-\frac{x^0-u}{1-\beta\rho}\right]
\]
is affine in \(u\), and
$
\phi^0(x)=\kappa x\,p_{11}+(1-\kappa x)\phi^1(x)
$
by the one-step posterior-mean identity, the bracketed term in
Proposition~\ref{pro:fg_closed_form}\textup{(b)} cancels, yielding
$
f(x,x)=r\kappa x.
$
Hence \(m(x)=r\kappa x\).

Finally, at \(x=x^1\), Proposition~\ref{pro:fg_closed_form}\textup{(b)} gives
$
\tau_0=\tau_{\uparrow}^0(\phi^0(x^1),x^1)=0,\enspace
\tau_1=\tau_{\uparrow}^0(\phi^1(x^1),x^1)=1,
$
and therefore
$
g(x^1,x^1)=1-\beta+\beta\kappa x^1.
$
Similarly,
\[
f(x,x)-r\kappa x\,g(x,x)
=
-\frac{\beta\kappa r}{1-\beta\rho}\,
\Bigl[\kappa x^2-(1-\rho+\kappa p_{11})x+p_{01}\Bigr].
\]
Evaluating at \(x=x^1\), the bracket vanishes because \(x^1\) is the fixed point of \(\phi^1\), i.e., the root of
\eqref{eq:phi1xxqe}. Hence \(f(x^1,x^1)=r\kappa x^1\,g(x^1,x^1)\), so \(m(x^1)=r\kappa x^1\).
This proves part \textup{(a)}.

\emph{(b)} Fix \(x\in[x^0,p_{11}]\) and set \(z=x\). Since \(x\ge x^0\), we have \(\phi^0(x)\le x\), and since
\(x\ge x^0>x^1\), we also have \(\phi^1(x)<x\). Hence, after either continuation state \(\phi^0(x)\) or \(\phi^1(x)\),
the \(x\)-threshold policy is passive forever. Therefore,
\[
\widetilde F(\phi^0(x),x)=\widetilde F(\phi^1(x),x)=0,
\qquad
\widetilde G(\phi^0(x),x)=\widetilde G(\phi^1(x),x)=0,
\qquad
\widetilde\Theta(\phi^0(x),x)=\widetilde\Theta(\phi^1(x),x)=0.
\]
By \eqref{eq:tildef_one_step_new}--\eqref{eq:tildetheta_one_step_new}, it follows that
$
\tilde f(x,x)=r\kappa x,\enspace
\tilde g(x,x)=1,\enspace
\tilde\theta(x,x)=\kappa x.
$
Hence, by Lemma~\ref{lem:fg_decomp_preACK},
\[
f(x,x)=r\kappa x+\beta\kappa x\,F(p_{11},x),
\qquad
g(x,x)=1+\beta\kappa x\,G(p_{11},x).
\]

Under the \(x\)-threshold policy started from \(p_{11}\), the no-ACK skeleton remains active up to, but not including, time
$
\tau_{\downarrow}^1(p_{11},x),
$
and is passive thereafter.
 Therefore
\eqref{eq:Gtilde_p11_diag_finite_tau}--\eqref{eq:Thetatilde_p11_diag_finite_tau} hold.
Applying Lemma~\ref{lem:renewal_first_ACK}\textup{(c)} gives
\[
F(p_{11},x)=\frac{r\,\widetilde\Theta(p_{11},x)}{1-\beta\,\widetilde\Theta(p_{11},x)},
\qquad
G(p_{11},x)=\frac{\widetilde G(p_{11},x)}{1-\beta\,\widetilde\Theta(p_{11},x)}.
\]
Substituting these into the preceding identities yields \eqref{eq:f_closed_x0_p11} and
\eqref{eq:g_closed_x0_p11}, and taking their ratio gives \eqref{eq:m_closed_x0_p11}.
Finally, substituting \eqref{eq:Gtilde_p11_diag_finite_tau}--\eqref{eq:Thetatilde_p11_diag_finite_tau} into
\eqref{eq:m_closed_x0_p11} gives \eqref{eq:m_closed_x0_p11_sum}.

\emph{(c)} If \(x\in[p_{11},1]\), then \(z=x\ge p_{11}\), so by Proposition~\ref{pro:fg_closed_form}\textup{(e)}
$
g(x,x)=1, f(x,x)=r\kappa x.
$
Hence \(m(x)=r\kappa x\).
\end{proof}

\begin{proof}[Proof of Proposition~\ref{pro:m_continuous_monotone_tractable}]
\emph{(a)} This is Proposition~\ref{pro:partial_myopic_index}\textup{(a)}.

\emph{(b)} Fix \(n\in\{1,\dots,N_0\}\) and \(x\in J_n\). By construction,
$
u_n\le x<u_{n-1}.
$
For \(n<N_0\) this is immediate from the definition of \(J_n\). For \(n=N_0\), it follows from
$
J_{N_0}=[x^0,u_{N_0-1})
\qquad\text{and}\enspace
u_{N_0}\le x^0<u_{N_0-1},
$
the latter being equivalent to \(N_0=\tau_{\downarrow}^1(p_{11},x^0)\). Hence
$
\tau_{\downarrow}^1(p_{11},x)=n.
$
Therefore, Proposition~\ref{pro:partial_myopic_index}\textup{(b)} gives
$
\widetilde G(p_{11},x)=A_n,
\enspace
\widetilde\Theta(p_{11},x)=\kappa B_n.
$
Substituting these identities into \eqref{eq:m_closed_x0_p11} yields \eqref{eq:m_piecewise_mobius_tractable}.

Now differentiate \eqref{eq:m_piecewise_mobius_tractable} on \(J_n\):
\[
m'(x)
=
\frac{r\kappa\bigl(1-\beta\kappa B_n\bigr)}
{\bigl(1+\beta\kappa(A_nx-B_n)\bigr)^2}.
\]
Since, on \(J_n\),
$
1-\beta\kappa B_n
=
1-\beta\,\widetilde\Theta(p_{11},x)>0
$
by Lemma~\ref{lem:renewal_first_ACK}\textup{(c)}, it follows that \(m'(x)>0\) on \(J_n\).
Thus \(m(\cdot)\) is continuous and increasing on each \(J_n\).

\emph{(c)} Let \(1\le n\le N_0-1\). Since
$
A_{n+1}=A_n+\beta^n\Gamma_n^{11},
\enspace
B_{n+1}=B_n+\beta^n\Gamma_n^{11}u_n,
$
we obtain
$
1+\beta\kappa(A_{n+1}u_n-B_{n+1})
=
1+\beta\kappa(A_nu_n-B_n).
$
Hence the left and right formulas in \eqref{eq:m_piecewise_mobius_tractable} match at \(x=u_n\), so \(m\) is continuous at each \(u_n\).

At \(x=p_{11}\), Proposition~\ref{pro:partial_myopic_index}\textup{(c)} gives
$
m(p_{11})=r\kappa p_{11}.
$
Also, since \(u_0=p_{11}\), the interval immediately below \(p_{11}\) is \(J_1=[\max\{x^0,u_1\},p_{11})\), and on \(J_1\),
$
A_1=1,\qquad B_1=p_{11}.
$
Thus
\[
m(x)=\frac{r\kappa x}{1+\beta\kappa(x-p_{11})},
\qquad x\in J_1,
\]
so
$
\lim_{x\nearrow p_{11}}m(x)=r\kappa p_{11}=m(p_{11}).
$
Therefore \(m\) is continuous at \(x=p_{11}\).

\emph{(d)} This is Proposition~\ref{pro:partial_myopic_index}\textup{(c)}.
\end{proof}

\section{Illustrative instance and diagnostic plots}
\label{app:illustrative_instance}

To illustrate the preceding analytical and computational results, we consider the parameter instance
\[
p_{01}=0.25,\qquad \rho=0.6,\qquad \kappa=0.8,\qquad \beta=0.95,\qquad r=1.
\]
For this instance,
$
p_{11}=p_{01}+\rho=0.85,\enspace
x^0={p_{01}}/{1-\rho}=0.625,
$
and the active no-ACK fixed point is
$
x^1\approx 0.2967.
$
Thus the analytically unresolved threshold region is the intermediate interval
$
x^1<z<x^0,
$ i.e., 
$
z\in(0.2967,\,0.625),
$
while the complementary regions are covered by the tractable-regime analysis above.

Figures~\ref{fig:tildeFG_vs_z}--\ref{fig:fg_vs_z_plot} display, for a representative fixed belief \(x\), the
threshold dependence of the pre-ACK metrics \(\widetilde F(x,z)\), \(\widetilde G(x,z)\),
\(\widetilde \Theta(x,z)\), the reward/work metrics \(F(x,z)\), \(G(x,z)\), and the marginal metrics
\(\tilde f(x,z)\), \(\tilde g(x,z)\), \(f(x,z)\), and \(g(x,z)\).
The plots are consistent with the tractable-regime theory.
In particular, outside the interval \(x^1<z<x^0\), the curves exhibit the right-continuous stepwise behavior predicted by
Propositions~\ref{pro:tilde_metrics_piecewise_constant_tractable},
\ref{pro:FG_closed_form_with_proof}, and \ref{pro:fg_closed_form}, with jumps only at threshold values where the
underlying skeleton itinerary changes.
Within the intermediate regime \(x^1<z<x^0\), the plots display a much finer staircase structure.
This is consistent with the Christoffel--Sturmian itinerary organization of
Theorem~\ref{thm:our_Thm12_mwords} and Corollary~\ref{cor:our_Cor13_pairs}, even though the corresponding metric
consequences are not derived here.

Because \(r=1\) in this instance, Lemma~\ref{lem:tildef_tildetheta_relation} implies
$
\tilde f(x,z)=\tilde\theta(x,z),
$
so Figure~\ref{fig:tildef_tildeg_vs_z}  provides a useful consistency check on the
pre-ACK marginal quantities. Likewise, Figures~\ref{fig:fg_vs_z_plot} and~\ref{fig:fg_diag} indicate that the
marginal work metric remains positive throughout the plotted range, in agreement with the proved lower bounds on the
tractable regimes and with the broader numerical evidence reported later.

\begin{figure}[!htbp]
  \centering
  \includegraphics[width=0.4\textwidth]{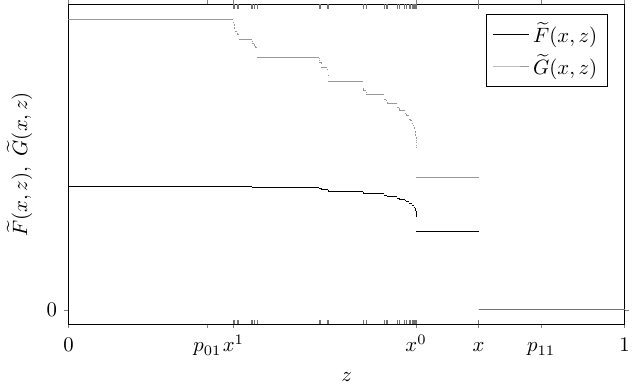}%
  \caption{Pre-ACK reward and work metrics $\widetilde{F}(x,z)$ and $\widetilde{G}(x,z)$ vs.\ $z$ for fixed belief $x$.}
  \label{fig:tildeFG_vs_z}
\end{figure}

\begin{figure}[!htbp]
  \centering
  \includegraphics[width=0.4\textwidth]{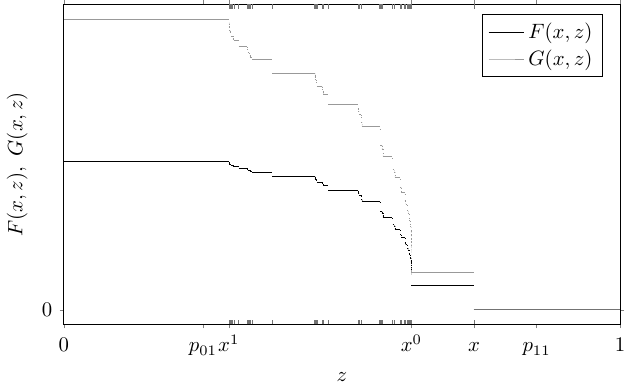}%
  \caption{Reward and work metrics $F(x,z)$ and $G(x,z)$ vs.\ $z$ for fixed belief $x$.}
  \label{fig:FG_vs_z}
\end{figure}

\begin{figure}[!htbp]
  \centering
  \includegraphics[width=0.4\textwidth]{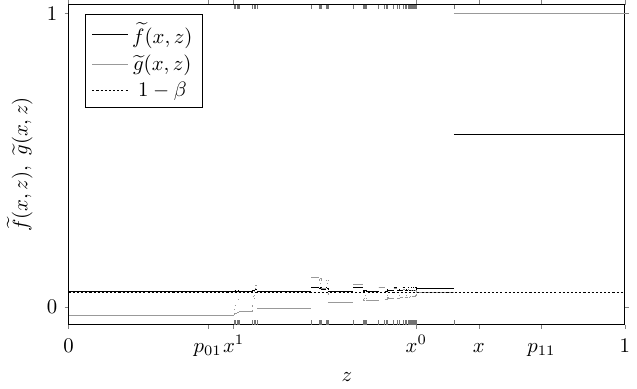}%
  \caption{Marginal metrics $\tilde{f}(x,z)$ and $\tilde g(x,z)$ vs.\ $z$ for fixed belief $x$.}
  \label{fig:tildef_tildeg_vs_z}
\end{figure}

\begin{figure}[!htbp]
  \centering
  \includegraphics[width=0.4\textwidth]{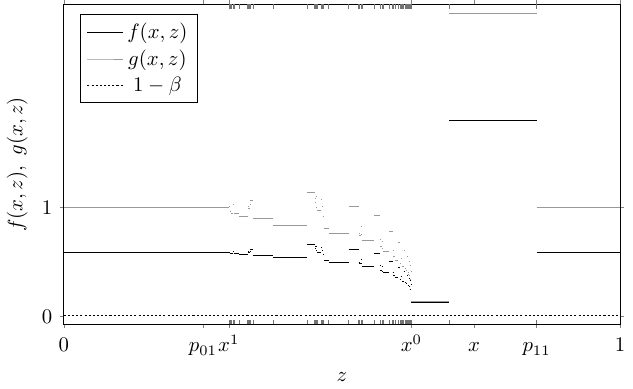}%
  \caption{Marginal metrics $f(x,z)$ and $g(x,z)$ vs.\ $z$ for fixed belief $x$.}
  \label{fig:fg_vs_z_plot}
\end{figure}

\begin{figure}[!htbp]
  \centering
  \includegraphics[width=0.4\textwidth]{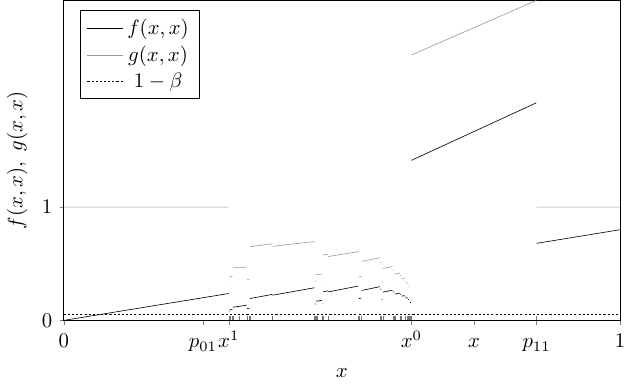}%
  \caption{Marginal metrics $f(x,x)$ and $g(x,x)$ vs.\ $x$.}
  \label{fig:fg_diag}
\end{figure}

\begin{figure}[!htbp]
  \centering
  \includegraphics[width=0.3\textwidth]{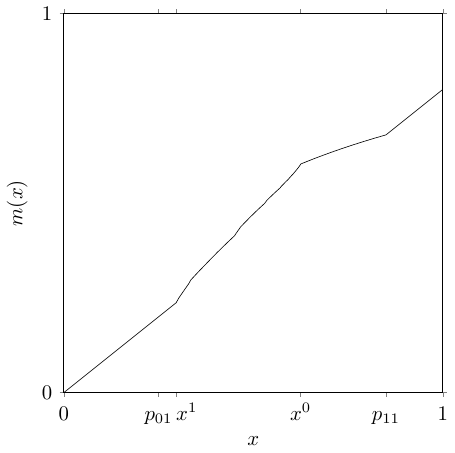}%
  \caption{MP index $m(x)$ vs.\ $x$.}
  \label{fig:mp_index_vs_x}
\end{figure}

Figures~\ref{fig:fg_diag} and~\ref{fig:mp_index_vs_x} show the diagonal quantities \(f(x,x)\), \(g(x,x)\), and
the MP index \(m(x)\). In the low- and high-belief regions, the plots agree with the exact formulas in
Proposition~\ref{pro:partial_myopic_index}, namely
$
m(x)=r\kappa x=0.8x
$ for
$x\in[0,x^1]\cup[p_{11},1].
$
On the interval \([x^0,p_{11}]\), the plot agrees with the finite-sum representation established in the same
proposition. In the remaining interval \(x^1<x<x^0\), where a complete analytical treatment is left for future work, the graph of
\(m(x)\) appears continuous and nondecreasing, which is precisely the behavior required for \textup{(PCLI2)}.

Overall, this illustrative instance highlights the main message of the paper.
The tractable-regime analysis accurately captures the coarse transitions of the metrics and the MP index, while the
intermediate regime exhibits a rich but highly structured staircase behavior that strongly suggests that the same
underlying symbolic organization persists at the metric level.
This motivates the broader numerical investigation of Section~\ref{s:experiments}.

\section{Itinerary organization via words}
\label{app:word_structure}

This appendix records the symbolic-dynamics material underlying the discussion in
Section~\ref{s:open_gaps}. It describes the organization of no-ACK threshold
itineraries in the intermediate regime in terms of binary words, and states the corresponding
Christoffel--Sturmian structure inherited from the maps-with-gaps theory of
\citet{danceSi19}.

A key simplification is that, for a fixed threshold \(z\), the no-ACK skeleton is fully
determined by the binary decisions ``active'' and ``passive'' along the trajectory. We
therefore encode skeleton itineraries by words over the alphabet
$
\mathcal A\triangleq\{0,1\},
$
where \(1\) denotes an active step (so the update is \(\phi^1\)) and \(0\) denotes a passive
step (so the update is \(\phi^0\)). This converts a threshold-driven piecewise iteration into a
composition of the two deterministic maps \(\phi^0\) and \(\phi^1\), and makes it possible to
compare trajectories through the combinatorial structure of the associated words.

A finite word is a string \(w=w_0\cdots w_{\ell-1}\in\mathcal A^\ell\) of length \(|w|=\ell<\infty\),
while an infinite word is \(w=w_0w_1w_2\cdots\in\mathcal A^{\mathbb N}\). We denote by
\(\emptyset\) the empty word. For a finite word \(w\), we write \(|w|_1\) and \(|w|_0\) for the
number of \(1\)'s and \(0\)'s occurring in \(w\), respectively. If \(u\) and \(v\) are words,
their concatenation is denoted \(uv\). We say that \(u\) is a prefix of \(w\) if \(w=uv\) for
some (possibly empty) word \(v\). For \(m\in\mathbb Z_+\), we denote by \(w^m\) the \(m\)-fold
concatenation of a finite word \(w\) (with \(w^0=\emptyset\)), and let \(w^\infty\) denote
infinite periodic repetition. For a word \(w\) and \(0\le i\le j\le |w|-1\), we write
$
w_{i:j}\triangleq w_i\cdots w_j.
$

For a word \(w=w_0\cdots w_{t-1}\), we denote by \(\phi^w\) the corresponding
composition of the deterministic belief-update maps \(\phi^0\) and \(\phi^1\),
$
\phi^w\triangleq \phi^{w_{t-1}}\circ\cdots\circ\phi^{w_0},
$
with the convention $\phi^{\emptyset}(x)\equiv x.$
If \(w=\widetilde{\sigma}(x,z)\) is the infinite skeleton itinerary, then its
length-\(t\) prefix is
$
w_{0:t-1}=\widetilde A_0(x,z)\cdots \widetilde A_{t-1}(x,z),\enspace t\ge1,
$
and
$
\widetilde X_t(x,z)=\varphi_t(x,z)=\phi^{w_{0:t-1}}(x),\enspace t\ge1.
$

Fix \(x\in\mathcal X\), and let \(u=u_0\cdots u_{t-1}\in\mathcal A^t\) be a finite word.
Define the associated deterministic belief sequence by
\[
y_0^u(x)\triangleq x,\qquad
y_{k+1}^u(x)\triangleq \phi^{u_k}\!\bigl(y_k^u(x)\bigr),\qquad k=0,\ldots,t-1,
\]
so that \(y_t^u(x)=\phi^u(x)\). Set
\[
L^u(x)\triangleq \max\{y_k^u(x):\,u_k=0\},
\qquad
U^u(x)\triangleq \min\{y_k^u(x):\,u_k=1\},
\]
with the conventions \(\max\emptyset=-\infty\) and \(\min\emptyset=+\infty\), and define the
word-realization interval
\begin{equation}
\label{eq:Iu_def_word_structure}
I^u(x)\triangleq [\,L^u(x),\,U^u(x)\,)\subset\mathbb R.
\end{equation}
Then \(u\) is realized as the length-\(t\) strict-threshold skeleton prefix if and only if
\(z\in I^u(x)\), and in that case
\begin{equation}
\label{eq:varphi_const_on_Iu_word_structure}
\varphi_t(x,z)=\phi^u(x)=y_t^u(x).
\end{equation}

The corresponding statement for infinite words is obtained by intersection over all finite
prefixes.

\begin{lemma}
\label{lma:varphixzpwcrcz}
Fix \(x\in\mathcal X\).
\begin{enumerate}[label=\textup{(\alph*)},leftmargin=2em]
\item For every \(t\in\mathbb Z_+\), the map \(z\mapsto \varphi_t(x,z)\) is piecewise constant
and right-continuous on \(\mathbb R\). Any point of discontinuity must be a self-consistency
point, in the sense that
$
z=\varphi_k(x,z)\enspace\text{for some }k\in\{0,1,\dots,t-1\}.
$

\item For every infinite word \(w\in\mathcal A^{\mathbb N}\), if the itinerary interval
$
I^w(x)\triangleq \bigcap_{t\ge1} I^{w_{0:t-1}}(x)
$
is nonempty, then for every \(z\in I^w(x)\) the strict-threshold skeleton itinerary equals
\(w\), and for every \(t\ge1\),
\begin{equation}
\label{eq:varphi_const_on_Iw_word_structure}
\varphi_t(x,z)=\phi^{\,w_{0:t-1}}(x).
\end{equation}
In particular, each \(\varphi_t(x,\cdot)\) is constant on \(I^w(x)\).
\end{enumerate}
\end{lemma}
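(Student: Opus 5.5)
The whole argument rests on the word-realization characterization stated just before the lemma. For each finite word \(u\in\mathcal A^t\), \(u\) is the length-\(t\) strict-threshold skeleton prefix exactly when \(z\in I^u(x)=[L^u(x),U^u(x))\), and then \(\varphi_t(x,z)=\phi^u(x)\). I would first check this claim directly. By induction on \(k\), the prefix \(\widetilde A_0\cdots\widetilde A_{k-1}\) equals \(u_{0:k-1}\) if and only if \(\widetilde X_j(x,z)=y_j^u(x)\) and \(\1_{\{y_j^u(x)>z\}}=u_j\) for all \(j<k\). The condition \(y_j^u>z\) for every \(j\) with \(u_j=1\) means \(z<U^u(x)\). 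The condition \(y_j^u\le z\) for every \(j\) with \(u_j=0\) means \(z\ge L^u(x)\). Because the numbers \(y_j^u(x)\) depend only on \(x\) and \(u\), not on \(z\), this gives the characterization.

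\emph{Part (a).} For fixed \(t\), the sets \(I^u(x)\) with \(u\in\mathcal A^t\) are left-closed, right-open intervals. They are pairwise disjoint, since each \(z\) determines a unique prefix, and they cover \(\mathbb R\). There are at most \(2^t\) of them, so \(z\mapsto\varphi_t(x,z)\) takes the constant value \(\phi^u(x)\) on each nonempty \(I^u(x)\). Hence it is a piecewise constant step function with finitely many pieces. Right-continuity follows because every piece has the form \([L,U)\): each point has a right neighbourhood inside its own piece.

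A discontinuity can only occur at a left endpoint \(z=L^u(x)\) of some nonempty piece with \(L^u(x)>-\infty\). By definition \(L^u(x)=y_k^u(x)\) for some \(k<t\) with \(u_k=0\). Since \(z\in I^u(x)\), we have \(y_k^u(x)=\varphi_k(x,z)\), and therefore \(z=\varphi_k(x,z)\), which is the stated self-consistency condition.

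\emph{Part (b).} Take \(z\in I^w(x)=\bigcap_t I^{w_{0:t-1}}(x)\). Then for every \(t\) the length-\(t\) prefix of \(\widetilde\sigma(x,z)\) is \(w_{0:t-1}\), so the full itinerary is \(w\). Identity \eqref{eq:varphi_const_on_Iw_word_structure} then follows from \eqref{eq:varphi_const_on_Iu_word_structure} applied to each prefix. Constancy of each \(\varphi_t(x,\cdot)\) on \(I^w(x)\) is immediate, because the right-hand side does not involve \(z\).

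\emph{Main obstacle.} The only step needing care is the inductive prefix characterization, in particular the strict versus non-strict inequalities and the empty-set conventions \(\max\emptyset=-\infty\) and \(\min\emptyset=+\infty\). These conventions are what make the pieces half-open, which in turn yields right-continuity. The rest of the proof is bookkeeping.
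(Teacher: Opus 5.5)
Your proposal is correct and follows the same route as the paper. The paper's proof likewise derives part (a) from the word-realization characterization $z\in I^u(x)$ (finitely many words $u\in\mathcal A^t$, each piece of the form $[L,U)$) and part (b) by applying that characterization to every prefix $w_{0:t-1}$. You go further by verifying the characterization inductively and by showing that a discontinuity must sit at a left endpoint $L^u(x)=y_k^u(x)$ with $u_k=0$, which yields $z=\varphi_k(x,z)$; the paper leaves both steps implicit.
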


\begin{proof}
Part \textup{(a)} follows from \eqref{eq:varphi_const_on_Iu_word_structure}, since for fixed \(t\)
there are finitely many words \(u\in\mathcal A^t\). Right-continuity follows because
each realization interval is of the form \([L,U)\). Part \textup{(b)} follows by applying
\eqref{eq:varphi_const_on_Iu_word_structure} to every prefix \(w_{0:t-1}\).
\end{proof}

The next lemma formalizes the standard fact, used explicitly in
\cite[\S2.4, remark after Theorem~12]{danceSi19}, that threshold itineraries are preserved
by an increasing change of variables.

\begin{lemma}[Itinerary invariance under increasing conjugacy]
\label{lem:itinerary_invariance_conjugacy}
Let \(\gamma:I\to J\) be increasing and set
$
\hat\phi^{a}=\gamma\circ\phi^{a}\circ \gamma^{-1},\enspace a\in\{0,1\}.
$
Then for all \(x,z\in I\),
\[
\widetilde{\sigma}(x,z\mid \phi^0,\phi^1)
=
\widetilde{\sigma}(\gamma(x),\gamma(z)\mid \hat\phi^0,\hat\phi^1),
\qquad
\widetilde{\sigma}(x,z^{-}\mid \phi^0,\phi^1)
=
\widetilde{\sigma}(\gamma(x),\gamma(z)^{-}\mid \hat\phi^0,\hat\phi^1).
\]
\end{lemma}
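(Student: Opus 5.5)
We prove the identity by induction on time along the two skeleton trajectories, comparing them step by step. For the strict-threshold case, set \(\widetilde X_t\triangleq\varphi_t(x,z)\), computed with the maps \(\phi^0,\phi^1\) and threshold \(z\). Let \(\widehat X_t\) be the corresponding iterates computed with \(\hat\phi^0,\hat\phi^1\) and threshold \(\gamma(z)\), started from \(\gamma(x)\). The claim to establish, for all \(t\ge0\), is the pair
\[
\widehat X_t=\gamma(\widetilde X_t),\qquad \1_{\{\widehat X_t>\gamma(z)\}}=\1_{\{\widetilde X_t>z\}}.
\]
The second equality is exactly equality of the two itineraries at position \(t\).

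The base case is \(t=0\). Here \(\widehat X_0=\gamma(x)=\gamma(\widetilde X_0)\) by definition. Because \(\gamma\) is increasing in the strict sense, it is injective and order-preserving. Hence \(u>z\) holds if and only if \(\gamma(u)>\gamma(z)\), for all \(u,z\in I\). This order equivalence gives equality of the action symbols at time \(0\).

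For the inductive step, suppose both equalities hold at time \(t\), and let \(a\) denote the common action symbol. Then
\[
\widehat X_{t+1}=\hat\phi^a(\widehat X_t)=\gamma\circ\phi^a\circ\gamma^{-1}\bigl(\gamma(\widetilde X_t)\bigr)=\gamma\bigl(\phi^a(\widetilde X_t)\bigr)=\gamma(\widetilde X_{t+1}).
\]
Applying the order equivalence again gives equality of the symbols at time \(t+1\).

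The \(z^-\) case is handled by the same induction with the comparison \(\ge\) in place of \(>\). The required equivalence \(u\ge z\iff\gamma(u)\ge\gamma(z)\) also follows from strict monotonicity of \(\gamma\). The notation \(\gamma(z)^-\) is read as the non-strict threshold at the point \(\gamma(z)\), consistent with the definition of \(\varphi(\cdot,z^-)\).

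The only point needing care is well-definedness, which is the main, if mild, obstacle. The conjugated maps are composed on \(J\), so each iterate must remain in the domain of \(\gamma^{-1}\), and \(\phi^a\) must map \(I\) into \(I\). In the intended application \(\gamma=\vartheta\), with \(I=(0,1)\) and \(J=\mathbb R\). An initial point \(x\in\{0,1\}\) lies outside \(I\), but Remark~\ref{rem:phi01x}\textup{(iii)} places all iterates from time \(1\) onward in \(\widetilde{\mathcal X}\subset(0,1)\). So one either assumes \(x,z\in I\), as stated, or handles time \(0\) separately and applies the induction from time \(1\). With \(\phi^a(I)\subseteq I\) assumed, as in the statement's setting, the induction closes.
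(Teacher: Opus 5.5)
Your proposal is correct and matches the paper's argument. The paper sets \(y_j=\gamma(x_j)\), uses strict monotonicity of \(\gamma\) to identify the comparisons with the threshold (\(>\) and \(\ge\)), and checks \(y_{j+1}=\hat\phi^{a}(y_j)\); this is your induction phrased as verifying that the image orbit satisfies the conjugated threshold recursion. Your observation that \(\phi^a(I)\subseteq I\) is needed for the conjugated iteration to be well defined is left implicit in the paper, and it holds in the intended application since \(\phi^a(\mathcal X)\subseteq[p_{01},p_{11}]\subset(0,1)\).
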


\begin{proof}
Let \((x_j)\) be the \(z\)-threshold orbit from \(x\) under \((\phi^0,\phi^1)\), and set
\(y_j=\gamma(x_j)\). Since \(\gamma\) is increasing, \(x_j>z\) iff \(y_j>\gamma(z)\), and
\(x_j\ge z\) iff \(y_j\ge \gamma(z)\). Moreover,
$
y_{j+1}=\gamma(\phi^0(x_j))=\hat\phi^0(y_j)
$
when \(y_j\le \gamma(z)\) (resp.\ \(<\gamma(z)\)), and
$
y_{j+1}=\gamma(\phi^1(x_j))=\hat\phi^1(y_j)
$
when \(y_j>\gamma(z)\) (resp.\ \(\ge\gamma(z)\)).
Thus \((y_j)\) is exactly the \(\gamma(z)\)-threshold (resp.\ \(\gamma(z)^-\)-threshold) orbit from
\(\gamma(x)\) under \((\hat\phi^0,\hat\phi^1)\), and the induced itinerary letters coincide.
\end{proof}

A word is \emph{balanced} if, for every \(\ell\ge1\), the number of \(1\)'s in any two
length-\(\ell\) factors differs by at most one. A source of balanced words is provided
by \emph{mechanical words}: for \(\alpha\in(0,1)\) and \(\eta\in\mathbb R\), the \emph{lower mechanical
word} is
\[
(L_{\alpha,\eta})_j
\triangleq
\big\lfloor (j+1)\alpha+\eta\big\rfloor-\big\lfloor j\alpha+\eta\big\rfloor,
\qquad j\ge0.
\]
When \(\alpha\) is rational, the lower mechanical word is periodic, and its primitive period is,
up to cyclic shift, a \emph{Christoffel word}. When \(\alpha\) is irrational, one obtains an aperiodic
balanced word, i.e., a \emph{Sturmian word}.

If \(\alpha=m/n\in(0,1)\) is rational in lowest terms, the corresponding lower Christoffel
word is
$
C_{m/n}\triangleq (L_{m/n,0})_{0:(n-1)}.
$
It has the standard palindromic factorization
$
C_{m/n}=0p1,
$
where \(p\) is a (possibly empty) palindrome. If \(\alpha\) is irrational, the lower mechanical
word \(L_{\alpha,0}\) is a Sturmian word. Following \citet[Definition~6]{danceSi19}, we call
\(C_{m/n}\) the \(\mathcal M\)-word of rate \(\alpha=m/n\) when \(\alpha\) is rational, and
\(L_{\alpha,0}\) the \(\mathcal M\)-word of rate \(\alpha\) when \(\alpha\) is irrational.

For a finite non-empty word \(w\), Assumption~\ref{ass:DSA2_generic} implies that \(\phi^w\)
has a unique fixed point \(x^w\in\mathcal X\). For a Sturmian \(\mathcal M\)-word \(0s\),
define \(x^s\) via the common limit of fixed points along the associated Christoffel
approximants, exactly as in \cite[\S2.4]{danceSi19}; existence and coincidence of the two
limits follow from \cite[Lemma~55]{danceSi19}.

\begin{theorem}[Threshold itineraries as Christoffel and Sturmian words]
\label{thm:our_Thm12_mwords}
Let \(0p1\) be a Christoffel word and let \(0s\) be a Sturmian \(\mathcal M\)-word. Then the
fixed points \(x^{01p}\), \(x^{10p}\), and \(x^s\) exist in \((0,1)\). Moreover, the
active-at-threshold left-threshold itinerary \(z\mapsto\widetilde{\sigma}(z,z^-)\) is
lexicographically nonincreasing on \(z\in(0,1)\) and is given by
\[
\widetilde{\sigma}(z,z^-)=
\begin{cases}
1^\infty, & \text{if and only if } z\le x^1,\\
(10p)^\infty, & \text{if and only if } z\in [x^{01p},x^{10p}],\\
10s, & \text{if and only if } z=x^s,\\
10^\infty, & \text{if and only if } z\ge x^0.
\end{cases}
\]
\end{theorem}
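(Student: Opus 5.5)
The plan is to import Theorem~12 of \citet{danceSi19} rather than reprove the maps-with-gaps theory. That theorem is stated for a pair of maps \((h^0,h^1)\) on \(\mathbb R\) satisfying their Assumption~2 (our Assumption~\ref{ass:DSA2_generic}). It gives exactly this classification of the left-threshold itinerary started at the threshold, with the conjugated fixed points in place of \(x^1\), \(x^{01p}\), \(x^{10p}\), \(x^s\), \(x^0\).

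\emph{Step 1: pass to the conjugated maps.} Work with \(\hat\phi^0,\hat\phi^1\) obtained via the logit \(\vartheta\). By Lemma~\ref{lem:logit_conjugacy_A2}, these maps are increasing, strictly contractive on \(\mathbb R\), and have unique fixed points \(\hat x^1<\hat x^0\). So all three parts of Assumption~\ref{ass:DSA2_generic} hold on \(\mathbb R\), which is where \citet{danceSi19} work.

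\emph{Step 2: fixed points and Theorem~12.} Contractiveness gives every finite composition \(\hat\phi^{w}\) a unique fixed point, by Banach's theorem. I would also check the strictness required for uniqueness, since mere strict (not uniform) contraction needs a fixed point to exist. Here the derivatives \((\hat\phi^a)'\) lie in \((0,1)\) and tend to values below \(1\) at \(\pm\infty\), so on the relevant compact image the contraction is uniform. Moreover \(\phi^a\) maps \((0,1)\) into \([p_{01},p_{11}]\), so the fixed points lie in \(\vartheta([p_{01},p_{11}])\). Setting \(x^w\triangleq\vartheta^{-1}(\hat x^w)\) gives fixed points of \(\phi^w\) in \((0,1)\), and conjugacy respects composition. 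For the Sturmian case, \(x^s\) is defined through the Christoffel approximants, and existence follows from \citep[Lemma~55]{danceSi19} applied to the hatted maps. Pulling back by the increasing bijection \(\vartheta^{-1}\) preserves these limits. Applying \citep[Theorem~12]{danceSi19} then yields the case list for \(\hat z\mapsto\widetilde\sigma(\hat z,\hat z^-\mid\hat\phi^0,\hat\phi^1)\), with the lexicographic monotonicity, on all of \(\mathbb R\).

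\emph{Step 3: transfer back.} Lemma~\ref{lem:itinerary_invariance_conjugacy}, with \(\gamma=\vartheta\), gives \(\widetilde\sigma(z,z^-)=\widetilde\sigma(\vartheta(z),\vartheta(z)^-\mid\hat\phi^0,\hat\phi^1)\) for \(z\in(0,1)\). Since \(\vartheta\) is increasing, intervals map to intervals, the conditions ``\(\le\)'' and ``\(\ge\)'' are preserved, and lexicographic nonincrease in \(\hat z\) becomes nonincrease in \(z\).

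As a sanity check on the extreme cases, Lemma~\ref{lem:threshold_regime_dynamics} can be read directly with the convention \(\ge\). For \(z\le x^1\), active iterates starting at \(z\) stay at or above \(x^1\), hence at or above \(z\), so the itinerary is \(1^\infty\). For \(z\ge x^0\), after \(\phi^1(z)<z\) the passive region is trapping, so the itinerary is \(10^\infty\).

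\emph{Main obstacle.} The real difficulty is confirming that the hypotheses of \citep[Theorem~12]{danceSi19} match exactly. Their conventions must agree with ours: the left-threshold convention, the start at the threshold, which letter is ``active'' (their map ordering requires \(x^1<x^0\), consistent with ours), and the lower versus upper Christoffel orientation \(0p1\). These points must be reconciled, possibly by swapping letters or reversing words if their normalization differs. The fixed-point existence for strict-but-nonuniform contractions should also be checked, though the invariant compact interval settles it.
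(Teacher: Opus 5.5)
Your proposal follows the paper's proof: conjugate by the logit map via Lemma~\ref{lem:logit_conjugacy_A2}, apply Theorem~12 of Dance and Si to \((\hat\phi^0,\hat\phi^1)\), and pull the classification back through Lemma~\ref{lem:itinerary_invariance_conjugacy}. Your observation that every \(\hat\phi^{w}\) maps \(\mathbb R\) into the compact interval \(\vartheta([p_{01},p_{11}])\) usefully supplies the fixed-point existence the paper leaves implicit; one small slip is in your sanity check, where for \(z<x^1\) the active iterates from \(z\) increase toward \(x^1\) from below (Lemma~\ref{lem:phi1_iterates}\textup{(b)}), so they lie in \([z,x^1)\) rather than at or above \(x^1\), which still yields \(1^\infty\).
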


\begin{proof}
Let \(\vartheta\) be the logit map and let \((\hat\phi^0,\hat\phi^1)\) be the conjugated maps
from Lemma~\ref{lem:logit_conjugacy_A2}. By that lemma, \((\hat\phi^0,\hat\phi^1)\)
satisfies Assumption~2 of \citet{danceSi19} on \(\mathbb R\), with ordered fixed points
\(\hat x^1<\hat x^0\). Therefore \cite[Theorem~12]{danceSi19} applies to
\(\widetilde{\sigma}(\cdot,\cdot^-)\) for the conjugated maps.

By Lemma~\ref{lem:itinerary_invariance_conjugacy}, for each \(z\in(0,1)\),
$
\widetilde{\sigma}(z,z^- \mid \phi^0,\phi^1)
=
\widetilde{\sigma}(\vartheta(z),\vartheta(z)^- \mid \hat\phi^0,\hat\phi^1),
$
and fixed points for word compositions correspond under \(\vartheta\). Translating the
conclusion of \citep[Theorem~12]{danceSi19} back through \(\vartheta\) yields the stated
characterization.
\end{proof}

\begin{corollary}[One-step-deviation itineraries]
\label{cor:our_Cor13_pairs}
Let \(0p1\) be a Christoffel word and let \(0s\) be a Sturmian \(\mathcal M\)-word. Then the
pair of itineraries
$
\bigl(\widetilde{\sigma}(\phi^0(z),z),\ \widetilde{\sigma}(\phi^1(z),z)\bigr)
$
satisfies
\[
\bigl(\widetilde{\sigma}(\phi^0(z),z),\ \widetilde{\sigma}(\phi^1(z),z)\bigr)
=
\begin{cases}
(1^\infty,1^\infty), & z<x^1,\\
(1^\infty,01^\infty), & z=x^1,\\
((1p0)^\infty,(0p1)^\infty), & z\in [x^{01p},x^{10p}),\\
((1p0)^\infty,\,0p(01p)^\infty), & z=x^{10p},\\
(1s,0s), & z=x^s,\\
(0^\infty,0^\infty), & z\ge x^0.
\end{cases}
\]
\end{corollary}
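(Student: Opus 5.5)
The plan is to derive the corollary from Theorem~\ref{thm:our_Thm12_mwords} by a one-step shift argument, in the same way \citet[Corollary~13]{danceSi19} follows from their Theorem~12. Everything transfers through the logit conjugacy of Lemma~\ref{lem:logit_conjugacy_A2} and the itinerary invariance of Lemma~\ref{lem:itinerary_invariance_conjugacy}. In fact, one could simply transport \citep[Corollary~13]{danceSi19} verbatim through $\vartheta$. I would nevertheless spell out the direct argument, to make the boundary cases visible.

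The key observation concerns the left-threshold itinerary $\widetilde\sigma(z,z^-)$ from Theorem~\ref{thm:our_Thm12_mwords}. Since $z\ge z$, it always starts with the letter $1$, so its next state is $\phi^1(z)$. Its continuation from $\phi^1(z)$ is then the $z^-$-itinerary of $\phi^1(z)$.

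\begin{itemize}
\item In the Christoffel case, if the continuation state never equals $z$ exactly, the $z^-$- and $z$-itineraries coincide, giving $\widetilde\sigma(\phi^1(z),z)=(0p1)^\infty$ from the shift of $(10p)^\infty$.
\item Ties occur precisely when some iterate returns to $z$, i.e.\ $z$ is a fixed point of a word composition. This is why the interval endpoints need separate treatment.
\item For $\phi^0(z)$, I would compare with the passive deviation. Starting passive at $z$ corresponds, by lexicographic monotonicity, to the right limit of the itinerary as the threshold is perturbed. Concretely, $\widetilde\sigma(\phi^0(z),z)$ should equal the shift of the itinerary obtained from the cyclic conjugate $01p\to 1p0$. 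I would verify this using the palindromic structure $0p1$ and the standard identity that $01p$ and $10p$ are the extreme conjugates, so that $z\in[x^{01p},x^{10p})$ makes the orbit from $\phi^0(z)$ follow $(1p0)^\infty$ with strict inequalities.
\end{itemize}

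The easy regimes I would check directly.

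\begin{itemize}
\item For $z<x^1$, both $\phi^0(z)$ and $\phi^1(z)$ exceed $z$, and active iterates decrease to $x^1>z$. Hence both itineraries are $1^\infty$, as in Lemma~\ref{lem:threshold_regime_dynamics}(b).
\item At $z=x^1$, $\phi^1(x^1)=x^1\le z$ gives a passive step, after which the orbit stays strictly above $x^1$. This gives $01^\infty$.
\item For $z\ge x^0$, trapping gives $(0^\infty,0^\infty)$.
\item The Sturmian point $z=x^s$ follows from the $10s$ case by shifting, using that the orbit never hits $z$ exactly afterwards. This holds because the word is aperiodic.
\end{itemize}

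The main obstacle is the endpoint $z=x^{10p}$. Here the orbit from $\phi^1(z)$ returns exactly to $z$ after the block $0p1$: it lands on the fixed point of $\phi^{10p}$ and is then active under the strict rule, where it is passive under the $z^-$ rule. The itinerary therefore switches after $0p$ to $(01p)^\infty$. I would handle this by explicitly tracking the orbit through the composition $\phi^{0p}$ and invoking the fixed-point characterization of $x^{10p}$ and $x^{01p}$ from \cite[Lemma~55 and Theorem~12]{danceSi19}. Right-continuity in $z$ from Lemma~\ref{lma:varphixzpwcrcz} then confirms the half-open interval $[x^{01p},x^{10p})$ in the generic case.
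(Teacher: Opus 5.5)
Your closing remark, transporting \citep[Corollary~13]{danceSi19} through $\vartheta$ via Lemmas~\ref{lem:logit_conjugacy_A2} and~\ref{lem:itinerary_invariance_conjugacy}, is exactly the paper's proof and is complete on its own. The direct argument you say you would actually write out, however, has a genuine gap in the first component $\widetilde\sigma(\phi^0(z),z)$.

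Theorem~\ref{thm:our_Thm12_mwords} only describes $\widetilde\sigma(z,z^-)=1\,\widetilde\sigma(\phi^1(z),z^-)$, that is, the orbit issuing from $\phi^1(z)$. It says nothing about the orbit from $\phi^0(z)$, and your ``right limit'' device does not bridge this:
\begin{itemize}
\item $\lim_{\zeta\searrow z}\widetilde\sigma(\zeta,\zeta^-)$ always begins with $1$; inside a Christoffel interval it is just $(10p)^\infty$ again. By contrast, $\widetilde\sigma(z,z)=0\,\widetilde\sigma(\phi^0(z),z)$ begins with $0$.
\item Right-continuity of $\zeta\mapsto\widetilde\sigma(z,\zeta)$ only relates $\widetilde\sigma(z,z)$ to off-diagonal itineraries, which the theorem does not cover.
\end{itemize}
Your assertion that the passive deviation follows the conjugate $01p\to 1p0$ amounts to $\widetilde\sigma(z,z)=(01p)^\infty$. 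That is precisely the missing passive-at-threshold analogue of the theorem. Palindromicity and the extremality of $01p,10p$ among the conjugates can at best show that the periodic cycle through $x^{1p0}=\phi^0(x^{01p})$ is a valid strict-$z$ orbit. Monotonicity in the initial state then gives only the one-sided bound $\widetilde\sigma(\phi^0(z),z)\succeq(1p0)^\infty$, with no matching upper bound.

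The same hole affects two other rows:
\begin{itemize}
\item At $z=x^s$, shifting $10s$ yields only the second component $0s$, not $1s$.
\item At $z=x^{10p}$, after the tie the continuation is $\widetilde\sigma(z,z)=0\,\widetilde\sigma(\phi^0(z),z)$. So $0p(01p)^\infty$ again presupposes the passive itinerary.
\end{itemize}

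A clean repair within your framework is to apply Theorem~\ref{thm:our_Thm12_mwords} to the reflected, relabelled pair $\check\phi^1(u)\triangleq-\phi^0(-u)$, $\check\phi^0(u)\triangleq-\phi^1(-u)$.
\begin{itemize}
\item This pair satisfies Assumption~\ref{ass:DSA2_generic}: both maps are increasing, the fixed points are $-x^0<-x^1$, and both are contractive after conjugation by $u\mapsto-\vartheta(-u)$.
\item The strict $z$-itinerary of $x$ is the letter-complement of the $(-z)^-$-itinerary of $-x$ for the new pair.
\item The fixed points transform as $\check x^{w}=-x^{\bar w}$, where $\bar w$ is the complement of $w$.
\end{itemize}
Complementing a lower Christoffel word $0q1$ gives $1\bar q0$, and $0\bar q1$ is again a lower Christoffel word (of complementary slope). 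The theorem for the new pair therefore yields $\widetilde\sigma(z,z)=(01p)^\infty$ exactly for $z\in[x^{01p},x^{10p}]$, and likewise $01s$ at $x^s$. Deleting the first letter gives the first components $(1p0)^\infty$ and $1s$, and it completes the $z=x^{10p}$ row as $0p\,0(1p0)^\infty=0p(01p)^\infty$.

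Two smaller slips in your endpoint discussion:
\begin{itemize}
\item At $z=x^{10p}$ the orbit from $\phi^1(z)$ returns to $z$ after the block $0p$, not $0p1$. There $z$ is \emph{passive} under the strict rule and active under the $z^-$ rule, not the reverse.
\item The left endpoint $x^{01p}$ creates no tie at all, so it needs no separate treatment. A return to $z$ at time $j\ge1$ would make $(10p)^\infty$ equal to its shift by $j$. By primitivity $j$ is then a multiple of $|10p|$, which forces $z=x^{10p}$.
\end{itemize}
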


\begin{proof}
Apply Lemmas~\ref{lem:logit_conjugacy_A2} and \ref{lem:itinerary_invariance_conjugacy}
as in the proof of Theorem~\ref{thm:our_Thm12_mwords}, and then invoke
\cite[Corollary~13]{danceSi19} for the conjugated maps.
\end{proof}

\begin{proposition}[Christoffel-interval partition of the threshold axis]
\label{pro:christoffel_interval_partition}
Let \(\mathcal C\) denote the set of Christoffel words \(0p1\) with rational rate in \((0,1)\).
For each \(0p1\in\mathcal C\), define the Christoffel interval
$
I_{0p1}\triangleq [\,x^{01p},\,x^{10p}\,).
$
Define the Sturmian set
$
\mathcal S \triangleq \{x^s:\ 0s \text{ is a Sturmian \(\mathcal M\)-word}\}.
$
Then:
\begin{enumerate}[label=\textup{(\roman*)},leftmargin=2.2em]
\item the family \(\{I_{0p1}:0p1\in\mathcal C\}\) is pairwise disjoint and
\[
(x^1,x^0)
=
\Big(\bigsqcup_{0p1\in\mathcal C} I_{0p1}\Big)\sqcup \mathcal S;
\]

\item on each Christoffel interval \(I_{0p1}\), the left-threshold itinerary at the threshold
is constant:
\[
\widetilde{\sigma}(z,z^-)= (10p)^\infty,\qquad z\in I_{0p1};
\]

\item for each Sturmian point \(x^s\in\mathcal S\), one has
$
\widetilde{\sigma}(x^s,(x^s)^-)=10s;
$

\item the extreme regimes are
\[
\widetilde{\sigma}(z,z^-)=1^\infty\ \Longleftrightarrow\ z\le x^1,
\qquad
\widetilde{\sigma}(z,z^-)=10^\infty\ \Longleftrightarrow\ z\ge x^0.
\]
\end{enumerate}
\end{proposition}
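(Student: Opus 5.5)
The plan is to read off Proposition~\ref{pro:christoffel_interval_partition} from Theorem~\ref{thm:our_Thm12_mwords}. The key input is that the map \(z\mapsto\widetilde{\sigma}(z,z^-)\) is lexicographically nonincreasing on \((0,1)\), and that its possible values are known. Parts \textup{(ii)}--\textup{(iv)} then come straight from the theorem; only the partition statement \textup{(i)} needs an argument.

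For \textup{(ii)}, fix \(0p1\in\mathcal C\). Every \(z\in I_{0p1}=[x^{01p},x^{10p})\) lies in the closed interval \([x^{01p},x^{10p}]\), so the theorem gives \(\widetilde{\sigma}(z,z^-)=(10p)^\infty\). Part \textup{(iii)} is the Sturmian case of the theorem, and part \textup{(iv)} restates its first and last cases.

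For \textup{(i)} I would argue by exhaustion and injectivity.
\begin{itemize}
\item \emph{Exhaustion.} The theorem lists four mutually exclusive cases with ``if and only if'' characterizations, and its statement implicitly asserts that every \(z\in(0,1)\) falls into one of them. Hence every \(z\in(x^1,x^0)\) satisfies either \(z\in[x^{01p},x^{10p}]\) for some Christoffel word or \(z=x^s\) for some Sturmian word. I will make this explicit by citing \cite[Theorem~12]{danceSi19} through the logit conjugacy (Lemmas~\ref{lem:logit_conjugacy_A2} and~\ref{lem:itinerary_invariance_conjugacy}), where the list of cases is stated to be exhaustive.
\item \emph{Injectivity.} Distinct Christoffel words give distinct periodic words \((10p)^\infty\), since the primitive period recovers \(p\). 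Also \((10p)^\infty\ne 10s\), because \(s\) is aperiodic. Finally, distinct Sturmian words \(s\) give distinct itineraries \(10s\). Since \(\widetilde{\sigma}(z,z^-)\) is a function of \(z\), the closed intervals \([x^{01p},x^{10p}]\) are pairwise disjoint, disjoint from \(\mathcal S\), and the \(x^s\) are pairwise distinct.
\end{itemize}

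It remains to check that the half-open intervals \(I_{0p1}\) together with \(\mathcal S\) still cover \((x^1,x^0)\). The issue is the right endpoints \(x^{10p}\). By the disjointness just shown, \(x^{10p}\) lies in no other closed Christoffel interval and is not a Sturmian point. So, as written, \((x^1,x^0)\) equals the disjoint union of the \emph{closed} intervals and \(\mathcal S\), and using half-open intervals \(I_{0p1}\) would seem to drop the points \(x^{10p}\).

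I expect this endpoint bookkeeping to be the main obstacle. The convention \([x^{01p},x^{10p})\) is chosen to match Corollary~\ref{cor:our_Cor13_pairs}, where the pair of one-step-deviation itineraries changes exactly at \(x^{10p}\). My plan is to verify, against the statement of \cite[Theorem~12]{danceSi19}, that \(x^{10p}\) is in fact excluded there, or else assigned by the convention of that paper. If the source uses closed intervals, I would state \textup{(i)} with closed intervals for the left-threshold itinerary and keep the half-open version for the strict-threshold itinerary pairs of Corollary~\ref{cor:our_Cor13_pairs}.

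Aside from that endpoint issue, the proof is a direct translation of Theorem~\ref{thm:our_Thm12_mwords}, using the fact that a function's level sets are disjoint.
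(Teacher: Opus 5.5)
Your treatment of (ii)--(iv) matches the paper's: those parts are read directly off Theorem~\ref{thm:our_Thm12_mwords}. On (i), your suspicion is correct, and you should state it as a conclusion rather than a hedge. With half-open intervals, (i) is false, and this is exactly where the paper's proof goes wrong. The paper justifies \(I_{0p1}=[x^{01p},x^{10p})\) by saying it ``removes the endpoint overlaps between adjacent Christoffel intervals.'' But your level-set argument shows that the closed intervals \([x^{01p},x^{10p}]\) are already pairwise disjoint. The if-and-only-if in Theorem~\ref{thm:our_Thm12_mwords} makes each closed interval the full preimage of \((10p)^\infty\), and \(10p\), as a conjugate of the primitive word \(0p1\), is the primitive period of \((10p)^\infty\). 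Nor are any two Christoffel intervals adjacent, since between any two rational rates lie other rational and irrational rates. So the half-open convention removes no overlap. Its only effect is to delete each right endpoint \(x^{10p}\) from the union, and these points are not Sturmian either.

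You do not need to go back to \citet{danceSi19} to settle this; a direct check suffices. Take the Christoffel word \(01\) (\(p\) empty).
\begin{itemize}
\item The map \(\phi^{10}=\phi^0\circ\phi^1\) is increasing with a unique fixed point. Since \(\phi^{10}(x^1)=\phi^0(x^1)>x^1\) and \(\phi^{10}(x^0)<\phi^0(x^0)=x^0\), we get \(x^1<x^{10}<x^0\).
\item The point \(\phi^1(x^{10})\) is fixed by \(\phi^1\circ\phi^0\), so it equals \(x^{01}\). By Lemma~\ref{lem:phi1_iterates}(b), \(x^{01}<x^{10}\).
\item Hence the left-threshold orbit from \(z=x^{10}\) is the 2-cycle \(x^{10}\mapsto x^{01}\mapsto x^{10}\mapsto\cdots\). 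It is active at \(x^{10}\ge z\) and passive at \(x^{01}<z\), so \(\widetilde\sigma(x^{10},(x^{10})^-)=(10)^\infty\).
\item Thus \(x^{10}\in(x^1,x^0)\), yet \(x^{10}\notin I_{01}\). It lies in no other \(I_{0p1}\), since those carry itineraries \((10p)^\infty\neq(10)^\infty\). It is not in \(\mathcal S\), since \(10s\) is aperiodic.
\end{itemize}
In the instance of Appendix~\ref{app:illustrative_instance}, \(x^{10}\approx 0.4508\) and \(x^{01}\approx 0.3346\).

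The correct version of (i) is \((x^1,x^0)=\bigl(\bigsqcup_{0p1\in\mathcal C}[x^{01p},x^{10p}]\bigr)\sqcup\mathcal S\). Equivalently, use the half-open intervals together with the singletons \(\{x^{10p}\}\) as separate pieces; this is the case structure of Corollary~\ref{cor:our_Cor13_pairs}, which lists \(z=x^{10p}\) on its own. Your exhaustion-plus-injectivity argument proves this corrected statement completely, with one addition you should make explicit: \((10p)^\infty\) and \(10s\) differ from \(1^\infty\) and \(10^\infty\), which places the closed intervals and \(\mathcal S\) inside \((x^1,x^0)\). Statement (ii) then holds on the closed intervals as well.
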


\begin{proof}
Theorem~\ref{thm:our_Thm12_mwords} gives the threshold-itinerary classification on the closed
intervals \([x^{01p},x^{10p}]\). Passing to the half-open convention
\(I_{0p1}=[x^{01p},x^{10p})\) removes the endpoint overlaps between adjacent Christoffel
intervals and yields a disjoint partition of \((x^1,x^0)\), with the remaining threshold values
given by the Sturmian points \(x^s\). The itinerary statements in \textup{(ii)}--\textup{(iv)}
are then exactly those of Theorem~\ref{thm:our_Thm12_mwords}.
\end{proof}

Proposition~\ref{pro:christoffel_interval_partition} shows that the threshold axis in the
intermediate regime is partitioned into Christoffel intervals and Sturmian points. On each
Christoffel interval, the active-at-threshold itinerary is fixed and periodic, while
Corollary~\ref{cor:our_Cor13_pairs} fixes the corresponding one-step-deviation itineraries for the passive-at-threshold case.
These are precisely the symbolic inputs needed for a future intervalwise analysis of the
pre-ACK quantities \(\widetilde F,\widetilde G,\widetilde\Theta\), the marginal metrics
\(f,g\), and hence the MP index \(m\) on the intermediate regime.

\section{Outline of the time-average criterion}
\label{app:avgcrit_outline}

The discounted analysis developed above naturally suggests a corresponding long-run average formulation. Since the present paper is primarily concerned with the discounted criterion, we only sketch here how the no-ACK skeleton and renewal machinery extend to the time-average setting. A full treatment of the average-criterion PCL-indexability framework for real-state projects is left for future work.

Fix a threshold \(z\in\R\). Let \(F_\beta(x,z)\), \(G_\beta(x,z)\), \(f_\beta(x,z)\), and \(g_\beta(x,z)\) denote the discounted reward, work, and marginal metrics introduced above, with the discount factor made explicit. Motivated by Abelian-limit considerations, define the corresponding time-average quantities by
\begin{equation}
\label{eq:avg_metric_defs}
F(x,z)\triangleq \lim_{\beta\nearrow 1}(1-\beta)F_\beta(x,z),
\qquad
G(x,z)\triangleq \lim_{\beta\nearrow 1}(1-\beta)G_\beta(x,z),
\end{equation}
and
\begin{equation}
\label{eq:avg_marginal_defs}
f(x,z)\triangleq \lim_{\beta\nearrow 1} f_\beta(x,z),
\qquad
g(x,z)\triangleq \lim_{\beta\nearrow 1} g_\beta(x,z).
\end{equation}
In this outline we take for granted that these limits exist and that $F(x,z)$ and $G(x,z)$ are independent of the initial belief \(x\). This is consistent with the regenerative structure of the model and with the numerical evidence discussed below. When this independence holds, we simply write \(F(z)\) and \(G(z)\). The associated average-criterion MP index is then
\begin{equation}
\label{eq:avg_mp_index_defs}
m(x,z)\triangleq \frac{f(x,z)}{g(x,z)},
\qquad
m(x)\triangleq m(x,x),
\end{equation}
whenever \(g(x,z)>0\).

In the tractable regimes, the discounted formulas of Proposition~\ref{pro:FG_closed_form_with_proof} suggest the following \(\beta\nearrow1\) limits. In the eventually-always-active regimes \(z\le x^1\), namely cases \textup{(a)}--\textup{(b)} of Proposition~\ref{pro:FG_closed_form_with_proof}, one is led to
\begin{equation}
\label{eq:avg_FG_active_regimes}
F(z)=r\kappa x^0,
\qquad
G(z)=1,
\qquad z\le x^1,
\end{equation}
where \(x^0=p_{01}/(1-\rho)\) is the passive fixed point. Thus, once the threshold is at or below \(x^1\), the policy is eventually always active, the average work rate is \(1\), and the average reward rate is \(r\kappa x^0\).

At the other extreme, in the eventually-passive regimes \(z\ge x^0\), the threshold policy activates only finitely many times (or not at all), so the discounted reward and work totals remain \(O(1)\) as \(\beta\nearrow1\). This suggests
\begin{equation}
\label{eq:avg_FG_passive_regimes}
F(z)=0,
\qquad
G(z)=0,
\qquad z\ge x^0.
\end{equation}
In particular, \eqref{eq:avg_FG_passive_regimes} covers both \(x^0\le z<p_{11}\) and \(z\ge p_{11}\).

Therefore, the only genuinely nontrivial average-criterion regime is the intermediate strip
$
x^1<z<x^0,
$
which is precisely the regime where the no-ACK skeleton alternates indefinitely between active and passive phases.

The limiting marginal metrics \(f(x,z)\) and \(g(x,z)\) may be obtained by passing to the limit in the discounted one-step formulas of Proposition~\ref{pro:fg_closed_form}. In the  extreme regimes \(z<p_{01}\) and \(z\ge p_{11}\), those formulas remain unchanged:
\begin{equation}
\label{eq:avg_fg_extreme_regimes}
f(x,z)=r\kappa x,
\qquad
g(x,z)=1,
\qquad
m(x,z)=r\kappa x.
\end{equation}
In the remaining tractable subregimes \(p_{01}\le z\le x^1\) and \(x^0\le z<p_{11}\), explicit closed forms for \(f(x,z)\) and \(g(x,z)\) are suggested by taking \(\beta\nearrow1\) in the corresponding discounted formulas; they involve the same threshold crossing times \(\tau_\uparrow^0\) and \(\tau_\downarrow^1\), and, in the \(x^0\le z<p_{11}\) regime, finite sums over the active no-ACK skeleton segment. Since the present section is only meant as an outline, we do not record those expressions here.

For \(x^1<z<x^0\), the average-criterion metrics can be approached in two natural ways. The first is the simplest computationally: evaluate the discounted metrics \(F_\beta(x,z)\), \(G_\beta(x,z)\), \(f_\beta(x,z)\), and \(g_\beta(x,z)\) for \(\beta\) sufficiently close to \(1\), and then use \eqref{eq:avg_metric_defs}--\eqref{eq:avg_marginal_defs} as numerical approximations. The second is to work directly with the regenerative structure at ACK times. Writing \(\tau^{\mathrm{ack}}\) for the first ACK time under the \(z\)-threshold policy, define the undiscounted cycle quantities
\begin{align}
\label{eq:avg_cycle_metrics}
\overline R(x,z)
&\triangleq
\Ex_x^z\!\bigg[\sum_{t=0}^{\tau^{\mathrm{ack}}} R(X(t),A(t))\bigg], \quad
\overline W(x,z)
\triangleq
\Ex_x^z\!\bigg[\sum_{t=0}^{\tau^{\mathrm{ack}}} A(t)\bigg], \quad
\overline T(x,z)
\triangleq
\Ex_x^z\big[\tau^{\mathrm{ack}}+1\big].
\end{align}
Formally, when the threshold policy regenerates at the post-ACK state \(p_{11}\), one expects the average reward and work rates to satisfy
\begin{equation}
\label{eq:avg_regen_formulas}
F(z)=\frac{\overline R(p_{11},z)}{\overline T(p_{11},z)},
\qquad
G(z)=\frac{\overline W(p_{11},z)}{\overline T(p_{11},z)}.
\end{equation}
Likewise, the average marginal metrics \(f(x,z)\) and \(g(x,z)\) may be obtained either as limits of the discounted marginals or by combining one-step deviation formulas with the same regenerative continuation values. Thus, the discounted skeleton and renewal machinery provide a natural computational route to the time-average criterion as well.

The tractable-regime formulas above strongly suggest that the average-criterion metrics inherit the same structural properties as their discounted counterparts. In particular, in the tractable regimes one expects the average-criterion analogues of \textup{(PCLI1)} and \textup{(PCLI2)} to hold, namely positivity of \(g(x,z)\) and monotonicity of the diagonal MP index \(m(x)\). For the extreme regimes \(z<p_{01}\) and \(z\ge p_{11}\), this is immediate from \eqref{eq:avg_fg_extreme_regimes}. The corresponding statements in the remaining tractable regimes are suggested by the limiting closed forms obtained from Proposition~\ref{pro:fg_closed_form}.

This leads naturally to the following conjecture.

\begin{conjecture}
\label{con:pcliavcr}
The average-criterion limiting metrics \(f(x,z)\), \(g(x,z)\), and \(m(x)=f(x,x)/g(x,x)\) satisfy the average-criterion analogues of \textup{(PCLI1)} and \textup{(PCLI2)} on the full threshold range, including the regime \(x^1<z<x^0\).
\end{conjecture}

If true, this would yield an average-criterion MP index for the present real-state project model and provide the time-average counterpart of the discounted PCL-indexability analysis developed in this paper.

The main point of this section is that the no-ACK skeleton and renewal decomposition are not specific to the discounted criterion. They also suggest a natural average-criterion theory, with explicit tractable-regime formulas, direct computational schemes in the intermediate regime, and a plausible extension of the PCL-indexability conditions. This average-criterion development is beyond the scope of the present paper, but the discounted results obtained here provide its natural starting point.

\section{Supplementary experimental plots and implementation details}
\label{app:extra_experiments}

\subsection{Parameter dependence of the MP index}
\label{app:param_dependence}

\subsubsection{Dependence of the MP index on \(\beta\)}
\label{s:mp_family_beta}

We begin with the dependence of the MP index on the discount factor \(\beta\), using the same
base parameter instance as in Appendix~\ref{app:illustrative_instance}, namely
$
p_{01}=0.25, \rho=0.6,  \kappa=0.8,  r=1.
$
Figure~\ref{fig:mp_family_beta} plots the MP index \(m(x)\) for
$
\beta\in\{0.1,\,0.3,\,0.5,\,0.70,\,0.90,\,0.95,\,0.999\}.
$

Several features are apparent. First, over the displayed range, the MP index appears to be increasing in \(\beta\), and the curves appear to converge to a limiting profile as \(\beta\nearrow 1\), consistent with the time-average discussion of Appendix~\ref{app:avgcrit_outline}. Second, by Proposition~\ref{pro:partial_myopic_index}, the MP index coincides with the myopic index
$
m(x)=r\kappa x
$
on the low- and high-belief regions
$
x\in[0,x^1]\cup[p_{11},1].
$
Accordingly, the visible dependence on \(\beta\) is concentrated in the intermediate belief
region \((x^1,p_{11})\), where future information and restart effects matter most.
The sensitivity to \(\beta\) is strongest in the nontrivial interval
\(x^1<x<x^0\) and persists on \([x^0,p_{11}]\)).

\begin{figure}[!htbp]
  \centering
  \includegraphics[width=0.30\textwidth]{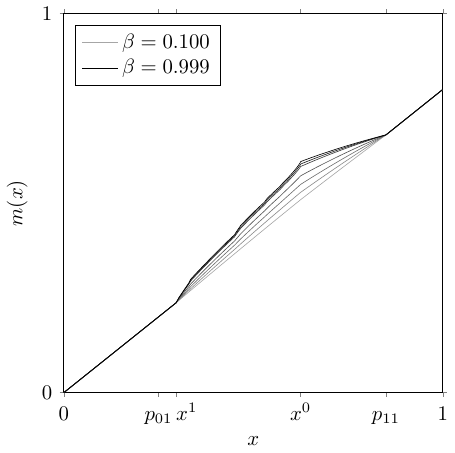}%
  \caption{MP index \(m(x)\) versus \(x\) for several values of \(\beta\).}
  \label{fig:mp_family_beta}
\end{figure}

\subsubsection{Dependence of the MP index on \(\kappa\)}
\label{s:mp_family_kappa}

We next examine the dependence of the MP index on the sensing-success parameter \(\kappa\), using
the same base parameter instance as in Appendix~\ref{app:illustrative_instance}, with
$
p_{01}=0.25,\rho=0.6, \beta=0.95, r=1
$
held fixed.
Figure~\ref{fig:mp_family_kappa} plots the MP index \(m(x)\) for
$
\kappa\in\{0.1,\,0.2,\,0.3,\,0.4,\,0.5,\,0.6,\,0.70,\,0.8,\,0.90,\,0.95,\,0.999\}.
$

Unlike the \(\beta\)-family case, the active no-ACK fixed point \(x^1\) varies with \(\kappa\),
whereas the passive fixed point \(x^0\) and the reset state \(p_{11}\) remain unchanged.
For this reason, the boundary between the low-belief myopic region and the intermediate region
moves with \(\kappa\), while the points \(x^0\) and \(p_{11}\) remain fixed on the horizontal axis.
Accordingly, the default \(x\)-axis ticks in Figure~\ref{fig:mp_family_kappa} show only these
\(\kappa\)-invariant reference points.

The figure shows that the MP index appears to be increasing in \(\kappa\) over the displayed range, and
to converge to a limiting profile as \(\kappa\nearrow 1\). As in the previous subsection,
the most significant changes occur in the nontrivial belief region. In particular, since
Proposition~\ref{pro:partial_myopic_index} gives
$m(x)=r\kappa x$ for $x\in[0,x^1]\cup[p_{11},1],$
the dependence on \(\kappa\) is especially informative on the interval \((x^1,p_{11})\), where
both \(x^1\) and the magnitude of the index vary with \(\kappa\).

\begin{figure}[!htbp]
  \centering
  \includegraphics[width=0.30\textwidth]{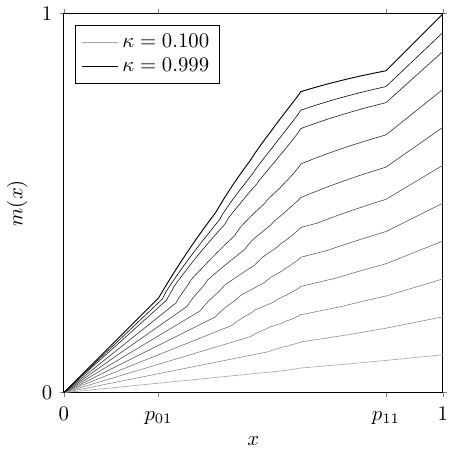}%
  \caption{MP index \(m(x)\) versus \(x\) for several values of \(\kappa\).}
  \label{fig:mp_family_kappa}
\end{figure}

\subsubsection{Dependence of the MP index on \(\rho\)}
\label{s:mp_family_rho}

We next examine the dependence of the MP index on the persistence parameter \(\rho\), using
the same base parameter instance as in Appendix~\ref{app:illustrative_instance}, with
$
p_{01}=0.25, \kappa=0.8, \beta=0.95, r=1
$
held fixed.
Figure~\ref{fig:mp_family_rho} plots the MP index \(m(x)\) for
$
\rho\in\{0.001,\,0.1,\,0.2,\,0.3,\,0.4,\,0.5,\,0.6,\,0.70,\,0.749\}.
$
Since \(p_{01}=0.25\) is held fixed, feasibility requires
$
0<\rho<1-p_{01}=0.75,
$
so the largest plotted value \(\rho=0.749\) lies just below the upper admissible boundary.

Unlike the \(\beta\)- and \(\kappa\)-families, the key internal reference points
$
x^1, x^0, p_{11}=p_{01}+\rho
$
all depend on \(\rho\), so they no longer occur at common locations on the horizontal axis.
For this reason, the default \(x\)-axis ticks in Figure~\ref{fig:mp_family_rho} use only
\(\rho\)-invariant reference points.

The dependence of \(m(x)\) on \(\rho\) is qualitatively different from its dependence on
\(\beta\) and \(\kappa\). The family of curves is not monotone in \(\rho\):
for some belief values the index increases with \(\rho\), whereas for others it decreases,
and the overall shape of the non-myopic region changes substantially as \(\rho\) varies.
Figure~\ref{fig:mp_index_vs_rho}, which plots \(m(x)\) as a function of \(\rho\) for a fixed
representative belief \(x\), makes this non-monotone dependence more explicit.

As \(\rho\) approaches the upper feasibility boundary \(1-p_{01}=0.75\), the MP index appears
to approach a limiting profile. Also, the movement of the transition points
\(x^1\), \(x^0\), and \(p_{11}\) shows that changes in persistence affect not only the magnitude
of the index but also the location and width of the regions where the MP index differs from
myopic.

\begin{figure}[!htbp]
  \centering
  \includegraphics[width=0.30\textwidth]{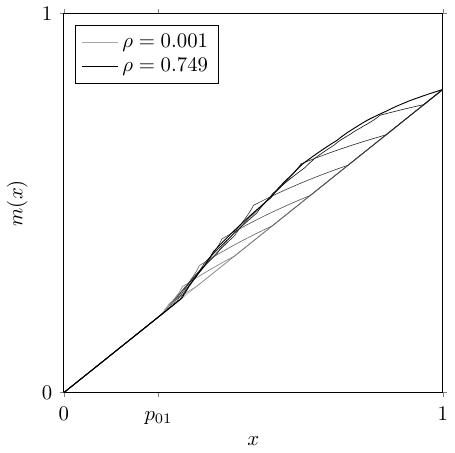}%
  \caption{MP index \(m(x)\) versus \(x\) for several values of \(\rho\).}
  \label{fig:mp_family_rho}
\end{figure}

\begin{figure}[!htbp]
  \centering
  \includegraphics[width=0.30\textwidth]{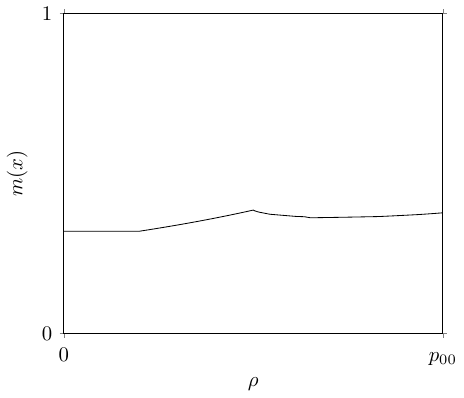}%
  \caption{MP index \(m(x)\) versus \(\rho\) for a fixed belief \(x\).}
  \label{fig:mp_index_vs_rho}
\end{figure}

\subsubsection{Dependence of the MP index on \(p_{01}\)}
\label{s:mp_family_p01}

We next examine the dependence of the MP index on the transition parameter \(p_{01}\), using
the same base parameter instance as in Appendix~\ref{app:illustrative_instance}, except that
\(p_{01}\) is now varied while
$
\rho=0.6, \kappa=0.8, \beta=0.95, r=1
$
are held fixed.
Figure~\ref{fig:mp_family_q} plots the MP index \(m(x)\) for
$
p_{01}\in\{0.001,\,0.10,\,0.15,\,0.20,\,0.30,\,0.35,\,0.399\}.
$

Since \(\rho=0.6\) is fixed, feasibility requires
$
0<p_{01}<1-\rho=0.4,
$
so the largest plotted value \(p_{01}=0.399\) lies just below the upper admissible boundary.

As in the \(\rho\)-family case, the key internal reference points
$
x^1, x^0={p_{01}}/({1-\rho}), p_{11}=p_{01}+\rho
$
all depend on the varying parameter. Hence they no longer occur at common locations on the
horizontal axis, and the default \(x\)-axis ticks in Figure~\ref{fig:mp_family_q} display only
\(p_{01}\)-invariant reference points.

The dependence of \(m(x)\) on \(p_{01}\) is again non-monotone: for some belief values the
index increases with \(p_{01}\), whereas for others it decreases, and the shape and location of
the non-myopic region vary substantially across the family. Figure~\ref{fig:mp_index_vs_q},
which plots \(m(x)\) as a function of \(p_{01}\) for a fixed representative belief \(x\),
makes this non-monotone dependence more explicit.

As \(p_{01}\) approaches the upper feasibility boundary \(1-\rho=0.4\), the MP index appears
to approach a limiting profile. At the same time, the movement of \(x^1\), \(x^0\), and \(p_{11}\)
shows that varying \(p_{01}\) changes not only the magnitude of the index but also the location
and extent of the regions in which the MP index differs from the myopic index.

\begin{figure}[!htbp]
  \centering
  \includegraphics[width=0.30\textwidth]{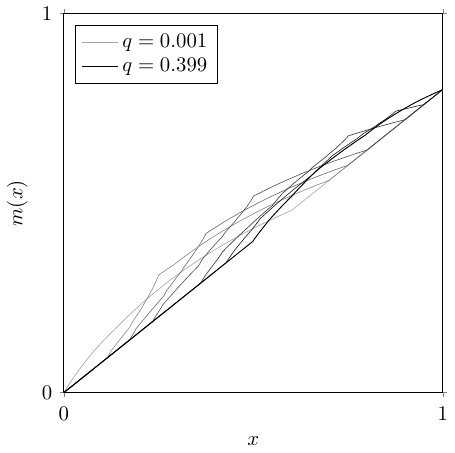}%
  \caption{MP index \(m(x)\) versus \(x\) for several values of $p_{01}$.}
  \label{fig:mp_family_q}
\end{figure}

\begin{figure}[!htbp]
  \centering
  \includegraphics[width=0.35\textwidth]{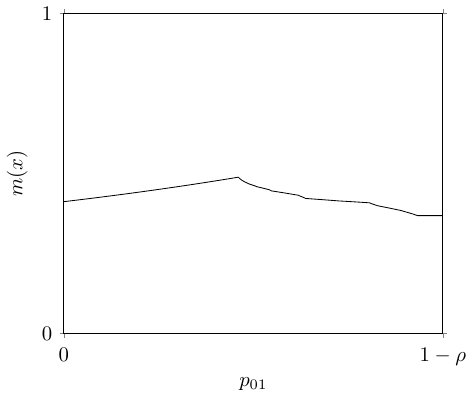}%
  \caption{MP index \(m(x)\) versus \(p_{01}\) for a fixed belief \(x\).}
  \label{fig:mp_index_vs_q}
\end{figure}

\subsection{Implementation details and reproducibility}
\label{app:repro}

All computations were implemented in Julia. The numerical evaluation of the discounted metrics and the MP index relied on the no-ACK skeleton and renewal
decompositions developed in Sections~\ref{s:compFG}--\ref{s:ia-partial-index}. Infinite pre-ACK
series were evaluated by truncation using the explicit tail bounds derived in
\eqref{eq:tilde_tail_bounds}, so that the truncation depth could be chosen to meet a prescribed
error tolerance. Unless otherwise stated, the numerical routines used a tolerance
\(\varepsilon=10^{-10}\).

For dual-bound computation in the policy-benchmarking experiments, we used the grouped convex
solver described in Section~\ref{s:benchmark}, which exploits repeated project types and computes
the minimizing Lagrange multiplier by bisection on a subgradient. For policy simulation, the
MP index policy used precomputed type-specific lookup tables with linear interpolation, while
Monte Carlo estimates were based on fixed random seeds to ensure reproducibility.

The large-scale numerical tests of \textup{(PCLI1)} and \textup{(PCLI2)}, as well as the
policy-benchmarking experiments, were run in multithreaded mode. The code was checked internally
by comparing special cases against the tractable closed forms of
Sections~\ref{s:tractable_regimes}--\ref{s:ia-partial-index}, by verifying identities such as
\(\widetilde F=r\widetilde\Theta\), and by confirming agreement between the original and
accelerated dual-bound solvers on representative test instances.

The code used to generate the numerical results and figures is available from the author upon request.

\end{document}